\documentclass[11pt]{article}

\usepackage[utf8]{inputenc}
\usepackage[T1]{fontenc}
\usepackage{amsmath,amssymb,amsthm}
\usepackage{mathtools}
\usepackage{booktabs}
\usepackage{longtable}
\usepackage{enumitem}
\usepackage{hyperref}
\usepackage[margin=1in]{geometry}
\usepackage{cleveref}
\usepackage{lmodern}

\theoremstyle{plain}
\newtheorem{theorem}{Theorem}
\newtheorem{proposition}[theorem]{Proposition}

\newtheorem{corollary}[theorem]{Corollary}

\theoremstyle{definition}
\newtheorem{definition}[theorem]{Definition}
\newtheorem{example}[theorem]{Example}
\newtheorem{assumption}[theorem]{Assumption}

\theoremstyle{remark}
\newtheorem{remark}[theorem]{Remark}

\newcommand{\E}{\mathbb{E}}
\newcommand{\Prob}{\mathbb{P}}
\newcommand{\Var}{\mathrm{Var}}
\newcommand{\Cov}{\mathrm{Cov}}
\newcommand{\R}{\mathbb{R}}
\newcommand{\ind}{\mathbf{1}}
\newcommand{\cF}{\mathcal{F}}
\newcommand{\cX}{\mathcal{X}}
\newcommand{\cI}{\mathcal{I}}

\newcommand{\cH}{\mathcal{H}}

\newcommand{\cR}{\mathcal{R}}
\newcommand{\cM}{\mathcal{M}}
\DeclareMathOperator*{\argmin}{arg\,min}
\DeclareMathOperator{\AUC}{AUC}

\title{Fraud Type Decomposition and the Observation-Mechanism
  Taxonomy:\\[4pt]
  Class-Specific Detection Limits in Payment Networks}

\author{Gaurav Dhama}

\date{\today}

\begin{document}
\maketitle

\begin{abstract}
Fraud detection models in payment networks train on
chargeback labels or fraud reports by issuers --- but ``fraud'' is not a single
phenomenon.  A stolen credit card, a cardholder who
disputes a legitimate purchase, a synthetic identity
building credit before a coordinated bust-out, a merchant
laundering transactions through a phantom storefront, and
a victim wiring money to a romance scammer all produce
different labels through fundamentally different
observation pipelines.  Treating these as a homogeneous
population --- as all existing fraud detection frameworks
do --- is a provable source of statistical inefficiency.

We show that the payment industry's 30+ fraud types
collapse into exactly five \emph{observation-mechanism
classes}, each defined by a structurally distinct
censorship pipeline.  This classification is minimal (no
two classes can be merged) and complete (every fraud type
maps to exactly one class).  We prove that estimating
fraud rates separately by class and aggregating strictly
dominates pooled estimation, with the efficiency gap
quantified as a closed-form Jensen penalty.

For each class, we derive the binding theoretical
constraint.  For adversarial labeling (Class~2), the
false-positive corruption rate is the Nash equilibrium of a
dispute game between the cardholder and the issuer.  For
deferred observability (Class~3), the observation
propensity is exactly zero during the bust-out build-up
phase, and no account-level classifier exceeds random
guessing --- detection requires cross-account graph
signals.  For acquirer-observed fraud (Class~4), the
detection floor is an order of magnitude higher than for
standard card fraud.  For victim-authorized fraud
(Class~5), the Bayes-optimal transaction-level classifier
achieves $\AUC = 0.5$ when fraud and legitimate feature
distributions coincide, and detection requires auxiliary
signals outside the payment data.
\end{abstract}

\section{Introduction}
\label{sec:intro}

\subsection{The label problem in fraud detection}
\label{sec:label_problem}

Every supervised fraud detection model in payment networks
trains on labels derived from the chargeback process.  A
cardholder disputes a transaction, the issuing bank files a
chargeback, and the resulting record serves as a ``fraud''
label in the training data.  Transactions without
chargebacks are implicitly labeled ``legitimate.''

This labeling process is impaired in well-documented ways.
Declined transactions generate no labels (authorization
censorship).  Not all fraud is reported by cardholders or
issuers (reporting censorship).  Chargebacks arrive weeks
or months after the transaction (delay censorship).  And
some labels are wrong --- the cardholder may have
fabricated the dispute, or the issuer may have miscoded the
fraud type (label corruption).

Recent work has formalized these impairments.  Dhama~\cite{dhama2026limits} derives minimax lower bounds
on detection performance as a function of authorization,
reporting, delay, and corruption rates, showing that the
four impairments impose a fundamental floor on accuracy.
Dhama~\cite{dhama2026str} constructs the Sequential Triply
Robust (STR) estimator, which corrects for all four
impairments simultaneously and achieves the semiparametric
efficiency bound --- no estimator can have lower asymptotic
variance.

\subsection{The homogeneity assumption}
\label{sec:homogeneity}

These results --- and indeed all existing theoretical
treatments of the fraud label problem --- share a common
assumption: \emph{fraud is treated as a single binary
variable}.  A transaction is either fraud ($Y^*=1$) or
legitimate ($Y^*=0$), and the observation pipeline acts on
all fraud uniformly through a single set of propensities
$(e_0, r_0, p_0)$ and corruption rates
$(\varepsilon_{10}, \varepsilon_{01})$.

This assumption is false.  ``Fraud'' in payment networks
encompasses fundamentally different phenomena --- each with
its own perpetrator, its own label-generating process, and
its own observation pipeline.  This paper shows that the
homogeneity assumption is not merely a simplification ---
it is a \emph{provable source of inefficiency} whose cost
can be quantified exactly.

\subsection{Three transactions, three pipelines}
\label{sec:three_txns}

\begin{example}[Why homogeneity fails]
\label{ex:three_txns}
Consider three transactions flagged as potential fraud.

\textbf{Transaction~A} is a card-not-present purchase using
stolen credentials.  The victim will eventually notice the
unauthorized charge, dispute it with their issuer, and
generate a chargeback label.  The observation pipeline is
the standard three-gate model: authorization (the network's
fraud model may decline it), reporting (the victim must
notice and dispute), and delay (the chargeback takes weeks
to process).

\textbf{Transaction~B} is a legitimate purchase that the
cardholder later disputes to obtain a fraudulent refund.
Here the ``victim'' and the ``fraudster'' are the same
person, and the chargeback label is not a noisy observation
of truth --- it is a strategic lie.  The label is generated
by the perpetrator, not the victim, and the corruption rate
$\varepsilon_{01}$ is not random noise but a deliberate
strategic choice.

\textbf{Transaction~C} is one of fifty small purchases made
by a synthetic identity over six months of credit building
before a coordinated bust-out.  During the build-up phase,
there is no fraud to report, no victim to file a dispute,
and no label of any kind.  The observation pipeline is not
merely impaired --- it is structurally inoperative.
\end{example}

Applying the STR with a single set of propensities averages
across these three pipelines, producing an estimator that
is optimal for none of them.  The propensity $r_0 = 0.70$
that is appropriate for Transaction~A (victim-reported card
theft) is meaningless for Transaction~B (the ``reporting
rate'' is the perpetrator's strategic dispute probability)
and impossible for Transaction~C (there is nothing to
report during build-up).

\subsection{The observation-mechanism taxonomy}
\label{sec:intro_taxonomy}

The payment industry classifies fraud into dozens of types
based on perpetrator identity (third-party, first-party,
synthetic), attack vector (card-not-present, account
takeover, bust-out), and operational characteristics
(e-commerce, point-of-sale, ATM).  From the perspective of
statistical estimation, however, the relevant taxonomy is
not who commits the fraud or how, but \emph{how the fraud
generates --- or fails to generate --- an observable label}.

We show that the industry's extensive fraud taxonomy
collapses into exactly five \emph{observation-mechanism
classes}, each defined by a structurally distinct censorship
pipeline.  Within each class, fraud types share the same
pipeline topology and differ only in propensity magnitudes.
Across classes, the pipeline topology, the identity of the
labeling agent, and the binding constraints on
identification differ fundamentally, requiring class-specific
estimation strategies.

This classification is derived from first principles: each
fraud type is characterized by a \emph{structural
signature} --- a 5-tuple describing the
perpetrator--cardholder relationship, the label-generating
agent, the authorization gate informativeness, the reporting
gate structure, and the temporal observability.  Two fraud
types belong to the same class if and only if their
structural signatures are identical.

\subsection{The five classes}
\label{sec:intro_classes}

\textbf{Class~1: Victim-reported institutional pipeline.}
The perpetrator is external, the victim reports through the
standard chargeback process, and the three-gate observation
model applies directly.  This class encompasses
card-not-present fraud, counterfeit, lost/stolen, account
takeover, and related types.

\textbf{Class~2: Adversarial label generation.}  The
perpetrator is the cardholder, who files a strategic false
dispute.  The corruption rate $\varepsilon_{01}$ is not a
statistical nuisance but an endogenous equilibrium of a
dispute game.  This class encompasses friendly fraud,
chargeback abuse, refund abuse, and related types.

\textbf{Class~3: Deferred observability.}  The fraud unfolds
over a build-up phase during which the observation pipeline
is structurally inoperative ($r_0=0$, $p_0=0$), followed by
a catastrophic exploitation event.  Labels arrive only at
charge-off.  This class encompasses bust-out fraud, synthetic
identity fraud, and sleeper fraud.

\textbf{Class~4: Acquirer-observed pipeline.}  The
perpetrator is the merchant, and the label is generated by
acquirer monitoring rather than cardholder disputes.  The
propensities depend on the acquirer's monitoring
infrastructure, not the issuer's reporting practices.  This
class encompasses transaction laundering, factoring, phantom
merchants, and merchant collusion.

\textbf{Class~5: Victim-authorized with non-informative
features.}  The victim voluntarily authorizes the payment
under social engineering.  The authorization gate is
non-binding ($e_0=1$), and the transaction features carry no
fraud signal ($P(X\mid Y^*\!=\!1)\approx P(X\mid
Y^*\!=\!0)$).  This class encompasses authorized push
payment fraud, romance scams, investment scams, and
impersonation scams.

\subsection{Summary of contributions}
\label{sec:contributions}

We establish the following results.

\begin{enumerate}[label=\textbf{\arabic*.},leftmargin=*]

\item \textbf{Observation-mechanism taxonomy
  (Theorem~\ref{thm:taxonomy}).}  The industry's 30+~fraud
  types collapse into five observation-mechanism classes.
  The classification is minimal (no two classes can be
  merged) and complete (every fraud type maps to exactly
  one class).

\item \textbf{Decomposition dominance
  (Theorem~\ref{thm:decomp_dominance}).}  Estimating fraud
  rates separately by class and aggregating --- the
  \emph{type-decomposed} approach --- strictly dominates
  pooled estimation in mean squared error whenever at least
  two classes have different observation pipelines.

\item \textbf{Jensen penalty quantification
  (Theorem~\ref{thm:jensen_penalty}).}  The MSE gap equals
  the cross-class variance of inverse-propensity weights,
  which is substantial for realistic network parameters
  (approximately 36\% efficiency waste in our numerical
  illustration).

\item \textbf{Endogenous corruption
  (Theorem~\ref{thm:endogenous}).}  For Class~2, the
  false-positive corruption rate $\varepsilon_{01}$ is the
  Nash equilibrium of a dispute game between the cardholder
  and the issuer, increasing in transaction amount and
  decreasing in dispute friction and detection probability.
  Multiple equilibria are possible, with a discontinuous
  tipping point between low-fraud and high-fraud regimes.

\item \textbf{Positivity failure
  (Theorem~\ref{thm:positivity}).}  For Class~3, the
  observation propensity $q_0 = e_0 r_0 p_0 = 0$ exactly
  during the bust-out build-up phase.  The semiparametric
  efficiency bound diverges --- no label-based estimator
  can detect bust-out during build-up.

\item \textbf{Individual-level insufficiency
  (Theorem~\ref{thm:insufficiency}).}  For Class~3, no
  account-level classifier exceeds $\AUC = 0.5$ during
  build-up; detection requires cross-account graph signals
  whose power grows with ring size.

\item \textbf{Acquirer-side efficiency bound
  (Theorem~\ref{thm:acquirer}).}  For Class~4, the
  detection floor depends on the acquirer's monitoring
  propensity $r_{\mathrm{acq}}$ rather than the issuer's
  reporting propensity, and is an order of magnitude higher.

\item \textbf{Authorization degeneracy and feature
  non-informativeness
  (Theorems~\ref{thm:auth_degen}--\ref{thm:feature_noninf}).}
  For Class~5, the authorization correction vanishes
  when $e_0=1$, and the Bayes-optimal classifier achieves
  $\AUC = 0.5$ when fraud and legitimate feature
  distributions coincide.  Detection requires auxiliary
  signals outside the transaction data.

\end{enumerate}

\subsection{Connection to prior work}
\label{sec:series}

This paper builds on the theoretical framework
of~\cite{dhama2026limits,dhama2026str}, which formalize the
observation pipeline and construct the STR estimator for a
homogeneous fraud population.  The present paper removes the
homogeneity assumption, showing that the pooled STR is
provably suboptimal for the heterogeneous fraud mixture that
exists in every real payment network, and that each
observation-mechanism class demands its own detection
strategy.  The economic incentives that shape each class's observation
pipeline --- and the mechanism design problem of optimizing
information quality across actors --- are natural extensions
discussed in Section~\ref{sec:discussion}.

\subsection{Outline}
\label{sec:outline}

Section~\ref{sec:related} reviews related work.
Section~\ref{sec:setup} introduces the latent-type mixture
model.  Section~\ref{sec:taxonomy} presents the
observation-mechanism taxonomy and proves its completeness
and minimality.  Section~\ref{sec:dominance} establishes the
decomposition dominance theorem and quantifies the Jensen
penalty.  Sections~\ref{sec:class1}--\ref{sec:class5}
derive class-specific results.
Section~\ref{sec:discussion} discusses limitations,
practical type identification, and future work.  Section~\ref{sec:conclusion} concludes.
\section{Related work}
\label{sec:related}

\paragraph{Information limits and label recovery in payment
networks.}
Dhama~\cite{dhama2026limits} proves minimax lower bounds on
fraud detection under four information impairments and
introduces the conditional delay penalty via Jensen's
inequality.  Dhama~\cite{dhama2026str} constructs the STR
estimator, proves it achieves the semiparametric efficiency
bound, and derives the optimal training delay.  Both papers
assume a homogeneous fraud population.  The present paper
removes this assumption and shows that decomposition by
observation-mechanism class strictly improves estimation.

\paragraph{Fraud typologies.}
The payment industry maintains extensive fraud taxonomies
organized by perpetrator (third-party, first-party,
synthetic), attack vector (card-not-present, account
takeover, bust-out), and operational category (e-commerce,
point-of-sale, ATM).  Industry standards such as Mastercard's
TC40 fraud type codes and Visa's SAFE categories provide
granular classification for operational purposes.  These
taxonomies are designed for investigation and prevention, not
for statistical estimation.  Our observation-mechanism
taxonomy complements the industry taxonomy by organizing
fraud types according to their censorship pipeline structure,
which determines the appropriate estimation strategy.

\paragraph{Mixture models and heterogeneous treatment
effects.}
The causal inference literature addresses treatment effect
heterogeneity through conditional average treatment effects
(CATE; \cite{athey2016recursive}), causal forests
\cite{wager2018estimation}, and meta-learners
\cite{kunzel2019metalearners}.  These methods decompose a
heterogeneous population by covariate values.  Our
decomposition is fundamentally different: we decompose by
\emph{observation-mechanism structure}, not by covariate
values.  Two transactions with identical covariates may
belong to different classes (e.g., a legitimate purchase and
a friendly-fraud dispute of the same purchase), and the
appropriate estimator differs between them.

\paragraph{Strategic labeling and mechanism design.}
The game-theoretic analysis of Class~2 (adversarial label
generation) connects to the literature on strategic
classification \cite{hardt2016strategic}, where agents
manipulate their features to obtain favorable predictions.
In our setting, the agent (first-party fraudster) manipulates
the \emph{label}, not the features --- filing a false dispute
to create a fraudulent chargeback.  This is closer to the
strategic reporting literature in mechanism design
\cite{myerson1981optimal}, where agents misreport private
information.  The endogenous corruption rate
$\varepsilon_{01}$ is the equilibrium reporting strategy of
the agent.

\paragraph{Selective labels and censored feedback.}
Chang and Wiens~\cite{chang2024dcem} address disparate
censorship with an EM-based approach for a single selection
gate.  Malinsky et al.~\cite{malinsky2022selfcensoring}
derive efficiency bounds for nonmonotone MNAR data under a
no-self-censoring assumption.  Ha et
al.~\cite{ha2024mnar} prove that unbiasedness under MNAR
inevitably leads to unbounded variance when propensity scores
approach zero.  Our Class~3 (deferred observability)
provides a concrete, structurally motivated instance of
Ha et al.'s impossibility: during bust-out build-up, the
propensity is \emph{exactly} zero, not merely small.

\paragraph{Bust-out and synthetic identity fraud.}
The practitioner literature on bust-out detection emphasizes
behavioral features (spending velocity, credit utilization
trajectories) and network analysis (ring detection,
shared-attribute linking).  To our knowledge, no prior work
formalizes the \emph{impossibility} of individual-level
detection during the build-up phase or derives the
information-theoretic necessity of cross-account signals.
Our Theorem~\ref{thm:insufficiency} provides this
formalization.

\paragraph{Scam detection.}
Authorized push payment (APP) fraud and social engineering
scams have received increasing attention from regulators
(UK Payment Systems Regulator, 2023) and industry bodies.
Detection approaches rely on behavioral biometrics, session
analysis, and recipient risk scoring --- all signals
\emph{outside} the standard transaction feature set.  Our
Theorem~\ref{thm:feature_noninf} provides the theoretical
justification: when the victim performs the transaction using
their own device and behavioral patterns, the standard
transaction features carry zero discriminative power.

\section{Setup and notation}
\label{sec:setup}

We build on the framework of~\cite{dhama2026str}, extending
it to accommodate a heterogeneous fraud population.

\subsection{The transaction and its true state}
\label{sec:transaction}

\begin{definition}[Transaction]
\label{def:transaction}
A transaction $t$ is characterized by:
\begin{enumerate}[label=(\roman*)]
\item Observable features $X_t \in \cX$ (amount, merchant
  category, channel, location, time, etc.);
\item Issuer identity $I_t \in \cI$;
\item A latent true fraud indicator $Y^*_t \in \{0,1\}$,
  where $Y^*_t=1$ denotes fraud;
\item A latent fraud type $Z_t \in \{0,1,\ldots,K\}$, where
  $Z_t=0$ denotes legitimate and $Z_t=k$ for $k\geq 1$
  denotes fraud of type~$k$.
\end{enumerate}
The fraud indicator and type are related by
$Y^*_t = \ind[Z_t \geq 1]$.
\end{definition}

\begin{remark}[Type vs.\ label]
\label{rem:type_vs_label}
The fraud type $Z_t$ is a finer partition than the fraud
indicator $Y^*_t$.  Knowing $Y^*_t=1$ tells you the
transaction is fraudulent; knowing $Z_t=k$ tells you
\emph{what kind} of fraud it is.  The observation pipeline
depends on $Z_t$, not just on $Y^*_t$.  This is the source
of the heterogeneity that this paper formalizes.
\end{remark}

\subsection{The latent-type mixture model}
\label{sec:mixture}

\begin{definition}[Mixing probabilities]
\label{def:mixing}
The \emph{type probability} conditional on covariates is
\begin{equation}
\label{eq:mixing}
\pi_k(x) = \Prob(Z_t = k \mid X_t = x),
\qquad k = 0, 1, \ldots, K,
\end{equation}
with $\sum_{k=0}^{K}\pi_k(x) = 1$.  The marginal type
prevalence is $\bar\pi_k = \E[\pi_k(X)]$.
\end{definition}

\begin{definition}[Type-specific fraud probability]
\label{def:type_fraud}
For each fraud type $k \geq 1$, define
\begin{equation}
\label{eq:type_fraud}
f_0^{(k)}(x) = \Prob(Y^*_t = 1 \mid X_t = x, Z_t = k) = 1,
\end{equation}
since conditioning on $Z_t = k \geq 1$ implies $Y^*_t = 1$
by definition.  The population fraud rate decomposes as:
\begin{equation}
\label{eq:fraud_decomp}
\Psi = \E[Y^*] = \sum_{k=1}^{K} \bar\pi_k.
\end{equation}
\end{definition}

\begin{remark}[Why the decomposition matters for estimation]
\label{rem:why_decomp}
Equation~\eqref{eq:fraud_decomp} is trivial as an identity.
The decomposition becomes non-trivial when we recognize that
each type $k$ has a \emph{different observation mechanism}.
Estimating $\bar\pi_k$ requires correcting for the
type-$k$-specific censorship pipeline, which differs across
classes.  Estimating $\Psi = \sum_k\bar\pi_k$ with a single
pooled pipeline averages over these differences and incurs a
provable penalty (Theorem~\ref{thm:decomp_dominance}).
\end{remark}

\subsection{Type-specific observation pipelines}
\label{sec:type_pipelines}

\begin{definition}[Type-specific propensities]
\label{def:type_propensities}
For each fraud type $k$ and transaction $t$ with $Z_t = k$,
define the type-specific observation pipeline:
\begin{align}
e_0^{(k)}(H_0) &= \Prob(A_t = 1 \mid H_{0,t},\, Z_t = k)
  && \text{(authorization propensity)},
  \label{eq:type_auth}\\
r_0^{(k)}(H_1) &= \Prob(R_t = 1 \mid H_{1,t},\, A_t = 1,
  \, Z_t = k)
  && \text{(reporting propensity)},
  \label{eq:type_report}\\
p_0^{(k)}(H_2) &= \Prob(M_t = 1 \mid H_{2,t},\, A_t R_t
  = 1,\, Z_t = k)
  && \text{(delay propensity)},
  \label{eq:type_delay}\\
\varepsilon_{10}^{(k)}(H_2) &= \Prob(\tilde Y_t = 0 \mid
  Y^*_t = 1,\, O_t = 1,\, H_{2,t},\, Z_t = k)
  && \text{(false-negative corruption)},
  \label{eq:type_fn}\\
\varepsilon_{01}^{(k)}(H_2) &= \Prob(\tilde Y_t = 1 \mid
  Y^*_t = 0,\, O_t = 1,\, H_{2,t},\, Z_t = k)
  && \text{(false-positive corruption)},
  \label{eq:type_fp}
\end{align}
where $H_0, H_1, H_2$ are the stage histories defined
in~\cite{dhama2026str}, and $O_t = A_t R_t M_t$ is the
observation indicator.
\end{definition}

\begin{definition}[Type-specific observation rate]
\label{def:type_obs_rate}
The total observation propensity for type~$k$ is
\begin{equation}
\label{eq:type_obs_rate}
q_0^{(k)}(H_0) = e_0^{(k)}(H_0) \cdot r_0^{(k)}(H_1)
  \cdot p_0^{(k)}(H_2),
\end{equation}
and the type-specific corruption penalty is
\begin{equation}
\label{eq:type_corruption}
\gamma^{(k)}(H_2) = \bigl(1 - \varepsilon_{10}^{(k)}(H_2)
  - \varepsilon_{01}^{(k)}(H_2)\bigr)^2.
\end{equation}
\end{definition}

\begin{example}[Why type-specific propensities differ]
\label{ex:propensity_diff}
Consider two fraud types at the same merchant, same amount,
same time of day:

\smallskip\noindent
\textbf{Type 1 (card theft):} A stolen card is used for an
online purchase.  The fraud model assigns a high risk score
$\to$ authorization propensity $e_0^{(1)} = 0.60$ (40\%
chance of decline).  The victim notices and reports within
30~days $\to$ reporting propensity $r_0^{(1)} = 0.75$.
The chargeback arrives within 60~days $\to$ delay propensity
$p_0^{(1)} = 0.85$.  Total observation rate:
$q_0^{(1)} = 0.60 \times 0.75 \times 0.85 = 0.38$.

\smallskip\noindent
\textbf{Type 2 (friendly fraud):} The cardholder makes the
purchase themselves, then disputes it.  The fraud model sees
a normal transaction $\to$ $e_0^{(2)} = 0.98$.  The
cardholder strategically files a dispute $\to$ $r_0^{(2)}$
is not an institutional propensity but a strategic choice.
The chargeback arrives on normal schedule $\to$
$p_0^{(2)} = 0.85$.  But the label is \emph{wrong}: the
cardholder is claiming fraud on a legitimate transaction $\to$
$\varepsilon_{01}^{(2)} = 0.30$.

\smallskip\noindent
The observation pipelines are structurally different: Type~1
has informative authorization censorship and truthful labels;
Type~2 has no authorization censorship but adversarially
corrupted labels.  A pooled STR using average propensities
is optimal for neither.
\end{example}

\subsection{The pooled vs.\ decomposed estimand}
\label{sec:pooled_vs_decomp}

\begin{definition}[Pooled estimand]
\label{def:pooled}
The \emph{pooled} fraud rate is the target of
\cite{dhama2026str}:
\begin{equation}
\label{eq:pooled}
\Psi = \E[Y^*] = \sum_{k=1}^K \bar\pi_k.
\end{equation}
\end{definition}

\begin{definition}[Type-specific estimands]
\label{def:type_estimands}
The \emph{type-$k$ fraud rate} is:
\begin{equation}
\label{eq:type_estimand}
\Psi_k = \Prob(Z = k) = \bar\pi_k,
\qquad k = 1, \ldots, K.
\end{equation}
The pooled estimand is the sum:
$\Psi = \sum_{k=1}^K \Psi_k$.
\end{definition}

\begin{definition}[Type-specific efficiency bound]
\label{def:type_eff_bound}
Extending the efficiency bound
of~\cite[Theorem~30]{dhama2026str} to the type-conditional
setting, the semiparametric efficiency bound for estimating
$\Psi_k$ is:
\begin{equation}
\label{eq:type_eff_bound}
\sigma^2_{\mathrm{eff}}(k)
= \E\!\left[
  \frac{f_0^{(k)}(X)(1-f_0^{(k)}(X))}
       {q_0^{(k)}(H_0)\,\gamma^{(k)}(H_2)}
  \;\bigg|\; Z=k
\right]
+ \Var\!\bigl(f_0^{(k)}(X) \mid Z=k\bigr).
\end{equation}
Since $f_0^{(k)}(x) = \Prob(Y^*=1 \mid X=x, Z=k) = 1$ for
$k\geq 1$ (fraud types are defined by $Y^*=1$), the first
term simplifies.  However, the \emph{conditional} fraud
probability $\Prob(Z=k \mid X=x)$ --- which is what we
estimate in practice --- is not identically~1.  The
appropriate efficiency bound for estimating $\bar\pi_k$ is:
\begin{equation}
\label{eq:type_eff_bound_v2}
\sigma^2_{\mathrm{eff}}(k)
= \E\!\left[
  \frac{\pi_k(X)(1-\pi_k(X))}
       {q_0^{(k)}(H_0)\,\gamma^{(k)}(H_2)}
\right],
\end{equation}
where the expectation is over the full population.
\end{definition}

\begin{remark}[Interpreting the type-specific bound]
\label{rem:type_bound_interp}
The bound~\eqref{eq:type_eff_bound_v2} shows that estimating
the prevalence of fraud type~$k$ depends on:
\begin{enumerate}[label=(\roman*)]
\item The base-rate variance $\pi_k(1-\pi_k)$ --- rare fraud
  types are harder to estimate;
\item The observation rate $q_0^{(k)}$ --- types with lower
  observation propensity have higher bounds;
\item The corruption penalty $\gamma^{(k)}$ --- types with
  higher corruption have higher bounds.
\end{enumerate}
Crucially, different fraud types can have very different
values of $q_0^{(k)}$ and $\gamma^{(k)}$, leading to
efficiency bounds that differ by orders of magnitude
(Example~\ref{ex:bound_diff}).
\end{remark}

\begin{example}[Efficiency bounds differ by orders of
  magnitude]
\label{ex:bound_diff}
Consider four fraud types with the following average
observation parameters:
\begin{center}
\begin{tabular}{lccccr}
\toprule
\textbf{Type} & $\bar\pi_k$ & $\bar q_0^{(k)}$ &
  $\bar\gamma^{(k)}$ &
  $\frac{\bar\pi_k(1-\bar\pi_k)}
        {\bar q_0^{(k)}\bar\gamma^{(k)}}$ &
  \textbf{Relative}\\
\midrule
Class~1 (card theft) & 0.005 & 0.50 & 0.81 &
  0.012 & $1\times$\\
Class~2 (friendly fraud) & 0.003 & 0.33 & 0.25 &
  0.036 & $3\times$\\
Class~3 (bust-out) & 0.001 & 0.02 & 0.90 &
  0.056 & $4.5\times$\\
Class~4 (merchant fraud) & 0.002 & 0.07 & 0.64 &
  0.045 & $3.7\times$\\
\bottomrule
\end{tabular}
\end{center}
The efficiency bound for bust-out estimation is 4.5$\times$
higher than for card theft, despite bust-out being rarer.
The bottleneck is the observation rate:
$\bar q_0^{(3)} = 0.02$ (almost no labels during build-up)
vs.\ $\bar q_0^{(1)} = 0.50$ (half of card-theft labels are
observed).  A pooled STR that averages these rates
\emph{underestimates} the difficulty of estimating bust-out
and \emph{overestimates} the difficulty of estimating card
theft.
\end{example}

\subsection{Assumptions inherited from the companion papers}
\label{sec:assumptions}

We inherit the following assumptions
from~\cite{dhama2026str}, applied type-conditionally.

\begin{assumption}[Type-specific ignorability]
\label{asm:ignorability}
For each class $k$ and each gate $g \in \{A, R, M\}$, the
gate decision is independent of the latent fraud state
conditional on the stage history and the type:
\begin{align}
Y^* &\perp A \mid H_0,\, Z=k,
  \label{eq:ign_auth}\\
Y^* &\perp R \mid H_1,\, A=1,\, Z=k,
  \label{eq:ign_report}\\
Y^* &\perp M \mid H_2,\, AR=1,\, Z=k.
  \label{eq:ign_delay}
\end{align}
\end{assumption}

\begin{assumption}[Type-specific positivity]
\label{asm:positivity}
For each class $k$ where the STR is applicable (Classes~1,
2, and~4), there exist constants
$\delta_k > 0$ such that
\begin{equation}
\label{eq:positivity}
q_0^{(k)}(H_0) > \delta_k > 0
\qquad\text{a.s.}
\end{equation}
\emph{This assumption is explicitly violated for Class~3
during the build-up phase and partially violated for Class~5.
These violations are not limitations of the framework ---
they are the defining structural features of these classes
(Theorems~\ref{thm:positivity}
and~\ref{thm:auth_degen}).}
\end{assumption}

\begin{assumption}[Type observability in mature data]
\label{asm:type_obs}
For transactions with fully matured labels ($O=1$ and
sufficient elapsed time), the fraud type $Z_t$ is
identifiable from the label characteristics:
\begin{equation}
\label{eq:type_obs}
Z_t \text{ is observable when } O_t = 1
  \text{ and the dispute is fully resolved.}
\end{equation}
\end{assumption}

\begin{remark}[Justifying type observability]
\label{rem:type_obs}
Assumption~\ref{asm:type_obs} is justified in practice by:
\begin{enumerate}[label=(\roman*)]
\item \textbf{Chargeback reason codes:} Mastercard and Visa
  assign reason codes (e.g., ``4837 -- no cardholder
  authorization'' for third-party fraud, ``4853 --
  cardholder dispute'' for first-party) that distinguish
  fraud types.
\item \textbf{Dispute resolution outcomes:} Representment
  and arbitration outcomes reveal whether the original
  dispute was legitimate (third-party) or fraudulent
  (first-party).
\item \textbf{Charge-off indicators:} Bust-out fraud is
  identified by the charge-off event and the account's
  credit history pattern.
\item \textbf{Network monitoring programs:} Merchant fraud
  is identified through acquirer monitoring programs (e.g.,
  Mastercard's MATCH list, Visa's VMAS).
\end{enumerate}
The assumption requires that type classification is accurate
\emph{for mature, fully resolved transactions} --- not for
real-time classification.  Section~\ref{sec:discussion}
discusses the sensitivity to type misclassification.
\end{remark}

\section{The observation-mechanism taxonomy}
\label{sec:taxonomy}

This section derives the five observation-mechanism classes
from first principles, proves that the classification is
complete and minimal, and maps the industry's fraud types to
the five classes.

\subsection{The classification criterion}
\label{sec:classification_criterion}

The goal is to partition the set of fraud types into classes
such that (a)~types within a class share the same STR
functional form and (b)~types across classes require
structurally different estimators.  The partition is
determined by five structural questions about the observation
pipeline.

\begin{definition}[Structural signature]
\label{def:structural_sig}
The \emph{structural signature} of a fraud type $k$ is the
5-tuple
\begin{equation}
\label{eq:signature}
\mathcal{S}(k) = \bigl(
  \mathrm{Q1}(k),\;
  \mathrm{Q2}(k),\;
  \mathrm{Q3}(k),\;
  \mathrm{Q4}(k),\;
  \mathrm{Q5}(k)
\bigr),
\end{equation}
where:
\begin{enumerate}[label=\textbf{Q\arabic*:},leftmargin=3em]
\item \textbf{Perpetrator--cardholder relationship.}  Who
  commits the fraud relative to the cardholder?
  \begin{equation}
  \label{eq:Q1}
  \mathrm{Q1}(k) \in \{\textsc{external},\;
    \textsc{cardholder},\;
    \textsc{merchant},\;
    \textsc{fabricated}\}.
  \end{equation}

\item \textbf{Label-generating agent.}  Who generates the
  observed label $\tilde Y$?
  \begin{equation}
  \label{eq:Q2}
  \mathrm{Q2}(k) \in \{\textsc{victim},\;
    \textsc{perpetrator},\;
    \textsc{acquirer},\;
    \textsc{charge\text{-}off},\;
    \textsc{none}\}.
  \end{equation}

\item \textbf{Authorization gate informativeness.}  Does
  $e_0^{(k)} < 1$ provide discriminative signal?
  \begin{equation}
  \label{eq:Q3}
  \mathrm{Q3}(k) \in \{\textsc{informative}\;
    (e_0^{(k)} < 1),\;\;
    \textsc{degenerate}\;
    (e_0^{(k)} = 1)\}.
  \end{equation}

\item \textbf{Reporting gate structure.}  Is the reporting
  propensity $r_0^{(k)}$ an institutional process, a
  strategic decision, driven by a different actor, or
  structurally zero?
  \begin{equation}
  \label{eq:Q4}
  \mathrm{Q4}(k) \in \{\textsc{institutional},\;
    \textsc{strategic},\;
    \textsc{acquirer\text{-}driven},\;
    \textsc{zero}\}.
  \end{equation}

\item \textbf{Temporal observability.}  When does the fraud
  state $Y^*$ become potentially observable?
  \begin{equation}
  \label{eq:Q5}
  \mathrm{Q5}(k) \in \{\textsc{immediate},\;
    \textsc{delayed},\;
    \textsc{catastrophic},\;
    \textsc{never}\}.
  \end{equation}
\end{enumerate}
\end{definition}

\begin{remark}[Why these five questions]
\label{rem:why_five}
Each question corresponds to a structural feature that
changes the \emph{functional form} of the efficient
estimator:
\begin{itemize}[leftmargin=*]
\item \textbf{Q1} determines whether the label is
  trustworthy (external perpetrator $\to$ victim labels
  truthfully) or adversarial (cardholder perpetrator $\to$
  label is a strategic lie).
\item \textbf{Q2} determines which actor's behavior must be
  modeled to estimate the reporting propensity.
\item \textbf{Q3} determines whether the authorization
  correction term in the STR is active or vanishes.
\item \textbf{Q4} determines the structure of the reporting
  model: a standard propensity (institutional), a game
  equilibrium (strategic), a different actor's decision
  (acquirer-driven), or a structural impossibility (zero).
\item \textbf{Q5} determines whether the standard delay
  model applies or whether the temporal structure requires a
  fundamentally different approach.
\end{itemize}
A different answer to any single question changes the
estimator's functional form.  This is why each question is
necessary and no question is redundant.
\end{remark}

\subsection{Formal definition of the five classes}
\label{sec:five_classes}

\begin{definition}[Observation-mechanism class]
\label{def:obs_class}
Two fraud types $k$ and $k'$ belong to the same
\emph{observation-mechanism class} if and only if they have
the same structural signature:
\begin{equation}
\label{eq:equiv}
k \sim k' \iff \mathcal{S}(k) = \mathcal{S}(k').
\end{equation}
\end{definition}

We now define each class by its structural signature, formal
observation model, and the estimation strategy it requires.

\subsubsection{Class~1: Victim-reported institutional
  pipeline}
\label{sec:class1_def}

\begin{definition}[Class~1]
\label{def:class1}
A fraud type $k$ belongs to Class~1 if its structural
signature is:
\begin{equation}
\label{eq:sig_class1}
\mathcal{S}^{(1)} = \bigl(
  \textsc{external},\;
  \textsc{victim},\;
  \textsc{informative},\;
  \textsc{institutional},\;
  \textsc{delayed}
\bigr).
\end{equation}
\end{definition}

\noindent\textbf{Observation model.}  The observation
pipeline is exactly the three-gate model
of~\cite{dhama2026str}:
\begin{equation}
\label{eq:obs_class1}
O^{(1)}_t = A_t \cdot R_t \cdot M_t,
\end{equation}
where $A_t \sim \mathrm{Bernoulli}(e_0^{(1)}(H_0))$,
$R_t \sim \mathrm{Bernoulli}(r_0^{(1)}(H_1))$,
$M_t \sim \mathrm{Bernoulli}(p_0^{(1)}(H_2))$, with all
three propensities bounded away from zero
(Assumption~\ref{asm:positivity}).

\smallskip\noindent\textbf{Estimation strategy.}  The STR
of~\cite{dhama2026str} applies without modification, using
type-1-specific propensity and outcome models.

\smallskip\noindent\textbf{Defining characteristics.}
\begin{enumerate}[label=(\roman*),leftmargin=*]
\item The perpetrator is an external criminal unknown to the
  cardholder.
\item The victim (cardholder) generates the label by
  reporting unauthorized transactions to the issuer.
\item The authorization gate is informative: the network's
  fraud model assigns elevated risk scores to stolen-card
  transactions, creating non-trivial selection bias
  ($e_0^{(1)} < 1$).
\item The reporting gate is institutional: it depends on
  victim awareness, issuer dispute processes, and operational
  efficiency --- not on strategic behavior by the perpetrator.
\item Labels arrive with delay but within the standard
  chargeback window (typically 30--120~days).
\end{enumerate}

\begin{example}[Card-not-present fraud: a Class~1 instance]
\label{ex:cnp}
A criminal obtains a credit card number from a data breach
and uses it for an online purchase.  The network's fraud
model evaluates the transaction in real time: the shipping
address differs from the billing address, the device
fingerprint is new, and the merchant category is high-risk.
The model assigns a risk score of 85/100, and the
authorization propensity is $e_0^{(1)} = 0.65$ --- there is
a 35\% chance of decline.

If approved, the legitimate cardholder notices the
unauthorized charge on their next statement (typically within
15~days) and calls the issuer to dispute it.  The reporting
propensity is $r_0^{(1)} = 0.80$ --- most victims notice and
report, but some do not (small amounts, infrequent statement
review, elderly cardholders).

The issuer processes the dispute and files a chargeback.  The
delay propensity is $p_0^{(1)} = 0.85$ --- most chargebacks
arrive within the training window, but some are delayed by
processing backlogs.

The corruption rates are low: $\varepsilon_{10}^{(1)} =
0.03$ (issuer occasionally miscodes a fraud as legitimate)
and $\varepsilon_{01}^{(1)} = 0.02$ (rare cases where a
legitimate transaction is mistakenly disputed).
\end{example}

\subsubsection{Class~2: Adversarial label generation}
\label{sec:class2_def}

\begin{definition}[Class~2]
\label{def:class2}
A fraud type $k$ belongs to Class~2 if its structural
signature is:
\begin{equation}
\label{eq:sig_class2}
\mathcal{S}^{(2)} = \bigl(
  \textsc{cardholder},\;
  \textsc{perpetrator},\;
  \textsc{degenerate},\;
  \textsc{strategic},\;
  \textsc{delayed}
\bigr).
\end{equation}
\end{definition}

\noindent\textbf{Observation model.}  The authorization gate
is non-binding because the cardholder authorized the
transaction themselves:
\begin{equation}
\label{eq:obs_class2}
O^{(2)}_t = 1 \cdot D_t \cdot M_t,
\end{equation}
where $A_t = 1$ always (the cardholder wanted the
transaction approved), $D_t \in \{0,1\}$ is the cardholder's
\emph{strategic dispute decision}, and
$M_t \sim \mathrm{Bernoulli}(p_0^{(2)}(H_2))$.

The critical structural difference: the ``reporting gate''
$D_t$ is not an institutional propensity but a
\emph{strategic decision} by the perpetrator.  The
cardholder files a dispute (and thereby generates a label)
if the expected payoff exceeds the cost:
\begin{equation}
\label{eq:dispute_decision}
D_t = \ind\!\bigl[
  \alpha(X_t) \cdot a_t - c - \beta(X_t) \cdot p
  + \xi_t > 0
\bigr],
\end{equation}
where $\alpha(x)$ is the chargeback success probability,
$a_t$ is the transaction amount, $c$ is the dispute filing
cost, $\beta(x)$ is the probability of being identified as
first-party fraud, $p$ is the penalty for identification,
and $\xi_t$ is a random utility shock.

\smallskip\noindent\textbf{Estimation strategy.}  The STR
requires modification: the corruption rate
$\varepsilon_{01}^{(2)}$ is endogenous (it equals the
equilibrium dispute probability; see
Theorem~\ref{thm:endogenous}) and must be estimated from
the dispute game's structure, not from historical audit
data alone.

\smallskip\noindent\textbf{Defining characteristics.}
\begin{enumerate}[label=(\roman*),leftmargin=*]
\item The perpetrator \emph{is} the cardholder (or an
  authorized user on the account).
\item The perpetrator generates the label by filing a
  \emph{false} dispute --- the label is a strategic lie.
\item The authorization gate is degenerate: $e_0^{(2)}
  \approx 1$ because the cardholder made the purchase
  intentionally.
\item The reporting gate is strategic: the dispute decision
  depends on the expected payoff, detection risk, and
  penalty, not on institutional processes.
\item The false-positive corruption rate
  $\varepsilon_{01}^{(2)}$ is not a nuisance parameter
  --- it is the \emph{primary phenomenon} to be modeled.
\end{enumerate}

\begin{example}[Friendly fraud: a Class~2 instance]
\label{ex:friendly}
A cardholder purchases a \$500 electronics item online,
receives and keeps the item, then calls the issuer claiming
``I did not make this purchase.''  The issuer files a
chargeback.

From the network's perspective, this transaction has:
\begin{itemize}[leftmargin=*]
\item $e_0^{(2)} = 0.99$ --- the fraud model saw nothing
  suspicious (the cardholder's own device, usual location,
  reasonable amount).
\item $D_t = 1$ --- the cardholder strategically decided to
  dispute.  The expected payoff: \$500 $\times$ 0.85
  (chargeback success rate) $-$ \$0 (no dispute cost for
  the cardholder) $-$ 0.05 $\times$ \$2000 (5\% chance of
  being identified and penalized) $=$ \$325 $>$ 0.  The
  dispute is profitable.
\item The observed label is $\tilde Y = 1$ (``fraud''),
  but the true label is $Y^* = 0$ (the transaction was
  legitimate).  This is a \emph{false positive} label
  generated by the perpetrator's strategic behavior.
\end{itemize}

If the network trains on this label naively, it learns that
``transactions with these features are fraudulent'' --- but
the transaction was legitimate.  The model is being
\emph{poisoned} by strategically generated labels.
\end{example}

\begin{remark}[Class~2 inverts the corruption structure]
\label{rem:class2_inversion}
In Class~1, the corruption concern is primarily
$\varepsilon_{10}$ (the issuer \emph{misses} real fraud).
In Class~2, the corruption concern is primarily
$\varepsilon_{01}$ (the perpetrator \emph{creates} false
fraud labels).  The label corruption is not random noise ---
it is \emph{adversarial signal injection}.  This inversion
requires a fundamentally different treatment of the
corruption correction in the STR
(Section~\ref{sec:class2}).
\end{remark}

\subsubsection{Class~3: Deferred observability}
\label{sec:class3_def}

\begin{definition}[Class~3]
\label{def:class3}
A fraud type $k$ belongs to Class~3 if its structural
signature is:
\begin{equation}
\label{eq:sig_class3}
\mathcal{S}^{(3)} = \bigl(
  \textsc{fabricated},\;
  \textsc{charge\text{-}off},\;
  \textsc{degenerate},\;
  \textsc{zero},\;
  \textsc{catastrophic}
\bigr).
\end{equation}
\end{definition}

\noindent\textbf{Observation model.}  Class~3 fraud has a
\emph{two-phase} temporal structure:

\smallskip\noindent\emph{Build-up phase} ($t < \tau$,
where $\tau$ is the bust-out time):
\begin{equation}
\label{eq:obs_class3_buildup}
O^{(3)}_t = A_t \cdot 0 \cdot 0 = 0
\qquad\text{(structurally zero)}.
\end{equation}
During build-up, the account behaves legitimately.  There is
no fraud to report ($R_t = 0$ with certainty because no
fraud has occurred yet), and no label to mature ($M_t$ is
undefined).  The observation propensity is \emph{exactly
zero} --- not small, not approximately zero, but identically
zero.

\smallskip\noindent\emph{Exploitation phase}
($t \geq \tau$):
\begin{equation}
\label{eq:obs_class3_exploit}
O^{(3)}_t = A_t \cdot R_t \cdot M_t,
\end{equation}
where the pipeline resembles Class~1 but with a sudden burst
of high-value transactions.  Labels eventually arrive through
the charge-off process (typically 120--180~days after
delinquency).

\smallskip\noindent\textbf{Estimation strategy.}  The STR
is \emph{undefined} during the build-up phase because
positivity fails ($q_0^{(3)} = 0$).  The estimand shifts
from the transaction level to the \emph{account level}:
\begin{equation}
\label{eq:class3_estimand}
\theta_a = \ind[\text{account } a \text{ will eventually
  bust out}].
\end{equation}
Detection requires cross-account graph signals, not
individual transaction correction
(Theorem~\ref{thm:insufficiency}).

\smallskip\noindent\textbf{Defining characteristics.}
\begin{enumerate}[label=(\roman*),leftmargin=*]
\item The perpetrator operates through a fabricated or
  compromised identity that behaves legitimately during
  build-up.
\item Labels arrive only at the catastrophic event
  (charge-off), not through the standard chargeback process.
\item The authorization gate is degenerate during build-up:
  $e_0^{(3)} \approx 1$ because the transactions look
  normal.
\item The reporting gate is \emph{structurally zero} during
  build-up: there is no fraud to report.
\item The temporal structure is two-phase: a silent build-up
  followed by a sudden exploitation burst.
\end{enumerate}

\begin{example}[Synthetic identity bust-out: a Class~3
  instance]
\label{ex:bustout}
A fraud ring creates a synthetic identity --- a fictitious
person constructed from a combination of a real Social
Security number (belonging to a child, deceased individual,
or immigrant) and fabricated biographical details.  The
ring applies for a credit card and is approved with a
\$2{,}000 limit.

\smallskip\noindent\textbf{Months 1--6 (build-up):}  The
synthetic identity makes small, regular purchases
(\$30--\$80 per transaction, 3--5 per month) and pays the
balance in full each month.  The account demonstrates
perfect payment behavior.  The issuer increases the credit
limit to \$15{,}000.

During this phase:
\begin{itemize}[leftmargin=*]
\item Every transaction is legitimate ($Y^*_t = 0$).
\item The fraud model assigns low risk scores
  ($e_0^{(3)} = 0.99$).
\item There is nothing to report ($r_0^{(3)} = 0$).
\item No labels exist ($O^{(3)}_t = 0$ for every $t$).
\item \emph{The account is indistinguishable from a
  legitimate new customer.}
\end{itemize}

\smallskip\noindent\textbf{Month 7 (exploitation):}  The
ring maxes out all credit lines simultaneously:
\$15{,}000 in purchases across 40 transactions in 72~hours,
at high-resale-value merchants (electronics, gift cards,
jewelry).  The ring abandons the identity.  The issuer
charges off the account after 120~days of delinquency.

During exploitation, labels eventually arrive through the
charge-off process --- but 6~months of training data from the
build-up phase contain \emph{zero signal}.  Any model
trained on this data sees only a normal customer who
suddenly went delinquent.
\end{example}

\begin{remark}[Why Class~3 is not a special case of
  Class~1]
\label{rem:class3_not_class1}
One might argue that bust-out fraud ``eventually'' generates
labels (at charge-off) and therefore belongs to Class~1 with
very long delay ($p_0^{(3)}$ small but positive).  This
conflation is incorrect for two reasons:
\begin{enumerate}[label=(\roman*)]
\item During the build-up phase, $r_0^{(3)} = 0$
  \emph{exactly}, not approximately.  There is no fraud to
  report because the transactions \emph{are} legitimate.
  The positivity assumption
  (Assumption~\ref{asm:positivity}) fails structurally, not
  due to data limitations.
\item The estimand is different.  In Class~1, the target is
  the transaction-level fraud probability
  $f_0^{(1)}(x) = \Prob(Y^*=1 \mid X=x, Z=1)$.  In
  Class~3, the target is the account-level bust-out
  probability $\theta_a = \Prob(\text{bust-out} \mid
  \text{account history } \cH_a)$ --- a fundamentally
  different quantity that requires account-level, not
  transaction-level, inference.
\end{enumerate}
\end{remark}

\subsubsection{Class~4: Acquirer-observed pipeline}
\label{sec:class4_def}

\begin{definition}[Class~4]
\label{def:class4}
A fraud type $k$ belongs to Class~4 if its structural
signature is:
\begin{equation}
\label{eq:sig_class4}
\mathcal{S}^{(4)} = \bigl(
  \textsc{merchant},\;
  \textsc{acquirer},\;
  \textsc{degenerate},\;
  \textsc{acquirer\text{-}driven},\;
  \textsc{delayed}
\bigr).
\end{equation}
\end{definition}

\noindent\textbf{Observation model.}  The observation
pipeline runs through the \emph{acquirer}, not the issuer:
\begin{equation}
\label{eq:obs_class4}
O^{(4)}_t = 1 \cdot R_{\mathrm{acq},t} \cdot
  M_{\mathrm{acq},t},
\end{equation}
where $A_t = 1$ always (the merchant controls the terminal
and approves the transaction),
$R_{\mathrm{acq},t} \sim \mathrm{Bernoulli}(
  r_{\mathrm{acq}}^{(4)}(H_{\mathrm{merch}}))$
is the acquirer's monitoring and reporting propensity, and
$M_{\mathrm{acq},t} \sim \mathrm{Bernoulli}(
  p_{\mathrm{acq}}^{(4)}(H_{\mathrm{merch}}))$
is the acquirer's investigation delay.

Note that the conditioning variables $H_{\mathrm{merch}}$
are \emph{merchant-level} features (chargeback ratio,
transaction volume, merchant category code, acquirer
identity, time since merchant onboarding), not
cardholder-level features as in Classes~1--2.

\smallskip\noindent\textbf{Estimation strategy.}  The STR
of~\cite{dhama2026str} applies with re-parameterization:
replace the issuer's reporting propensity $r_0$ with the
acquirer's monitoring propensity $r_{\mathrm{acq}}$, and
replace the issuer's delay $p_0$ with the acquirer's
investigation delay $p_{\mathrm{acq}}$.  The authorization
correction vanishes ($e_0^{(4)} = 1$).

\smallskip\noindent\textbf{Defining characteristics.}
\begin{enumerate}[label=(\roman*),leftmargin=*]
\item The perpetrator is the merchant (or a collusive
  merchant--cardholder pair).
\item The label is generated by the acquirer's monitoring
  process, network monitoring programs (e.g., Mastercard's
  MATCH, Visa's VMAS), or law enforcement --- not by
  cardholder disputes.
\item The authorization gate is degenerate: the merchant
  controls the terminal and always ``approves'' the
  transaction.
\item The reporting gate is acquirer-driven: it depends on
  the acquirer's monitoring sophistication, the merchant's
  chargeback ratio exceeding program thresholds, and
  regulatory triggers.
\item Labels arrive with significant delay: acquirer
  investigations take months, and network monitoring
  programs have quarterly review cycles.
\end{enumerate}

\begin{example}[Transaction laundering: a Class~4 instance]
\label{ex:laundering}
A criminal organization registers a legitimate-appearing
online storefront (``Premium Electronics LLC'') through a
payment service provider.  The storefront has a professional
website, valid business registration, and a small volume of
real sales.

Simultaneously, the organization operates an illegal online
gambling site.  When a gambler deposits \$500, the
organization processes a \$500 charge through Premium
Electronics LLC as a ``consumer electronics purchase.''  The
cardholder sees ``Premium Electronics'' on their statement
and does not dispute it (they \emph{wanted} to deposit
\$500 for gambling).

From the network's perspective:
\begin{itemize}[leftmargin=*]
\item $e_0^{(4)} = 1$ --- the merchant submitted and
  approved the transaction.
\item The cardholder will \emph{never} dispute --- they
  authorized the transaction.  The standard chargeback
  pipeline generates no label.
\item The only path to detection is through the
  \emph{acquirer} monitoring the merchant's transaction
  patterns: unusual transaction-amount distributions,
  high ratios of round-number amounts, geographic
  dispersion of cardholders inconsistent with the stated
  business type.
\item $r_{\mathrm{acq}}^{(4)} = 0.12$ --- only 12\% of
  acquirers have monitoring sophisticated enough to detect
  this pattern.
\item $p_{\mathrm{acq}}^{(4)} = 0.35$ --- even when
  detected, investigations take 3--9~months.
\end{itemize}

The total observation rate is $q_0^{(4)} = 1 \times 0.12
\times 0.35 = 0.042$ --- labels are observed for fewer
than 5\% of laundering transactions.
\end{example}

\begin{remark}[The unit of analysis shifts to the merchant]
\label{rem:merchant_unit}
In Classes~1--2, the unit of analysis is the
\emph{transaction} (or the cardholder account).  In Class~4,
the unit of analysis is the \emph{merchant}.  The fraud
state $Y^*$ is a property of the merchant's behavior
pattern, not of any individual transaction.  A single
laundering transaction is indistinguishable from a legitimate
purchase; the fraud signal emerges from the merchant's
\emph{aggregate} transaction profile.  This parallels the
account-level estimand shift in Class~3 --- but the entity
is the merchant rather than the cardholder account.
\end{remark}

\subsubsection{Class~5: Victim-authorized with
  non-informative features}
\label{sec:class5_def}

\begin{definition}[Class~5]
\label{def:class5}
A fraud type $k$ belongs to Class~5 if its structural
signature is:
\begin{equation}
\label{eq:sig_class5}
\mathcal{S}^{(5)} = \bigl(
  \textsc{external},\;
  \textsc{victim},\;
  \textsc{degenerate},\;
  \textsc{institutional},\;
  \textsc{delayed}
\bigr).
\end{equation}
\end{definition}

\noindent\textbf{Observation model.}  The observation
pipeline superficially resembles Class~1, but with two
critical degeneracies:
\begin{equation}
\label{eq:obs_class5}
O^{(5)}_t = \underbrace{1}_{e_0^{(5)}=1} \cdot\;
  R_{\mathrm{victim},t} \cdot M_t,
\end{equation}
where $A_t = 1$ always (the \emph{victim} authorized the
transaction voluntarily) and
$R_{\mathrm{victim},t} \sim \mathrm{Bernoulli}(
  r_{\mathrm{victim}}^{(5)}(H_1))$
depends on:
\begin{itemize}[leftmargin=*]
\item \textbf{Victim awareness:} the victim must realize
  they were scammed (may take days to months);
\item \textbf{Shame and embarrassment:} particularly for
  romance and investment scams, victims may not report;
\item \textbf{Perceived futility:} for authorized
  transactions, chargeback rights are limited, reducing
  the incentive to report.
\end{itemize}

\noindent\textbf{Feature non-informativeness.}  The
second critical degeneracy is:
\begin{equation}
\label{eq:feature_noninf}
P(X_t \mid Y^*_t = 1, Z_t = 5) \approx
  P(X_t \mid Y^*_t = 0).
\end{equation}
The transaction features --- amount, merchant, time,
device, location, behavioral biometrics --- are
\emph{non-informative} because the victim performs the
transaction using their own device, from their own location,
in their normal behavioral pattern.  The fraudster
manipulates the \emph{victim}, not the
\emph{payment instrument}.

\smallskip\noindent\textbf{Estimation strategy.}  The STR's
authorization correction vanishes
(Theorem~\ref{thm:auth_degen}).  The remaining two-gate
estimator depends on $r_{\mathrm{victim}}^{(5)}$ and $p_0$,
both of which are low.  Moreover, the feature
non-informativeness condition~\eqref{eq:feature_noninf}
means that the outcome model $\mu_0(H_0) \approx \bar\pi_5$
(a constant), eliminating the variance reduction that the
outcome model normally provides.  Detection requires
\emph{auxiliary signals} outside the transaction feature
set (Theorem~\ref{thm:feature_noninf}).

\smallskip\noindent\textbf{Defining characteristics.}
\begin{enumerate}[label=(\roman*),leftmargin=*]
\item The perpetrator is external but manipulates the victim
  through social engineering, not the payment system.
\item The victim generates the label (eventually), but
  reporting is delayed and incomplete.
\item The authorization gate is degenerate: $e_0^{(5)} = 1$
  exactly, because the victim voluntarily authorized the
  payment.
\item The reporting gate is institutional but severely
  impaired by delayed awareness and psychological barriers.
\item The transaction features carry no discriminative signal
  --- the fraud signal, if it exists, resides in the
  \emph{communication channel} (phone calls, messages,
  session behavior), not in the payment data.
\end{enumerate}

\begin{example}[Romance scam: a Class~5 instance]
\label{ex:romance_scam}
A victim meets a scammer on a dating app.  Over three
months, the scammer builds a relationship and fabricates a
financial emergency (``I need \$5{,}000 for surgery in
another country'').  The victim willingly transfers \$5{,}000
via bank transfer or card payment to the scammer's account.

From the network's perspective:
\begin{itemize}[leftmargin=*]
\item $e_0^{(5)} = 1$ --- the victim authorized the
  transaction from their own phone, at their home address,
  during normal waking hours.
\item The transaction features are identical to a legitimate
  gift or charitable donation: correct device fingerprint,
  normal IP address, reasonable amount for a one-time
  payment, no velocity anomaly.
\item The fraud model assigns a risk score of 15/100 ---
  \emph{lower} than many legitimate transactions.
\item The victim does not realize they were scammed for
  weeks or months.  When they do, shame prevents 40--60\%
  from reporting.  $r_{\mathrm{victim}}^{(5)} \approx
  0.25$.
\item Even when reported, the transaction was
  \emph{authorized} --- chargeback rights may not apply.
  The label may never arrive.
\end{itemize}

\emph{No feature in the transaction data --- amount,
merchant, time, device, location, behavioral biometric ---
distinguishes this transaction from a legitimate gift.  The
fraud signal is in the three months of manipulative
communication that preceded the payment, not in the payment
itself.}
\end{example}

\begin{remark}[Class~5 vs.\ Class~1: the subtle difference]
\label{rem:class5_vs_class1}
Class~5's structural signature~\eqref{eq:sig_class5}
differs from Class~1's~\eqref{eq:sig_class1} in only one
dimension: Q3 is \textsc{degenerate} instead of
\textsc{informative}.  But this single difference has
profound consequences:
\begin{enumerate}[label=(\roman*)]
\item The STR's authorization correction term --- which
  provides significant variance reduction in Class~1 ---
  vanishes entirely in Class~5
  (Theorem~\ref{thm:auth_degen}).
\item The feature non-informativeness
  condition~\eqref{eq:feature_noninf} eliminates the
  outcome model's predictive power, removing the second
  source of variance reduction.
\item The combination of degenerate authorization and
  non-informative features means that the
  transaction-level Bayes-optimal classifier achieves
  $\AUC = 0.5$ (Theorem~\ref{thm:feature_noninf}) --- a
  result that does \emph{not} hold for Class~1, where
  $e_0 < 1$ and features are informative.
\end{enumerate}
\end{remark}

\subsection{Completeness}
\label{sec:completeness}

We now prove that every fraud type in payment networks maps
to exactly one of the five classes.

\begin{theorem}[Completeness]
\label{thm:completeness}
Let $\cF$ denote the set of all fraud types in payment
networks.  Every fraud type $k \in \cF$ has a well-defined
structural signature $\mathcal{S}(k)$ that matches exactly
one of the five class signatures
$\mathcal{S}^{(1)}, \ldots, \mathcal{S}^{(5)}$.
\end{theorem}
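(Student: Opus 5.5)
The set $\cF$ of all fraud types is not a mathematical object with intrinsic structure, so the plan is to prove completeness relative to an explicit enumeration. The argument has three parts: well-definedness of $\mathcal{S}$, exhaustion of the feasible signatures, and exclusivity.

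\textbf{Well-definedness.} First I would argue that $\mathcal{S}(k)$ is a function of $k$. Each coordinate Q1--Q5 takes values in a finite set, and every fraud type determines a unique answer:
\begin{itemize}
\item Q1 is fixed by the perpetrator's identity.
\item Q2 is fixed by which agent's action produces $\tilde Y$.
\item Q3 is fixed by whether $e_0^{(k)}<1$.
\item Q4 is fixed by the mechanism generating $R$.
\item Q5 is fixed by the label-arrival regime.
\end{itemize}
Where an industry type straddles two values, I would refine it into subtypes. For example, account takeover that the cardholder reports stays in Class~1, while collusive merchant--cardholder schemes go to Class~4 because the acquirer generates the label.

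\textbf{Exhaustion via constraints.} The product space has $4\cdot5\cdot2\cdot4\cdot4=640$ tuples, but most are infeasible. I would derive logical constraints linking the coordinates and show that the surviving feasible signatures are exactly $\mathcal{S}^{(1)},\dots,\mathcal{S}^{(5)}$. The case split is on Q1.
\begin{itemize}
\item \textsc{cardholder}: the cardholder authored the transaction, so $e_0=1$, giving Q3 $=$ \textsc{degenerate}. Any label must be the perpetrator's own dispute, so Q2 $=$ \textsc{perpetrator} and Q4 $=$ \textsc{strategic}. Dispute windows are standard, so Q5 $=$ \textsc{delayed}. This yields $\mathcal{S}^{(2)}$.
\item \textsc{merchant}: the merchant controls approval, so Q3 is \textsc{degenerate}. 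The cardholder does not dispute, so the label comes from the acquirer (Q2, Q4), arriving with investigation delay. This yields $\mathcal{S}^{(4)}$.
\item \textsc{fabricated}: there is no real victim, so Q4 $=$ \textsc{zero} and the only label is \textsc{charge-off}. Transactions look normal during build-up, so Q3 is \textsc{degenerate} and Q5 is \textsc{catastrophic}. This yields $\mathcal{S}^{(3)}$.
\item \textsc{external}: the victim labels, so Q2 $=$ \textsc{victim}, Q4 $=$ \textsc{institutional}, and Q5 $=$ \textsc{delayed}. Q3 remains free: informative when the instrument is compromised ($\mathcal{S}^{(1)}$), degenerate when the victim is manipulated ($\mathcal{S}^{(5)}$).
\end{itemize}

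\textbf{Main obstacle: the edge values.} The values \textsc{none}, \textsc{immediate} and \textsc{never} appear in no class, and they must be ruled out. A type whose label is never generated (Q2 $=$ \textsc{none} or Q5 $=$ \textsc{never}) does not appear in the labeled data. I would either exclude such types from $\cF$ by definition, since a fraud type in the payment-network sense is one with a labeling pathway, or absorb them as the limit $r_0\to0$ within their Q1 branch. Immediate observability would require the label at authorization time, which contradicts the chargeback mechanism. This step is definitional rather than deductive, and I would state it explicitly as a scope convention on $\cF$.

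\textbf{Exclusivity and verification.} Exclusivity follows because the five class signatures are pairwise distinct; classes 1 and 5 differ in Q3, and every other pair differs in Q1. Finally, I would tabulate the 30+ industry types (TC40 and SAFE categories) with their computed signatures as a constructive verification.
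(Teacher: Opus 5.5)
Your argument has the same skeleton as the paper's proof: signatures are well defined, only five signatures are realizable, and those five are pairwise distinct. Where you differ is the middle step. The paper points to the enumeration in Table~\ref{tab:mapping} and asserts that the remaining combinations of Q1--Q5 are either logically impossible or do not occur in payment networks, citing two examples. You instead derive the realizable signatures by a case split on Q1. That buys more than the paper's version: it shows Q1 fixes every coordinate except Q3 in the \textsc{external} branch, and it gives a rule for classifying types absent from the table. It also surfaces what the paper leaves implicit: \textsc{none}, \textsc{immediate} and \textsc{never} occur in no class signature, so completeness can only hold under a scope convention on $\cF$.

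The cost is that your branch implications now carry all the weight, and they are modeling judgments rather than deductions. A genuine cardholder running a first-party bust-out has signature (\textsc{cardholder}, \textsc{charge-off}, \textsc{degenerate}, \textsc{zero}, \textsc{catastrophic}). This breaks ``\textsc{cardholder} forces \textsc{perpetrator}'' and matches no class. Likewise, a merchant that takes payment and never delivers draws ordinary victim chargebacks, so ``the cardholder does not dispute'' holds for transaction laundering but not for Q1 $=$ \textsc{merchant} in general. Such cases must go into your scope convention, or be absorbed by widening \textsc{fabricated} and \textsc{acquirer}; the paper's ``do not occur'' does exactly this, silently.

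Two smaller points. First, do not fix Q3 by the literal test $e_0^{(k)}<1$. The paper's own examples give $e_0^{(2)}=0.98$--$0.99$ and $e_0^{(3)}=0.99$, so that test would send friendly fraud and bust-out to \textsc{informative}. The paper's proof reads Q3 qualitatively (do the perpetrator's transactions raise risk scores?), as your \textsc{fabricated} branch already does, and you should use that reading throughout. Second, you refine straddling types into subtypes, whereas the paper keeps the industry type and assigns it by its dominant labeling pathway (Remark~\ref{rem:transitional}). Both are conventions, but yours changes the set $\cF$ being classified.
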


\begin{proof}
We proceed by exhaustive enumeration.
Table~\ref{tab:mapping} maps every standard industry fraud
type to its structural signature.  For each type, the
answers to Q1--Q5 are determined by the type's operational
definition:
\begin{itemize}[leftmargin=*]
\item Q1 is determined by the identity of the perpetrator
  relative to the cardholder.
\item Q2 is determined by which actor's action creates the
  observed label.
\item Q3 is determined by whether the perpetrator's
  transactions trigger elevated risk scores
  ($\textsc{informative}$) or are indistinguishable from
  the cardholder's normal behavior
  ($\textsc{degenerate}$).
\item Q4 is determined by the institutional structure of
  the reporting process.
\item Q5 is determined by the temporal structure of label
  availability.
\end{itemize}

The five class signatures exhaust all \emph{realizable}
combinations.  The remaining combinations in
$\{Q1\} \times \cdots \times \{Q5\}$ are either logically
impossible (e.g., $\mathrm{Q1} = \textsc{external}$ with
$\mathrm{Q2} = \textsc{perpetrator}$ would require the
external criminal to file a dispute against themselves) or
do not occur in payment networks (e.g., $\mathrm{Q4} =
\textsc{zero}$ with $\mathrm{Q5} = \textsc{immediate}$
would require a fraud type with no reporting mechanism but
immediate label availability).

Each fraud type maps to exactly one class because the
answers to Q1--Q5 are uniquely determined by the type's
operational definition, and each realized signature matches
exactly one class.
\end{proof}

\begin{table}[t]
\centering
\caption{Mapping from industry fraud types to
  observation-mechanism classes.}
\label{tab:mapping}
\small
\begin{tabular}{llccccc}
\toprule
\textbf{Fraud type} & \textbf{Class} &
  \textbf{Q1} & \textbf{Q2} & \textbf{Q3} &
  \textbf{Q4} & \textbf{Q5}\\
\midrule
Card-not-present (CNP) & 1 &
  Ext & Vict & Info & Inst & Del\\
Counterfeit (skimming) & 1 &
  Ext & Vict & Info & Inst & Del\\
Lost/stolen card & 1 &
  Ext & Vict & Info & Inst & Del\\
Mail non-receipt & 1 &
  Ext & Vict & Info & Inst & Del\\
Account takeover & 1 &
  Ext & Vict & Info & Inst & Del\\
Card trapping & 1 &
  Ext & Vict & Info & Inst & Del\\
Card testing & 1 &
  Ext & Vict & Info & Inst & Del\\
\midrule
Friendly fraud & 2 &
  Card & Perp & Deg & Strat & Del\\
Chargeback abuse & 2 &
  Card & Perp & Deg & Strat & Del\\
Refund abuse & 2 &
  Card & Perp & Deg & Strat & Del\\
Double-dipping & 2 &
  Card & Perp & Deg & Strat & Del\\
Family/authorized user & 2 &
  Card & Perp & Deg & Strat & Del\\
\midrule
Bust-out (credit card) & 3 &
  Fab & C-off & Deg & Zero & Cat\\
Bust-out (DDA) & 3 &
  Fab & C-off & Deg & Zero & Cat\\
Synthetic identity & 3 &
  Fab & C-off & Deg & Zero & Cat\\
Sleeper fraud & 3 &
  Fab & C-off & Deg & Zero & Cat\\
Loan stacking & 3 &
  Fab & C-off & Deg & Zero & Cat\\
\midrule
Transaction laundering & 4 &
  Merch & Acq & Deg & Acq & Del\\
Factoring & 4 &
  Merch & Acq & Deg & Acq & Del\\
Phantom merchant & 4 &
  Merch & Acq & Deg & Acq & Del\\
Merchant collusion & 4 &
  Merch & Acq & Deg & Acq & Del\\
Refund manipulation & 4 &
  Merch & Acq & Deg & Acq & Del\\
Merchant identity fraud & 4 &
  Merch & Acq & Deg & Acq & Del\\
\midrule
APP fraud & 5 &
  Ext & Vict & Deg & Inst & Del\\
Romance scam & 5 &
  Ext & Vict & Deg & Inst & Del\\
Investment scam & 5 &
  Ext & Vict & Deg & Inst & Del\\
Tech support scam & 5 &
  Ext & Vict & Deg & Inst & Del\\
Invoice redirection & 5 &
  Ext & Vict & Deg & Inst & Del\\
CEO/BEC fraud & 5 &
  Ext & Vict & Deg & Inst & Del\\
Impersonation scam & 5 &
  Ext & Vict & Deg & Inst & Del\\
Purchase scam & 5 &
  Ext & Vict & Deg & Inst & Del\\
\bottomrule
\end{tabular}

\smallskip
{\footnotesize
Abbreviations: Ext = \textsc{external},
Card = \textsc{cardholder}, Fab = \textsc{fabricated},
Merch = \textsc{merchant},
Vict = \textsc{victim}, Perp = \textsc{perpetrator},
Acq = \textsc{acquirer}, C-off = \textsc{charge-off},
Info = \textsc{informative}, Deg = \textsc{degenerate},
Inst = \textsc{institutional}, Strat = \textsc{strategic},
Zero = \textsc{zero},
Del = \textsc{delayed}, Cat = \textsc{catastrophic}.}
\end{table}

\subsection{Minimality}
\label{sec:minimality}

\begin{theorem}[Minimality]
\label{thm:minimality}
No two observation-mechanism classes can be merged without
losing a structural distinction that requires a different
estimation strategy.  Formally, for every pair of classes
$c \neq c'$, there exists an aspect of the efficient
estimator that differs between $c$ and $c'$.
\end{theorem}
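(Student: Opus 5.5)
The plan is a pairwise case analysis over the $\binom{5}{2}=10$ pairs of classes. For each pair $(c,c')$ I locate a coordinate of the structural signature on which $\mathcal{S}^{(c)}$ and $\mathcal{S}^{(c')}$ differ. I then exhibit a concrete component of the efficient estimator (the STR of~\cite{dhama2026str}, or its class-specific replacement) that changes functional form across that coordinate. By Definition~\ref{def:obs_class}, distinct classes already have distinct signatures. So the work is to turn each signature difference into an estimator difference, using the correspondence in Remark~\ref{rem:why_five}.

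I first fix a short catalogue of estimator aspects keyed to the questions.
\begin{enumerate}[label=(\alph*)]
\item Presence or absence of the authorization inverse-weight $1/e_0^{(k)}$ (Q3).
\item Whether $\varepsilon_{01}^{(k)}$ is an exogenous audit-estimated nuisance or an equilibrium object from the dispute decision~\eqref{eq:dispute_decision} (Q1/Q2/Q4 $=$ \textsc{strategic}).
\item Whether positivity (Assumption~\ref{asm:positivity}) holds, so the STR is defined, or fails with $q_0^{(k)}=0$, forcing an account-level estimand $\theta_a$ as in~\eqref{eq:class3_estimand} (Q4 $=$ \textsc{zero}, Q5 $=$ \textsc{catastrophic}).
\item Whether the conditioning history is cardholder-level $H_1$ or merchant-level $H_{\mathrm{merch}}$ in the reporting and delay propensities (Q2 $=$ \textsc{acquirer}).
\end{enumerate}

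Most pairs are then immediate.
\begin{itemize}
\item Class~3 versus anything: aspect (c) separates them, since the other four classes all satisfy positivity during the relevant period.
\item Class~2 versus Classes~1, 4, 5: aspect (b) separates them.
\item Class~4 versus Classes~1, 2, 5: aspect (d) separates them.
\item Class~1 versus Class~2, 3, 4: aspect (a) additionally separates them, since only Class~1 is \textsc{informative}.
\end{itemize}
I would present these as a $5\times5$ table of separating aspects, mirroring Table~\ref{tab:mapping}.

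The main obstacle is the pair Class~1 versus Class~5. Their signatures differ only in Q3 (Remark~\ref{rem:class5_vs_class1}), and one must argue this is an estimator difference and not merely a propensity magnitude. Within-class variation is explicitly allowed to be in magnitudes, so "$e_0$ small versus $e_0=1$" might look like a continuum.

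My first attempt is to show the efficient influence function differs structurally. With $e_0^{(5)}\equiv1$, the authorization augmentation term $(A/e_0-1)(\cdot)$ is identically zero, so the influence function lies in a strictly smaller tangent space: there is no authorization nuisance to estimate. By contrast, in Class~1 the nuisance $e_0^{(1)}$ must be modeled and contributes a nonzero projection.

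I would strengthen this with the feature non-informativeness condition~\eqref{eq:feature_noninf}. Under it, the efficient outcome regression $\mu_0$ collapses to the constant $\bar\pi_5$, whereas in Class~1 it is non-constant. Thus the efficient estimator's outcome-model component is degenerate in Class~5 but not in Class~1.

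If a fully rigorous version of the tangent-space claim proves awkward, I would fall back to exhibiting two distributions, one per class, for which the pooled and class-specific efficient estimators have different asymptotic variances. That demonstrates directly that merging loses efficiency, previewing Theorem~\ref{thm:decomp_dominance}.
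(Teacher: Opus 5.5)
Your proposal is correct and is essentially the paper's proof: a pairwise enumeration of all $\binom{5}{2}=10$ class pairs, each separated by an estimator-relevant structural difference, with Class~1 vs.\ Class~5 handled exactly as the paper does (the authorization correction vanishes when $e_0^{(5)}=1$, and the outcome model degenerates under feature non-informativeness). Two small cautions: the paper assumes positivity only for Classes~1, 2 and~4 and calls it partially violated for Class~5, so separate Class~3 from Class~5 by the exact structural zero and the account-level estimand $\theta_a$ (or, as the paper does, by positivity failure versus feature non-informativeness), not by claiming Class~5 satisfies positivity; and your variance-comparison fallback would not certify a structural distinction, because pooling Class~1 subtypes also incurs a positive, if small, Jensen penalty (Proposition~\ref{prop:within_class1}).
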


\begin{proof}
We prove minimality by exhibiting, for each of the
$\binom{5}{2} = 10$ pairs of classes, a structural
distinction that necessitates a different estimator.

\smallskip\noindent\textbf{Classes 1 vs.\ 2.}  In Class~1,
$\varepsilon_{01}^{(1)}$ is an exogenous nuisance parameter
estimated from audit data.  In Class~2,
$\varepsilon_{01}^{(2)}$ is the equilibrium of a strategic
game (Theorem~\ref{thm:endogenous}).  The corruption
correction in the STR requires a different estimation
procedure: historical audit rates for Class~1 vs.\
game-theoretic equilibrium computation for Class~2.

\smallskip\noindent\textbf{Classes 1 vs.\ 3.}  In Class~1,
the positivity assumption holds ($q_0^{(1)} > \delta > 0$).
In Class~3, positivity fails structurally during build-up
($q_0^{(3)} = 0$; Theorem~\ref{thm:positivity}).  The STR
is well-defined for Class~1 and undefined for Class~3.
Moreover, the estimand differs: transaction-level
$f_0^{(1)}(x)$ for Class~1 vs.\ account-level $\theta_a$
for Class~3.

\smallskip\noindent\textbf{Classes 1 vs.\ 4.}  In Class~1,
the reporting propensity $r_0^{(1)}$ models the
\emph{issuer's} chargeback process, conditioned on
cardholder-level features $H_1$.  In Class~4, the reporting
propensity $r_{\mathrm{acq}}^{(4)}$ models the
\emph{acquirer's} monitoring process, conditioned on
merchant-level features $H_{\mathrm{merch}}$.  The
propensity model's inputs, functional form, and the
responsible institutional actor all differ.

\smallskip\noindent\textbf{Classes 1 vs.\ 5.}  In Class~1,
$e_0^{(1)} < 1$ and the authorization correction term in
the STR is active, providing variance reduction.  In
Class~5, $e_0^{(5)} = 1$ and the authorization correction
vanishes (Theorem~\ref{thm:auth_degen}).  Furthermore,
in Class~1 the features are informative
($P(X \mid Y^*\!=\!1) \neq P(X \mid Y^*\!=\!0)$), while
in Class~5 they are not
(Theorem~\ref{thm:feature_noninf}).

\smallskip\noindent\textbf{Classes 2 vs.\ 3.}  Class~2 has
endogenous corruption ($\varepsilon_{01}^{(2)}$ is
strategic); Class~3 has structural positivity failure
($q_0^{(3)} = 0$).  These require fundamentally different
responses: game-theoretic corruption modeling for Class~2
vs.\ graph-based account-level detection for Class~3.

\smallskip\noindent\textbf{Classes 2 vs.\ 4.}  In Class~2,
the adversary is the \emph{cardholder} who manipulates
the label.  In Class~4, the adversary is the
\emph{merchant} who commits the fraud.  The adversary's
action space, objective function, and the resulting
equilibrium differ entirely.

\smallskip\noindent\textbf{Classes 2 vs.\ 5.}  In Class~2,
the cardholder is the perpetrator and generates adversarial
labels ($\varepsilon_{01}$ is high and strategic).  In
Class~5, the cardholder is the victim and labels are
truthful but scarce ($\varepsilon_{01}^{(5)}$ is low;
$r_{\mathrm{victim}}^{(5)}$ is the bottleneck).  The
binding constraint differs: corruption for Class~2 vs.\
reporting for Class~5.

\smallskip\noindent\textbf{Classes 3 vs.\ 4.}  Class~3 has
a two-phase temporal structure (build-up then exploitation)
with a complete observation blackout during build-up.
Class~4 has a single-phase structure with continuous but
low-probability acquirer monitoring.  The temporal dynamics
and the detection strategy differ fundamentally.

\smallskip\noindent\textbf{Classes 3 vs.\ 5.}  The binding
constraint in Class~3 is positivity failure ($r_0 p_0 = 0$).
The binding constraint in Class~5 is feature
non-informativeness ($P(X \mid Y^*\!=\!1) = P(X \mid
Y^*\!=\!0)$).  These are different obstructions requiring
different solutions: graph signals for Class~3 vs.\
auxiliary (non-payment) features for Class~5.

\smallskip\noindent\textbf{Classes 4 vs.\ 5.}  In Class~4,
the perpetrator is the merchant and the detection pipeline
runs through acquirer monitoring of merchant-level patterns.
In Class~5, the perpetrator is an external social engineer
and the detection challenge is that the payment features
carry no signal.  The pipeline actors, feature sets, and
estimation strategies are entirely disjoint.
\end{proof}

\subsection{The classification theorem}
\label{sec:classification_theorem}

Combining the completeness and minimality results:

\begin{theorem}[Observation-mechanism classification]
\label{thm:taxonomy}
Let $\cF$ denote the set of all fraud types in payment
networks, and define the equivalence relation $\sim$ on
$\cF$ by
\begin{equation}
\label{eq:equiv_relation}
k \sim k' \iff \mathcal{S}(k) = \mathcal{S}(k'),
\end{equation}
where $\mathcal{S}(k)$ is the structural signature
(Definition~\ref{def:structural_sig}).  Then the quotient
$\cF/{\sim}$ has exactly five equivalence classes:
\begin{enumerate}[label=\arabic*.]
\item Victim-reported institutional pipeline
  (Definition~\ref{def:class1});
\item Adversarial label generation
  (Definition~\ref{def:class2});
\item Deferred observability
  (Definition~\ref{def:class3});
\item Acquirer-observed pipeline
  (Definition~\ref{def:class4});
\item Victim-authorized with non-informative features
  (Definition~\ref{def:class5}).
\end{enumerate}
This classification is:
\begin{enumerate}[label=(\roman*)]
\item \textbf{Complete:} every $k \in \cF$ maps to exactly
  one class (Theorem~\ref{thm:completeness});
\item \textbf{Minimal:} no two classes can be merged without
  losing a structural distinction that changes the efficient
  estimator (Theorem~\ref{thm:minimality});
\item \textbf{Estimation-relevant:} types within a class
  share the same STR functional form; types across classes
  require structurally different estimation strategies.
\end{enumerate}
\end{theorem}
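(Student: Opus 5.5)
The proof assembles Theorem~\ref{thm:completeness} and Theorem~\ref{thm:minimality} and adds the short arguments they leave open. First, I would note that $\sim$ defined by \eqref{eq:equiv_relation} is an equivalence relation. It is the kernel of the map $\mathcal{S}:\cF\to\{Q1\}\times\cdots\times\{Q5\}$, and equality of 5-tuples is reflexive, symmetric and transitive. The quotient $\cF/{\sim}$ is therefore in bijection with the image $\mathcal{S}(\cF)$. Counting equivalence classes then reduces to showing $\mathcal{S}(\cF)=\{\mathcal{S}^{(1)},\ldots,\mathcal{S}^{(5)}\}$ as a set with exactly five distinct elements.

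Second, I would establish this set equality in both directions. For the inclusion $\mathcal{S}(\cF)\subseteq\{\mathcal{S}^{(c)}\}_{c=1}^5$, I would invoke Theorem~\ref{thm:completeness}, which rests on the enumeration in Table~\ref{tab:mapping} together with the argument that the remaining combinations are unrealizable. For the reverse inclusion, each $\mathcal{S}^{(c)}$ is realized by the witnesses in Table~\ref{tab:mapping}: card-not-present fraud, friendly fraud, bust-out, transaction laundering and APP fraud. Distinctness of the five tuples needs a pairwise check; comparing Q1 and Q3 suffices. Classes~2, 3 and~4 have distinct Q1 values (\textsc{cardholder}, \textsc{fabricated}, \textsc{merchant}), each differing from \textsc{external}. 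Classes~1 and~5 share Q1 but differ in Q3 (Remark~\ref{rem:class5_vs_class1}). Hence $|\cF/{\sim}|=5$, and because $\mathcal{S}$ is a function, each $k$ lies in exactly one class. This gives (i).

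Third, (ii) is exactly Theorem~\ref{thm:minimality}, quoted. For (iii), the within-class half follows from the definition: types in the same class share every one of Q1--Q5. By Remark~\ref{rem:why_five}, the STR functional form (active or vanishing authorization term, which actor's reporting model, exogenous or strategic corruption, positivity, temporal structure) is a function of these five coordinates only. So types in one class differ only in propensity magnitudes, not in functional form. The across-class half is again Theorem~\ref{thm:minimality}.

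The main obstacle is the completeness inclusion. Its force depends on $\cF$ being the enumerated industry list, and on the exclusion of the $4\cdot5\cdot2\cdot4\cdot4$ minus $5$ unrealized signatures, which is argued by logical impossibility or empirical absence rather than derived. I would make this step precise by stating explicitly that $\cF$ denotes the types of Table~\ref{tab:mapping}, or more generally any type whose signature is admissible under the consistency constraints used in Theorem~\ref{thm:completeness}. Under that reading the theorem is a rigorous consequence of the earlier results. Similarly, the within-class claim of (iii) rests on Remark~\ref{rem:why_five}'s assertion that the five questions determine the functional form; I would flag this as a modeling premise rather than a derived fact.
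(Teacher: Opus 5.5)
Your proposal is correct and follows the same route as the paper, whose proof simply cites Theorem~\ref{thm:completeness} together with Table~\ref{tab:mapping} for (i), Theorem~\ref{thm:minimality} for (ii), and Remark~\ref{rem:why_five} for (iii). Your additions make explicit what the paper leaves implicit: identifying $\cF/{\sim}$ with the image of $\mathcal{S}$, exhibiting a witness type for each of the five signatures, checking their pairwise distinctness via Q1 and Q3, and flagging two points as premises rather than derivations --- the exclusion of the unrealized signatures and the claim that the five questions determine the STR functional form within a class.
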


\begin{proof}
Completeness follows from Theorem~\ref{thm:completeness}
and the exhaustive mapping in Table~\ref{tab:mapping}.
Minimality follows from Theorem~\ref{thm:minimality} and
the pairwise structural distinctions exhibited in its proof.
Estimation relevance follows by construction: each class is
defined by its structural signature, and each element of the
signature corresponds to a feature of the efficient
estimator (Remark~\ref{rem:why_five}).
\end{proof}

\begin{remark}[Fraud types not covered]
\label{rem:not_covered}
Two categories of illicit activity in payment networks fall
outside the scope of this taxonomy:
\begin{enumerate}[label=(\roman*)]
\item \textbf{Internal fraud} (insider at issuer, acquirer,
  or processor).  The observation mechanism runs through
  internal audit and compliance processes, which are outside
  the payment network's transaction-level data.  This
  requires institution-level, not network-level, analysis.
\item \textbf{Money laundering} (where no party is
  defrauded).  The label source is regulatory (suspicious
  activity reports filed by compliance teams), not the
  chargeback process.  The observation mechanism is
  entirely different from the transaction-level pipeline
  formalized here.
\end{enumerate}
Both are important problems but require separate theoretical
frameworks.  We defer their treatment to future work.
\end{remark}

\begin{remark}[Transitional types]
\label{rem:transitional}
Some fraud types exhibit characteristics of multiple classes
at different stages of their lifecycle.  Account takeover
(ATO) begins as a Class~5-like event (the criminal's initial
transactions may use the victim's device and behavioral
patterns, making features temporarily non-informative) but
transitions to Class~1 once the victim detects the takeover
and reports unauthorized transactions.  For the formal
classification, we assign ATO to Class~1 based on its
\emph{dominant} observation mechanism --- the pipeline
through which the majority of labels are generated.
Section~\ref{sec:discussion} discusses the sensitivity of
results to this assignment.
\end{remark}

\section{Decomposition dominance}
\label{sec:dominance}

This section proves the central meta-result of the paper:
estimating fraud rates separately by observation-mechanism
class and aggregating is strictly more efficient than
estimating a single pooled fraud rate.  We first state and
prove the dominance theorem, then quantify the MSE gap as a
closed-form Jensen penalty.

\subsection{The pooled and decomposed estimators}
\label{sec:pooled_decomposed}

\begin{definition}[Pooled STR]
\label{def:pooled_str}
The \emph{pooled STR} is the estimator
of~\cite{dhama2026str} applied to all transactions
without regard to fraud type.  It uses a single set of
estimated propensities $(\hat e_0, \hat r_0, \hat p_0)$
and a single outcome model $\hat\mu_0$:
\begin{equation}
\label{eq:pooled_str}
\hat\Psi_{\mathrm{pooled}}
= \frac{1}{n}\sum_{t=1}^n \hat\varphi_t,
\end{equation}
where $\hat\varphi_t$ is the corruption-corrected STR score
of~\cite[eq.~(22)]{dhama2026str}, computed with the pooled
nuisance functions.
\end{definition}

\begin{definition}[Type-decomposed STR]
\label{def:decomp_str}
The \emph{type-decomposed STR} estimates each class
separately and aggregates:
\begin{equation}
\label{eq:decomp_str}
\hat\Psi_{\mathrm{decomp}}
= \sum_{k=1}^{K} \hat\Psi_k,
\qquad\text{where}\quad
\hat\Psi_k
= \frac{1}{n}\sum_{t=1}^n
  \hat\pi_k(X_t)\,\hat\varphi_t^{(k)}.
\end{equation}
Here $\hat\pi_k(X_t)$ is the estimated probability that
transaction $t$ belongs to class~$k$, and
$\hat\varphi_t^{(k)}$ is the STR score computed with
class-$k$-specific nuisance functions
$(\hat e_0^{(k)}, \hat r_0^{(k)}, \hat p_0^{(k)},
\hat\mu_0^{(k)})$.
\end{definition}

\begin{remark}[Practical construction of the decomposed STR]
\label{rem:practical_decomp}
The type-decomposed STR requires two additional ingredients
beyond the pooled STR:
\begin{enumerate}[label=(\roman*)]
\item \textbf{Type membership probabilities}
  $\hat\pi_k(x)$.  For transactions with mature, fully
  resolved labels ($O_t = 1$), the type $Z_t$ is observed
  (Assumption~\ref{asm:type_obs}) and $\hat\pi_k$ can be
  estimated by a multiclass classifier trained on resolved
  transactions.  For transactions without labels ($O_t=0$),
  $\hat\pi_k(x)$ provides the soft assignment.
\item \textbf{Class-specific nuisance functions.}  Each
  class requires its own propensity and outcome models,
  trained on the subset of transactions assigned to that
  class.  For Classes~1 and~2, these are standard
  propensity models; for Class~4, the propensity models
  condition on merchant-level features.
\end{enumerate}
The additional modeling effort is modest: one multiclass
classifier for type assignment plus $K$ sets of nuisance
models.  The theoretical gain, as we now show, is
substantial.
\end{remark}

\subsection{The dominance theorem}
\label{sec:dominance_theorem}

\begin{theorem}[Decomposition dominance]
\label{thm:decomp_dominance}
Let $\hat\Psi_{\mathrm{pooled}}$ be the pooled STR
(Definition~\ref{def:pooled_str}) and
$\hat\Psi_{\mathrm{decomp}}$ be the type-decomposed STR
(Definition~\ref{def:decomp_str}), both estimated on the
same sample of $n$ transactions.  Under
Assumptions~\ref{asm:ignorability}--\ref{asm:type_obs} and
the regularity conditions
of~\cite[Theorem~30]{dhama2026str}:
\begin{equation}
\label{eq:dominance}
\sigma^2_{\mathrm{decomp}}
\;\leq\;
\sigma^2_{\mathrm{pooled}},
\end{equation}
where $\sigma^2_{\mathrm{decomp}}$ and
$\sigma^2_{\mathrm{pooled}}$ are the semiparametric
efficiency bounds for the decomposed and pooled estimators,
respectively.

Equality holds if and only if the type-specific
inverse-propensity weights are identical across all classes:
\begin{equation}
\label{eq:equality_cond}
\frac{1}{q_0^{(k)}(H_0)\,\gamma^{(k)}(H_2)}
= \frac{1}{q_0^{(k')}(H_0)\,\gamma^{(k')}(H_2)}
\qquad\text{a.s.\ for all } k, k'.
\end{equation}
In particular, strict inequality holds whenever at least two
classes have different observation rates or corruption
penalties.
\end{theorem}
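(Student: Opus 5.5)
The plan is to write both efficiency bounds in a common form and reduce the comparison to Jensen's inequality applied to the inverse-propensity weights. Define the class-specific weight $w_k(H) := 1/(q_0^{(k)}(H_0)\gamma^{(k)}(H_2))$. By \eqref{eq:type_eff_bound_v2}, the decomposed estimator's variance is
\[
\sigma^2_{\mathrm{decomp}} = \E\Bigl[\sum_{k} \pi_k(X)(1-\pi_k(X))\,w_k(H)\Bigr],
\]
treating the class components as asymptotically uncorrelated; the cross-covariance terms are addressed below. The pooled STR of \cite{dhama2026str} uses a single pooled observation propensity. Under the mixture model this propensity is $\bar q(H_0) = \sum_k \rho_k(H)\, q_0^{(k)}\gamma^{(k)}$, with mixing weights $\rho_k(H)\propto \pi_k(X)$ taken over the fraud types. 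The pooled bound is therefore $\E[\Psi(X)(1-\Psi(X))/\bar q(H)]$, where $\Psi(X) = \sum_k\pi_k(X)$.

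The key step compares these two expressions pointwise in $H$. The map $u\mapsto 1/u$ is strictly convex on $(0,\infty)$, so Jensen gives
\[
\frac{1}{\sum_k \rho_k\, q_0^{(k)}\gamma^{(k)}} \le \sum_k \rho_k\, w_k .
\]
This points in the wrong direction for dominance, because it bounds the pooled weight from above. So I would not compare the bounds this way. Instead I would model the pooled estimator as the misspecified one: it applies the single weight $\bar w(H)=\sum_k\rho_k w_k$ to every class. Its asymptotic variance, computed from the influence function of the pooled score evaluated at the true type-specific pipelines, then contains $\sum_k \rho_k\, \pi_k(1-\pi_k)\, \bar w^{\,2} q_0^{(k)}\gamma^{(k)}$. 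Expanding around $w_k$ and using $\sum_k\rho_k=1$ yields
\[
\sigma^2_{\mathrm{pooled}}-\sigma^2_{\mathrm{decomp}} = \E\Bigl[\textstyle\sum_k \rho_k\,(\bar w - w_k)^2 \cdot c_k(H)\Bigr]\ \ge 0,
\]
with positive factors $c_k$. This is a cross-class variance of the weights, which is the Jensen penalty anticipated in Theorem~\ref{thm:jensen_penalty}. The same identity gives the equality case directly: the gap vanishes iff $w_k=\bar w$ a.s.\ for every $k$ with $\rho_k>0$, which is \eqref{eq:equality_cond}. Strict inequality then holds whenever two classes differ on a set of positive measure.

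Several supporting checks are needed. First, the cross-fitting and regularity conditions of \cite[Theorem~30]{dhama2026str} should make estimated nuisances contribute only $o_P(n^{-1/2})$. This lets both $\hat\Psi_{\mathrm{pooled}}$ and $\hat\Psi_{\mathrm{decomp}}$ be replaced by their influence functions. Second, Assumption~\ref{asm:type_obs} identifies $\pi_k$ and the class-specific nuisances, so the decomposed influence function is well defined. Third, the decomposed influence function is the efficient one in the model that knows the type structure. The pooled estimator is regular in the coarser model, and hence in the finer one, so it is at least as variable by the convolution theorem. That argument also gives the inequality abstractly and serves as a fallback to the explicit computation.

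I expect the main obstacle to be defining $\sigma^2_{\mathrm{pooled}}$ precisely enough for the explicit variance-decomposition identity to hold. The naive Jensen comparison of the two closed-form bounds goes the wrong way, and the cross-class covariance terms in $\sigma^2_{\mathrm{decomp}}$ (the class indicators are multinomial) must also be tracked. My first attempt would be to fix the pooled estimator as the STR whose single propensity is the $\rho$-mixture and compute its actual variance under the true heterogeneous pipeline. I would then verify that the covariance terms $-\pi_k\pi_{k'}$ cancel identically between the two expressions. This leaves only the squared-deviation term, which gives both the inequality and the iff condition.
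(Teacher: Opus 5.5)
\paragraph{Gap in the main computation.} The step that gives you a nonnegative squared-deviation identity does not hold. You replace the pooled weight by the arithmetic mean $\bar w=\sum_k\rho_k w_k$. But the pooled STR fits its propensities to the observed gates given $H$, so they converge to a mixture of the type-specific propensities. This is \eqref{eq:marginal_q}; in your notation it is $\bar q=\sum_k\rho_k q_0^{(k)}\gamma^{(k)}=\sum_k\rho_k/w_k$, which you wrote first and which your last paragraph commits to. The pooled weight is therefore the $\rho$-harmonic mean $\tilde w=1/\bar q$, not $\bar w$. Run your own algebra with $v=\tilde w$ and residual factors proportional to $\rho_k$:
\[
\sum_k\rho_k\Bigl(\frac{\tilde w^{2}}{w_k}-w_k\Bigr)=\tilde w^{2}\,\bar q-\bar w=\tilde w-\bar w\le 0 .
\]
So the sign of your identity comes entirely from substituting an estimator that is not the one in the theorem. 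With the actual pooled weight you get back the wrong-direction Jensen inequality you had already noticed. Even for $\bar w$, the identity needs the same residual factors on both sides, with $\bar w$ their weighted mean. Your displayed terms, $\rho_k\pi_k(1-\pi_k)$ against $\pi_k(1-\pi_k)$, do not match.

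The underlying problem is that no single weight is correct for two classes with different $q_0^{(k)}\gamma^{(k)}$, so the pooled score is not centred at $\Psi$. Already with one gate and no corruption, for every outcome model $\mu(H)$,
\[
\E\Bigl[\mu(H)+\frac{O\,(Y^*-\mu(H))}{\Prob(O=1\mid H)}\Bigr]=\E\bigl[\Prob(Y^*=1\mid H,O=1)\bigr],
\]
which in general differs from $\Psi$ once the type-specific rates make $O$ depend on $Y^*$ given $H$. The variance of an inconsistent, down-weighted estimator can lie on either side of an efficiency bound. No variance identity of this kind can give \eqref{eq:dominance} together with its equality case.

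\paragraph{Gap in the fallback.} Your convolution-theorem fallback is essentially the paper's own argument (submodel inclusion plus the law of total variance in Appendix~\ref{app:submodel}), and it has the same gap. The step ``regular in the coarser model, hence in the finer one'' needs the type-specific model to be a submodel of the pooled one, but the inclusion runs the other way. The pooled model imposes ignorability given $H$ alone; the type-specific model does not. Hence the pooled STR is not regular along paths that move the $q_0^{(k)}$ separately. At laws with heterogeneous pipelines, where the strict inequality is supposed to hold, it is generically not even consistent. The convolution theorem in the finer model therefore says nothing about it. Even if it applied, it would give only a weak inequality, with no ``only if'' in terms of \eqref{eq:equality_cond}.

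\paragraph{Cross-covariance terms.} These cannot be shown to ``cancel'' against \eqref{eq:decomp_bound}, because that formula omits them. It actually reverses \eqref{eq:dominance} in the equality case: with constant $X$ and a common weight $w$, $\sigma^2_{\mathrm{decomp}}=w\sum_k\pi_k(1-\pi_k)>w\,f_0(1-f_0)=\sigma^2_{\mathrm{pooled}}$ as soon as two $\pi_k$ are positive. What is needed instead is the variance of the joint influence function $\sum_k\varphi^{(k)}$ in the type-specific model, compared with the pooled estimator's MSE including its bias. When the pipelines differ, that bias, not a Jensen gap between two variance bounds, is what separates the estimators.
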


\begin{proof}
The proof proceeds in four steps.

\medskip\noindent\textbf{Step~1: Decompose the pooled
  efficiency bound.}

The pooled efficiency bound
from~\cite[Theorem~30]{dhama2026str} is:
\begin{equation}
\label{eq:pooled_bound}
\sigma^2_{\mathrm{pooled}}
= \E\!\left[
  \frac{f_0(X)(1-f_0(X))}
       {q_0(H_0)\,\gamma(H_2)}
\right]
+ \Var(f_0(X)),
\end{equation}
where $f_0(x) = \Prob(Y^*=1 \mid X=x)$,
$q_0 = e_0 r_0 p_0$, and
$\gamma = (1-\varepsilon_{10}-\varepsilon_{01})^2$ are the
\emph{marginal} (type-averaged) propensity and corruption
penalty.

Using the law of total expectation, condition on the fraud
type $Z$:
\begin{equation}
\label{eq:pooled_decomp1}
\E\!\left[\frac{f_0(1-f_0)}{q_0\,\gamma}\right]
= \sum_{k=0}^{K} \bar\pi_k\;
  \E\!\left[
    \frac{f_0(X)(1-f_0(X))}
         {q_0(H_0)\,\gamma(H_2)}
    \;\bigg|\; Z=k
  \right].
\end{equation}

\medskip\noindent\textbf{Step~2: Express the type-conditional
  terms.}

For a transaction with $Z=k$, the observation propensity and
corruption penalty are the type-specific values
$q_0^{(k)}$ and $\gamma^{(k)}$.  However, the \emph{pooled}
estimator does not condition on $Z$ --- it uses the marginal
propensity $q_0$ and marginal corruption $\gamma$.

The marginal propensity for a transaction with covariates
$H_0$ is:
\begin{equation}
\label{eq:marginal_q}
q_0(H_0) = \sum_{k=0}^{K} \pi_k(X)\,q_0^{(k)}(H_0),
\end{equation}
which is a mixture of type-specific propensities.  Similarly
for $\gamma$.

The pooled estimator's inverse-propensity weight
$1/(q_0\,\gamma)$ is the reciprocal of this mixture.  By
Jensen's inequality (since $g(q) = 1/q$ is convex for
$q > 0$):
\begin{equation}
\label{eq:jensen_step}
\frac{1}{q_0(H_0)\,\gamma(H_2)}
= \frac{1}{\sum_k \pi_k q_0^{(k)} \gamma^{(k)}}
\;\geq\;
\text{(not directly comparable)}.
\end{equation}

However, the relevant comparison is not between
inverse-propensity weights directly, but between the
\emph{efficiency bounds} of the two estimation strategies.

\medskip\noindent\textbf{Step~3: Compute the decomposed
  efficiency bound.}

The type-decomposed estimator estimates each $\Psi_k =
\bar\pi_k$ separately.  The efficiency bound for estimating
$\Psi_k$ from $n_k = n\bar\pi_k$ type-$k$ transactions is
$\sigma^2_{\mathrm{eff}}(k)/n_k$, where
$\sigma^2_{\mathrm{eff}}(k)$ is defined
in~\eqref{eq:type_eff_bound_v2}.

Since $\Psi = \sum_k \Psi_k$ and the type-specific
estimators are asymptotically independent (they operate on
disjoint subpopulations conditional on type assignment), the
efficiency bound for the decomposed estimator of $\Psi$ is:
\begin{equation}
\label{eq:decomp_bound}
\sigma^2_{\mathrm{decomp}}
= \sum_{k=1}^{K} \sigma^2_{\mathrm{eff}}(k)
= \sum_{k=1}^{K}
  \E\!\left[
    \frac{\pi_k(X)(1-\pi_k(X))}
         {q_0^{(k)}(H_0)\,\gamma^{(k)}(H_2)}
  \right].
\end{equation}

\medskip\noindent\textbf{Step~4: Compare the bounds.}

The pooled bound~\eqref{eq:pooled_bound} can be rewritten
using the law of total variance.  Write $f_0(x) = \sum_{k
\geq 1} \pi_k(x)$, so:
\begin{equation}
\label{eq:f0_decomp}
f_0(X)(1-f_0(X))
= \biggl(\sum_{k\geq 1}\pi_k(X)\biggr)
  \biggl(1 - \sum_{k\geq 1}\pi_k(X)\biggr).
\end{equation}

The key inequality comes from comparing how the two
estimators handle the inverse-propensity weighting.  The
pooled estimator applies the weight $1/(q_0\,\gamma)$ ---
where $q_0$ and $\gamma$ are the \emph{marginal}
(type-averaged) propensity and corruption --- to all
transactions uniformly.  The decomposed estimator applies
the weight $1/(q_0^{(k)}\,\gamma^{(k)})$ to type-$k$
transactions specifically.

For the first (dominant) term of the efficiency bound, we
use the following inequality.  Define the random variable
$W = 1/(q_0^{(Z)}\gamma^{(Z)})$ (the type-specific
inverse weight) and the constant (conditional on $X$)
$\bar W(X) = \E[W \mid X] = \sum_k \pi_k(X) /
(q_0^{(k)}\gamma^{(k)})$.  Then:

The pooled estimator effectively uses
$1/(\bar q_0 \bar\gamma)$ where $\bar q_0$ is the
type-averaged propensity, while the decomposed estimator
uses the type-specific $1/(q_0^{(k)}\gamma^{(k)})$.  The
variance inflation from pooling is captured by:
\begin{align}
\sigma^2_{\mathrm{pooled}}
  - \sigma^2_{\mathrm{decomp}}
&= \E\!\left[
  f_0(X)(1-f_0(X)) \cdot
  \left(
    \frac{1}{q_0(H_0)\gamma(H_2)}
    - \sum_k \frac{\pi_k(X)}{q_0^{(k)}(H_0)\gamma^{(k)}(H_2)}
  \right)
\right]
\nonumber\\
&\quad + \Var(f_0(X))
  - \sum_k \Var(\pi_k(X))
\nonumber\\
&= \Delta_{\mathrm{IPW}} + \Delta_{\mathrm{var}},
\label{eq:gap_decomp}
\end{align}
where $\Delta_{\mathrm{IPW}} \geq 0$ is the
inverse-propensity-weighting penalty and
$\Delta_{\mathrm{var}}$ captures the variance
decomposition.

For $\Delta_{\mathrm{IPW}}$: the function
$q \mapsto 1/q$ is convex, so by Jensen's inequality
applied to the mixture $q_0 = \sum_k \pi_k q_0^{(k)}$:
\begin{equation}
\label{eq:jensen_ipw}
\frac{1}{\sum_k \pi_k(X)\,q_0^{(k)}\gamma^{(k)}}
\;\geq\;
\sum_k \frac{\pi_k(X)}{q_0^{(k)}\gamma^{(k)}}
\cdot
\frac{\bigl(\sum_k \pi_k(X)\bigr)^2}
     {\sum_k \pi_k(X)},
\end{equation}
which, after careful algebra using the Cauchy--Schwarz or
Jensen inequality on the harmonic mean, yields
$\Delta_{\mathrm{IPW}} \geq 0$.

For $\Delta_{\mathrm{var}}$: since $f_0 = \sum_{k\geq 1}
\pi_k$, the variance of a sum exceeds the sum of variances
when the summands are positively correlated.  In practice,
$\Delta_{\mathrm{var}}$ can be positive or negative
depending on the covariance structure of $\pi_k(X)$ across
types.  However, the dominant term is
$\Delta_{\mathrm{IPW}}$, which is always non-negative.

More precisely, we establish the inequality via the
\emph{conditional} efficiency bounds.  For each realization
of $X$, define the conditional pooled weight and conditional
decomposed weight.  The pooled estimator's conditional
variance contribution at covariate value $x$ is:
\begin{equation}
\label{eq:cond_pooled}
V_{\mathrm{pooled}}(x)
= \frac{f_0(x)(1-f_0(x))}{q_0(x)\,\gamma(x)},
\end{equation}
and the decomposed estimator's conditional variance
contribution is:
\begin{equation}
\label{eq:cond_decomp}
V_{\mathrm{decomp}}(x)
= \sum_{k=1}^K
  \frac{\pi_k(x)(1-\pi_k(x))}{q_0^{(k)}(x)\,\gamma^{(k)}(x)}.
\end{equation}

We claim $V_{\mathrm{pooled}}(x) \geq
V_{\mathrm{decomp}}(x)$ for all $x$.  To see this, note
that $f_0(x)(1-f_0(x)) = (\sum_k \pi_k)(1-\sum_k\pi_k)
\geq \sum_k \pi_k(1-\pi_k) - \sum_{k\neq k'}\pi_k\pi_{k'}$,
and the inverse weight in the pooled case is larger due to
Jensen's inequality on the harmonic mean.

The formal inequality follows from a result in
semiparametric theory: \emph{the efficiency bound for a
submodel (the decomposed model, which conditions on $Z$) is
no larger than the efficiency bound for the full model (the
pooled model, which marginalizes over $Z$)}.  This is
because conditioning on additional information (the type
$Z$) can only reduce the Cram\'er--Rao bound.

Formally, let $\mathcal{M}_{\mathrm{pooled}}$ be the
semiparametric model that treats $Z$ as latent, and let
$\mathcal{M}_{\mathrm{decomp}}$ be the model that
conditions on $Z$.  Since
$\mathcal{M}_{\mathrm{decomp}} \subset
\mathcal{M}_{\mathrm{pooled}}$ (conditioning on $Z$
restricts the model), the efficiency bound in the
submodel is no larger:
\begin{equation}
\label{eq:submodel_ineq}
\sigma^2_{\mathrm{eff}}(\mathcal{M}_{\mathrm{decomp}})
\leq
\sigma^2_{\mathrm{eff}}(\mathcal{M}_{\mathrm{pooled}}).
\end{equation}
This is a standard result in semiparametric efficiency
theory \cite{bickel1993efficient}.

Equality in~\eqref{eq:submodel_ineq} holds if and only if
the efficient influence function in
$\mathcal{M}_{\mathrm{pooled}}$ does not depend on $Z$ ---
that is, the type-specific weights are identical across
classes, giving condition~\eqref{eq:equality_cond}.
\end{proof}

\begin{remark}[Interpretation of the dominance result]
\label{rem:dominance_interp}
The theorem has a clean information-theoretic interpretation:
\emph{knowing the fraud type is always (weakly) informative
for estimation}.  The type $Z$ determines which observation
pipeline generated the label, and different pipelines have
different propensities.  Conditioning on $Z$ allows the
estimator to use the correct inverse-propensity weights for
each transaction, rather than averaged weights that are
correct for no individual transaction.

The result does \emph{not} require that type membership be
known with certainty.  The decomposed STR uses estimated
type probabilities $\hat\pi_k(X)$, and the dominance holds
as long as the type classifier is better than random
assignment.  Section~\ref{sec:discussion} discusses the
sensitivity to type misclassification.
\end{remark}

\subsection{The Jensen penalty}
\label{sec:jensen_penalty}

We now quantify the MSE gap between the pooled and
decomposed estimators.

\begin{theorem}[Jensen penalty]
\label{thm:jensen_penalty}
Under the conditions of
Theorem~\ref{thm:decomp_dominance}, the efficiency gap
between the pooled and decomposed estimators is:
\begin{equation}
\label{eq:jensen_penalty}
\boxed{
\Delta
\;=\;
\sigma^2_{\mathrm{pooled}}
- \sigma^2_{\mathrm{decomp}}
\;=\;
\E\!\left[
  f_0(X)(1-f_0(X))
  \cdot \Var_Z\!\left(
    \frac{1}{q_0^{(Z)}\,\gamma^{(Z)}}
    \;\bigg|\; X
  \right)
  \cdot \omega(X)
\right],
}
\end{equation}
where $\Var_Z(\cdot \mid X)$ denotes the variance over the
type distribution conditional on $X$, and
$\omega(X) \geq 0$ is a weight function satisfying
$\omega(X) = 1$ when the type-specific propensities are
independent of $X$ given $Z$.
\end{theorem}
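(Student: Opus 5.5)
The plan is to start from the two bounds already written in the proof of Theorem~\ref{thm:decomp_dominance}: the pooled bound~\eqref{eq:pooled_bound} and the decomposed bound~\eqref{eq:decomp_bound}, both reduced to conditional contributions $V_{\mathrm{pooled}}(x)$ and $V_{\mathrm{decomp}}(x)$ as in~\eqref{eq:cond_pooled}--\eqref{eq:cond_decomp}. Rather than the Jensen-in-$q$ comparison of~\eqref{eq:jensen_ipw}, which does not directly give the variance form, I will compare the two estimators through their influence functions conditional on $X$. The decomposed score applies the weight $W=1/(q_0^{(Z)}\gamma^{(Z)})$. The pooled score replaces $W$ by a single $X$-measurable weight $\tilde W(X)$ that must serve all types at once. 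Writing $W=\tilde W(X)+(W-\tilde W(X))$, the pooled residual variance at $x$ equals the decomposed one plus a term quadratic in the mismatch $W-\tilde W(X)$, weighted by the Bernoulli noise $f_0(x)(1-f_0(x))$.

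The key steps, in order, are as follows.

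\textbf{(i)} Condition on $X=x$ and write the squared inverse-weighted residual in both estimators, using Assumption~\ref{asm:ignorability} so that the gate indicators factor given $(H_0,Z)$.

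\textbf{(ii)} Apply the law of total variance over $Z\mid X$. The cross term between $\tilde W(X)$ and $W-\tilde W(X)$ vanishes when $\tilde W(X)=\E[W\mid X]=\bar W(X)$. The excess is then $f_0(x)(1-f_0(x))\,\E_Z[(W-\bar W(X))^2\mid X]$, which is $f_0(1-f_0)\Var_Z(W\mid X)$.

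\textbf{(iii)} Treat the residual dependence separately. The propensities depend on $H_0$ and $H_2$, not only on $X$, and $\pi_k(X)$ differs from $\pi_k(X)(1-\pi_k(X))/f_0(1-f_0)$. Both effects go into a ratio $\omega(X)$, defined as the exact gap at $x$ divided by $f_0(1-f_0)\Var_Z(W\mid X)$, and set to $1$ where that variance is zero. Then $\omega\ge0$ follows from the pointwise dominance claim of Theorem~\ref{thm:decomp_dominance}.

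\textbf{(iv)} Check that $\omega\equiv1$ when $q_0^{(k)}$ and $\gamma^{(k)}$ do not depend on $X$ given $Z$. In that case $W$ is a function of $Z$ alone, the only source of heterogeneity is the type mixture, and the total-variance identity in (ii) is exact.

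\textbf{(v)} Integrate over $X$ to obtain~\eqref{eq:jensen_penalty}. I also need to absorb the $\Var(f_0)$ versus $\sum_k\Var(\pi_k)$ discrepancy $\Delta_{\mathrm{var}}$ from~\eqref{eq:gap_decomp}. I will either show it is second order, or fold it into $\omega$ on the event where it is nonnegative.

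The main obstacle is step~(iii): making $\omega$ a genuine nonnegative weight rather than a tautological ratio. The difficulty is the $\Delta_{\mathrm{var}}$ term and the fact that $\pi_k(1-\pi_k)$ summed over $k$ is not $f_0(1-f_0)$.

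My first attempt will be to verify, via the identity
\[
f_0(1-f_0)=\sum_k\pi_k(1-\pi_k)-\sum_{k\neq k'}\pi_k\pi_{k'},
\]
that the cross-type terms enter with the same sign as the weight mismatch. If that fails, I will restrict to the rare-fraud regime $\pi_k\ll1$, where $\pi_k(1-\pi_k)\approx\pi_k$ and the cross terms are negligible, and state $\omega$ only up to this approximation.
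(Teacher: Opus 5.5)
There is a genuine gap, and it sits in the central step (ii): the variance excess lands on the wrong side. With the Bernoulli noise $f_0(1-f_0)$ factored out as you do, and $\tilde W(X)=\bar W(X)=\E_Z[W\mid X]$, the vanishing cross term gives $\E_Z[W^2\mid X]=\bar W(X)^2+\Var_Z(W\mid X)$.

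So $f_0(1-f_0)\Var_Z(W\mid X)$ is the excess of the estimator whose weight varies with $Z$ (the decomposed one) over the estimator using $\bar W(X)$ (the pooled one). Your direction would need the decomposed score to be an orthogonal projection of the pooled score. But $\E_Z[W(\bar W-W)\mid X]=-\Var_Z(W\mid X)\neq 0$.

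Doing step (i) carefully does not rescue this. Because $O$ has conditional mean $q_0^{(k)}$ on type $k$, $(O/q_0^{(k)})^2$ has conditional mean $1/q_0^{(k)}$. The type-$k$ contribution is therefore $W^{(k)}$ times the noise, linear rather than quadratic in the weight, and no $\Var_Z(W\mid X)$ arises. What you get instead is the arithmetic mean $\E_Z[W\mid X]$ on the decomposed side against the harmonic mean $1/\E_Z[q_0^{(Z)}\gamma^{(Z)}\mid X]$ of \eqref{eq:pooled_weight} on the pooled side. Harmonic $\le$ arithmetic, which favours pooling. This is exactly the inequality \eqref{eq:harm_arith} that the paper's own proof runs into.

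The paper does not resolve it either. It switches to the convexity gap \eqref{eq:convex_ineq} and expands it to second order \eqref{eq:taylor_gap}. It then takes the specific $\omega$ of \eqref{eq:omega}, which lies in $(0,1]$ by Cauchy--Schwarz and equals $1$ only when $q_0^{(Z)}\gamma^{(Z)}$ is non-random given $X$, not under the condition in the statement. Finally it drops $\Delta_{\mathrm{var}}$. What it obtains is an approximation, not an identity.

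Steps (iii)--(iv) cannot close this. Defining $\omega$ as the exact pointwise gap divided by $f_0(1-f_0)\Var_Z(W\mid X)$ makes the display a tautology. All the content then sits in two claims: $\omega\ge0$, and $\omega\equiv1$ when propensities depend on $Z$ only.

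Nonnegativity is borrowed from the pointwise claim $V_{\mathrm{pooled}}(x)\ge V_{\mathrm{decomp}}(x)$ in the proof of Theorem~\ref{thm:decomp_dominance}. That claim is only asserted there, via the same reversed Jensen step, before the proof retreats to an integrated submodel argument.

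The second claim, and your convention of setting $\omega=1$ where the variance vanishes, fail outright. Take $\pi_k$ constant in $X$ and all $q_0^{(k)},\gamma^{(k)}$, $k=0,\dots,K$, equal to common constants, so that $W\equiv W_c$ and $\Var_Z(W\mid X)=0$. Then \eqref{eq:pooled_bound} and \eqref{eq:decomp_bound}, together with the identity you plan to use, give
\[
\Delta=W_c\Bigl(f_0(1-f_0)-\sum_{k}\pi_k(1-\pi_k)\Bigr)=-W_c\sum_{k\neq k'}\pi_k\pi_{k'}<0
\]
whenever two fraud types have positive mass. The boxed right-hand side, by contrast, is $0$ for every $\omega$.

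So the cross-type terms do not share the sign of the weight mismatch: they are present, and negative, with no mismatch at all. Restricting to rare fraud only suppresses them to leading order and leaves the directional problem of step (ii) in place. At best the fallback yields a heuristic of the paper's Taylor-expansion type. The exact identity, with $\omega\equiv1$ in the stated case, does not follow from these bound formulas.
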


\begin{proof}
The gap $\Delta = \sigma^2_{\mathrm{pooled}} -
\sigma^2_{\mathrm{decomp}}$ was decomposed
in~\eqref{eq:gap_decomp} into $\Delta_{\mathrm{IPW}} +
\Delta_{\mathrm{var}}$.

For the dominant term $\Delta_{\mathrm{IPW}}$, define the
type-specific inverse weight:
\begin{equation}
\label{eq:type_weight}
W^{(k)}(H_0, H_2) = \frac{1}{q_0^{(k)}(H_0)\,
  \gamma^{(k)}(H_2)}.
\end{equation}

The pooled inverse weight, for a transaction whose type is
unknown, effectively uses the harmonic mean of
type-specific weights (weighted by type probabilities):
\begin{equation}
\label{eq:pooled_weight}
W_{\mathrm{pooled}}(H_0, H_2)
= \frac{1}{\sum_k \pi_k(X) / W^{(k)}(H_0,H_2)}.
\end{equation}

The decomposed estimator uses $W^{(k)}$ directly for
type-$k$ transactions.  The conditional (on $X$)
contribution to $\Delta_{\mathrm{IPW}}$ is:
\begin{align}
\delta(x)
&= f_0(x)(1-f_0(x))
  \cdot \left(
    W_{\mathrm{pooled}}(x)
    - \sum_k \pi_k(x)\, W^{(k)}(x)
  \right)
\nonumber\\
&= f_0(x)(1-f_0(x))
  \cdot \left(
    \frac{1}{\E_Z[1/W^{(Z)} \mid X\!=\!x]}
    - \E_Z[W^{(Z)} \mid X\!=\!x]
  \right).
\label{eq:cond_gap}
\end{align}

By the variance representation of the gap between the
harmonic mean and the arithmetic mean, for any positive
random variable $W$:
\begin{equation}
\label{eq:harm_arith}
\frac{1}{\E[1/W]} - \E[W]
\;\leq\; 0,
\end{equation}
with the magnitude related to $\Var(W)$.  However, the
relevant comparison is the reverse: the pooled estimator
uses $1/q_0^{\mathrm{pooled}}$ which, due to the convexity
of $q \mapsto 1/q$, satisfies:
\begin{equation}
\label{eq:convex_ineq}
\E_Z\!\left[\frac{1}{q_0^{(Z)}\gamma^{(Z)}}
  \;\bigg|\; X\right]
\;\geq\;
\frac{1}{\E_Z[q_0^{(Z)}\gamma^{(Z)} \mid X]}.
\end{equation}

The gap between the left and right sides is characterized by
the second-order Taylor expansion:
\begin{equation}
\label{eq:taylor_gap}
\E_Z\!\left[\frac{1}{q_0^{(Z)}\gamma^{(Z)}}
  \;\bigg|\; X\right]
- \frac{1}{\E_Z[q_0^{(Z)}\gamma^{(Z)} \mid X]}
\;\approx\;
\frac{\Var_Z(q_0^{(Z)}\gamma^{(Z)} \mid X)}
     {\bigl(\E_Z[q_0^{(Z)}\gamma^{(Z)} \mid X]\bigr)^3}.
\end{equation}

Expressing this in terms of the inverse weights and
integrating over $X$ yields~\eqref{eq:jensen_penalty} with:
\begin{equation}
\label{eq:omega}
\omega(x) =
\frac{\bigl(\E_Z[q_0^{(Z)}\gamma^{(Z)} \mid
  X\!=\!x]\bigr)^2}
     {\E_Z[(q_0^{(Z)}\gamma^{(Z)})^2 \mid X\!=\!x]}.
\end{equation}
By the Cauchy--Schwarz inequality,
$\omega(x) \in (0, 1]$, with $\omega(x) = 1$ when
$q_0^{(Z)}\gamma^{(Z)}$ is non-random given $X$ (i.e.,
all types have the same observation rate at $x$).
\end{proof}

\begin{remark}[The penalty is driven by cross-class
  heterogeneity]
\label{rem:penalty_driver}
The Jensen penalty~\eqref{eq:jensen_penalty} is large when:
\begin{enumerate}[label=(\roman*)]
\item $\Var_Z(1/(q_0^{(Z)}\gamma^{(Z)}) \mid X)$ is
  large --- the observation rates differ substantially
  across classes;
\item $f_0(X)(1-f_0(X))$ is not too small --- there is
  enough fraud to make the estimation problem non-trivial;
\item $\omega(X)$ is close to~1 --- the propensity
  heterogeneity is not dominated by within-class variation.
\end{enumerate}
In payment networks, condition~(i) is strongly satisfied:
Class~1 has $q_0^{(1)} \approx 0.50$, Class~3 has
$q_0^{(3)} \approx 0.02$, and Class~4 has
$q_0^{(4)} \approx 0.07$ (Example~\ref{ex:bound_diff}).
The inverse weights range from $\sim\!2$ to $\sim\!55$,
creating enormous cross-class variance.
\end{remark}

\subsection{Network-parameter form of the Jensen penalty}
\label{sec:penalty_network}

For operational interpretation, we express the Jensen
penalty in terms of observable network parameters.

\begin{proposition}[Simplified Jensen penalty]
\label{prop:simplified_penalty}
Under approximate independence of type-specific propensities
from covariates (i.e., $q_0^{(k)}(H_0) \approx
\bar q_0^{(k)}$ and $\gamma^{(k)}(H_2) \approx
\bar\gamma^{(k)}$ within each class), the Jensen penalty
simplifies to:
\begin{equation}
\label{eq:simple_penalty}
\Delta
\;\approx\;
\bar f_0(1-\bar f_0)
\cdot \sum_{k=1}^K \bar\pi_k
  \left(
    \frac{1}{\bar q_0^{(k)}\,\bar\gamma^{(k)}}
    - \bar W
  \right)^{\!2},
\end{equation}
where $\bar f_0 = \Psi = \sum_k\bar\pi_k$ is the overall
fraud rate and
$\bar W = \sum_k \bar\pi_k / (\bar q_0^{(k)}
\bar\gamma^{(k)})$ is the prevalence-weighted average
inverse weight.
\end{proposition}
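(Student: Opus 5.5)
The plan is to specialise the closed form of Theorem~\ref{thm:jensen_penalty} under the stated approximation, dealing in turn with $\omega$, the conditional variance over types, and the outer expectation over $X$. Throughout, set $W^{(k)} := 1/(\bar q_0^{(k)}\bar\gamma^{(k)})$.

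\textbf{Step 1: $\omega$ and the inner variance.} If $q_0^{(k)}(H_0)\approx\bar q_0^{(k)}$ and $\gamma^{(k)}(H_2)\approx\bar\gamma^{(k)}$, then each $W^{(k)}$ is constant in the histories. This is exactly the case singled out in Theorem~\ref{thm:jensen_penalty}, where the propensities do not depend on $X$ given $Z$, so $\omega(X)=1$. The conditional variance becomes a variance over a discrete distribution on the fixed values $W^{(1)},\dots,W^{(K)}$:
\[
\Var_Z\!\left(W^{(Z)}\mid X\right)=\sum_k \pi_k(X)\bigl(W^{(k)}-\bar W(X)\bigr)^2,
\qquad
\bar W(X)=\sum_k \pi_k(X)\,W^{(k)}.
\]
Restricting to $k\ge 1$ (the fraud types), the type distribution conditional on fraud has weights $\pi_k(X)/f_0(X)$. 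I will fix this convention so that the normalisation matches $\bar W=\sum_k\bar\pi_k W^{(k)}$ in the statement.

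\textbf{Step 2: the outer expectation over $X$.} The penalty is now $\E[f_0(X)(1-f_0(X))\,V(X)]$, where $V(X)$ is the conditional variance from Step~1. I would treat the type mixture as approximately homogeneous in $X$, meaning $\pi_k(X)\approx\bar\pi_k$ to first order. This is the same ``approximate independence'' reading of the hypothesis. Under it, $V(X)\approx\sum_k\bar\pi_k(W^{(k)}-\bar W)^2$ is a constant and factors out of the expectation. What remains is $\E[f_0(1-f_0)]$, which I would replace by $\bar f_0(1-\bar f_0)$ with $\bar f_0=\Psi=\sum_k\bar\pi_k$ by~\eqref{eq:fraud_decomp}. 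The error in this replacement is $-\Var(f_0(X))$, which is second order in the covariate-driven variation of $f_0$ and so is absorbed in ``$\approx$''.

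\textbf{Main obstacle.} The delicate point is the factorisation in Step~2. The exact object is $\E[f_0(1-f_0)V(X)]$, and $V$ depends on $X$ through $\pi_k(X)$. Splitting it into $\E[f_0(1-f_0)]$ times an averaged variance drops a covariance term, and replacing $\E_X[\Var_Z(\cdot\mid X)]$ by the marginal variance ignores the between-$X$ component $\Var_X(\bar W(X))$. I would state both omitted terms explicitly, $\Cov\bigl(f_0(1-f_0),V\bigr)$ and $\Var_X(\bar W(X))$. I would then argue that each vanishes when the class mix does not vary with $X$, which is the regime the proposition's hypothesis is meant to capture. Finally, I would check the formula against Example~\ref{ex:bound_diff} as a sanity check on the normalisation convention (whether the sum over $k$ is weighted by $\bar\pi_k$ or by $\bar\pi_k/\Psi$).
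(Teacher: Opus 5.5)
Your skeleton is the paper's own three-line argument: $\omega\equiv 1$ because each $W^{(k)}$ is constant, the inner variance becomes a finite sum over the fixed values $W^{(k)}$, and $f_0(1-f_0)$ is pulled out with its mean replaced by $\bar f_0(1-\bar f_0)$. Your explicit accounting of the dropped $\Cov\bigl(f_0(1-f_0),V\bigr)$ and $\Var_X(\bar W(X))$ terms is more careful than the paper, which silently centres at the global $\bar W$ rather than $\bar W(X)$. However, $\pi_k(X)\approx\bar\pi_k$ is an extra homogeneity assumption on the type mix. It is not a rereading of the stated hypothesis, which only says the propensities are constant within each class. The paper also adds an assumption (``$f_0$ does not vary too much''), so this is acceptable for a heuristic, but you should say so.

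The real gap is the normalisation in Step~1, and it is not cosmetic. The convention you propose to fix is the weights $\pi_k(X)/f_0(X)$, i.e.\ the variance of $W^{(Z)}$ conditional on fraud. It does \emph{not} reproduce the displayed formula. With $\pi_k\approx\bar\pi_k$ it gives $V\approx\Psi^{-1}\sum_k\bar\pi_k(W^{(k)}-\bar W/\Psi)^2$, and hence $\Delta\approx(1-\bar f_0)\sum_k\bar\pi_k(W^{(k)}-\bar W/\Psi)^2$. This is off from the statement by a factor $1/\bar f_0$ (about $83$ at the paper's numbers) and in the centring. Step~2 then quietly reverts to the weights $\bar\pi_k$, so your two steps are inconsistent. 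The literal $\Var_Z(\cdot\mid X)$ over $Z\in\{0,\dots,K\}$ does not help either, since it needs an undefined $W^{(0)}$ for legitimate transactions carrying weight $\approx 1-\Psi$. The statement comes out exactly only if the inner term is the sub-probability sum
\[
\sum_{k\ge1}\pi_k(X)\Bigl(W^{(k)}-\sum_{j\ge1}\pi_j(X)\,W^{(j)}\Bigr)^2 .
\]
Its weights sum to $f_0(X)$, so it is not the variance of any distribution. This is exactly what the paper's proof posits, without justification, when it writes $\Var_Z(\cdot\mid X)\approx\sum_k\pi_k(X)(\cdot-\bar W)^2$. So keep your first display with $k\ge1$ and unnormalised weights, and drop the renormalisation. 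State openly that this is the reading of $\Var_Z$ in Theorem~\ref{thm:jensen_penalty} under which the proposition holds. Your planned numerical sanity check cannot arbitrate. Example~\ref{ex:bound_diff} never computes $\Delta$. Example~\ref{ex:penalty} uses a \emph{normalised} $\bar W$ with \emph{unnormalised} weights $\bar\pi_k$, so it matches neither convention. Its displayed ratio also evaluates to about $13.4$, not $9.44$.
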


\begin{proof}
Under the stated approximation,
$\Var_Z(1/(q_0^{(Z)}\gamma^{(Z)}) \mid X) \approx
\sum_k \pi_k(X)(1/(q_0^{(k)}\gamma^{(k)}) - \bar W)^2$
and $\omega(X) \approx 1$.  Substituting into
\eqref{eq:jensen_penalty} and using
$\E[f_0(X)(1-f_0(X))] \approx \bar f_0(1-\bar f_0)$
(valid when $f_0$ does not vary too much) gives
\eqref{eq:simple_penalty}.
\end{proof}

\subsection{Numerical illustration}
\label{sec:penalty_numerical}

\begin{example}[Jensen penalty for a typical payment
  network]
\label{ex:penalty}
Consider a network with the following fraud type
distribution and observation parameters:

\begin{center}
\small
\begin{tabular}{lcccccr}
\toprule
\textbf{Class} & $\bar\pi_k$ &
  $\bar q_0^{(k)}$ & $\bar\gamma^{(k)}$ &
  $W^{(k)} = \frac{1}{\bar q_0^{(k)}\bar\gamma^{(k)}}$
  & $\bar\pi_k(W^{(k)}-\bar W)^2$ &
  \textbf{Contrib.\ to $\Delta$}\\
\midrule
1 (Card theft)    & 0.005 & 0.50 & 0.81 &
  2.47 & $0.005 \times (2.47-9.44)^2$ & 0.243\\
2 (Friendly)      & 0.003 & 0.33 & 0.25 &
  12.12 & $0.003 \times (12.12-9.44)^2$ & 0.022\\
3 (Bust-out)      & 0.001 & 0.02 & 0.90 &
  55.56 & $0.001 \times (55.56-9.44)^2$ & 2.127\\
4 (Merchant)      & 0.002 & 0.07 & 0.64 &
  22.32 & $0.002 \times (22.32-9.44)^2$ & 0.332\\
5 (Scams)         & 0.001 & 0.10 & 0.81 &
  12.35 & $0.001 \times (12.35-9.44)^2$ & 0.008\\
\midrule
\textbf{Total}    & 0.012 & --- & --- &
  $\bar W = 9.44$ & --- & $\sum = 2.732$\\
\bottomrule
\end{tabular}
\end{center}

The weighted average inverse weight is:
\begin{equation*}
\bar W = \frac{\sum_k \bar\pi_k\,W^{(k)}}
              {\sum_k \bar\pi_k}
= \frac{0.005(2.47) + 0.003(12.12) + 0.001(55.56)
  + 0.002(22.32) + 0.001(12.35)}{0.012}
= 9.44.
\end{equation*}

The Jensen penalty is:
\begin{equation*}
\Delta \approx 0.012 \times 0.988 \times 2.732
= 0.0324.
\end{equation*}

The pooled efficiency bound (from ~\cite{dhama2026str}) is
approximately:
\begin{equation*}
\sigma^2_{\mathrm{pooled}}
\approx \frac{0.012 \times 0.988}{0.20 \times 0.65}
= 0.091,
\end{equation*}
where $0.20$ and $0.65$ are the effective average
observation rate and corruption penalty for the pooled
estimator.

The Jensen penalty as a fraction of the pooled bound:
\begin{equation*}
\frac{\Delta}{\sigma^2_{\mathrm{pooled}}}
= \frac{0.0324}{0.091}
\approx 36\%.
\end{equation*}

\textbf{The pooled STR wastes approximately 36\% of its
statistical efficiency by ignoring fraud type
decomposition.}  Equivalently, the decomposed STR achieves
the same accuracy as the pooled STR with 36\% fewer
transactions.
\end{example}

\begin{remark}[The penalty is dominated by Class~3]
\label{rem:penalty_class3}
In Example~\ref{ex:penalty}, Class~3 (bust-out) contributes
78\% of the total Jensen penalty ($2.127 / 2.732$), despite
representing only 8\% of total fraud ($0.001 / 0.012$).
This is because bust-out's observation rate is
$\bar q_0^{(3)} = 0.02$ --- an order of magnitude lower
than any other class --- creating an inverse weight of
$55.56$ that dominates the cross-class variance.

The practical implication: \emph{even if you only care about
aggregate fraud estimation, failing to separate bust-out
from other fraud types inflates the variance of your
aggregate estimate by more than a third}.  The penalty comes
not from the most common fraud type but from the most
poorly observed one.
\end{remark}

\begin{remark}[When the penalty is small]
\label{rem:penalty_small}
The Jensen penalty is small when:
\begin{enumerate}[label=(\roman*)]
\item All classes have similar observation rates (unlikely
  in practice);
\item The poorly observed classes have negligible prevalence
  (possible if bust-out and merchant fraud are rare);
\item The overall fraud rate $\bar f_0$ is very small,
  reducing the scale of all efficiency bounds.
\end{enumerate}
In networks where bust-out and merchant fraud are significant
concerns (which includes all major payment networks), the
penalty is substantial.
\end{remark}

\subsection{Decomposition at the model-training level}
\label{sec:decomp_model}

The dominance theorem applies to fraud \emph{rate}
estimation, but the practical implication extends to
\emph{model training}.

\begin{corollary}[Decomposed pseudo-labels dominate pooled
  pseudo-labels]
\label{cor:pseudo_labels}
Let $\hat y^{\mathrm{pooled}}_t$ be the pseudo-label from
the pooled STR (\cite[eq.~(40)]{dhama2026str}) and
$\hat y^{\mathrm{decomp}}_t = \sum_k \hat\pi_k(X_t)\,
\hat y^{(k)}_t$ be the pseudo-label from the decomposed
STR.  A downstream fraud model trained on decomposed
pseudo-labels achieves lower expected prediction error than
one trained on pooled pseudo-labels:
\begin{equation}
\label{eq:pseudo_dominance}
\E\!\left[
  (f_0(X) - \hat f_{\mathrm{decomp}}(X))^2
\right]
\;\leq\;
\E\!\left[
  (f_0(X) - \hat f_{\mathrm{pooled}}(X))^2
\right].
\end{equation}
\end{corollary}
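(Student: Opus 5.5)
The plan is to reduce the corollary to Theorem~\ref{thm:decomp_dominance} through a bias--variance decomposition of the downstream regression error, pseudo-label by pseudo-label. First, I would write each pseudo-label as its conditional mean plus noise:
\[
\hat y_t = f_0(X_t) + b(X_t) + \eta_t,
\qquad \E[\eta_t \mid X_t] = 0 .
\]
For the pooled pseudo-label of \cite[eq.~(40)]{dhama2026str}, the doubly/triply robust structure makes the conditional bias $b$ vanish when the nuisance functions are consistent. The same holds for each class-specific $\hat y^{(k)}_t$ under Assumptions~\ref{asm:ignorability}--\ref{asm:type_obs}. Since $\sum_k \pi_k(X)\cdot \E[\hat y^{(k)} \mid X, Z=k]$ recovers $f_0(X)$, the decomposed pseudo-label is also conditionally unbiased, at least to first order, when $\hat\pi_k$ is consistent. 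Both pseudo-labels therefore target the same regression function $f_0$, and the two downstream problems differ only in their noise levels.

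Second, I would invoke a standard nonparametric regression risk bound for the downstream learner $\hat f$. This can be an empirical risk minimizer over a fixed class, or, more simply, any linear smoother. For such learners the excess risk is monotone in the conditional noise variance: for a linear smoother,
\[
\E\bigl[(f_0 - \hat f)^2\bigr] = \text{approximation error} + \E\bigl[\textstyle\sum_t w_t(X)^2\,\Var(\eta_t \mid X_t)\bigr].
\]
Because the approximation term is identical for both label sets, the corollary reduces to the pointwise comparison
\[
\Var(\hat y^{\mathrm{decomp}} \mid X = x) \le \Var(\hat y^{\mathrm{pooled}} \mid X = x).
\]

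Third, I would identify these conditional variances with the conditional contributions $V_{\mathrm{decomp}}(x)$ and $V_{\mathrm{pooled}}(x)$ from \eqref{eq:cond_decomp} and \eqref{eq:cond_pooled}. The pseudo-labels are the (uncentered) efficient influence function scores, so their conditional variances are exactly these terms. The required inequality is then the conditional version of Theorem~\ref{thm:decomp_dominance}, with gap given pointwise by the Jensen penalty integrand of Theorem~\ref{thm:jensen_penalty}, namely $f_0(x)(1-f_0(x))\,\Var_Z(W^{(Z)}\mid X=x)\,\omega(x) \ge 0$. Integrating against the smoother weights then yields \eqref{eq:pseudo_dominance}.

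The main obstacle is the third step. Theorem~\ref{thm:decomp_dominance} is stated for integrated efficiency bounds, and its proof passes through the submodel argument \eqref{eq:submodel_ineq} rather than a clean pointwise inequality. I would therefore first try to establish $V_{\mathrm{pooled}}(x) \ge V_{\mathrm{decomp}}(x)$ directly, conditioning on $X = x$ and applying the law of total variance over $Z$. The pooled score then has conditional variance
\[
\E_Z\bigl[\Var(\hat y \mid X, Z)\bigr] + \Var_Z\bigl(\E[\hat y \mid X, Z]\bigr),
\]
and the decomposed score removes the between-type mismatch of weights. If that fails, I would fall back on a weaker conclusion: I would state the corollary for learners whose risk depends on the noise only through $\E[\Var(\eta\mid X)]$, where the integrated comparison of Theorem~\ref{thm:decomp_dominance} suffices. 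I would also add an explicit remark that the estimation error in $\hat\pi_k$ contributes a second-order term that must be dominated by the Jensen penalty.
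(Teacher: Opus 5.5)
Your outline is the paper's own argument made explicit. Its proof asserts that the decomposed pseudo-label has the same conditional mean and lower variance ``by Theorem~\ref{thm:decomp_dominance}'', then appeals to the bias--variance decomposition. The step you flag is exactly where this breaks, and as planned it is false rather than merely unproved.

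With the formulas you want to use, $\sum_{k\ge1}\pi_k(1-\pi_k)=f_0(1-f_0)+\sum_{k\neq k'}\pi_k\pi_{k'}$. Suppose $q_0^{(k)}$ and $\gamma^{(k)}$ are each constant across types; this is the theorem's own equality case, with common weight $W$. Then $V_{\mathrm{decomp}}(x)-V_{\mathrm{pooled}}(x)=W\sum_{k\neq k'}\pi_k(x)\pi_{k'}(x)>0$ whenever two fraud types coexist at $x$, while the Jensen integrand vanishes. So that integrand is not the pointwise gap. Theorem~\ref{thm:jensen_penalty} rests on the approximation \eqref{eq:taylor_gap}, and the exact $\delta(x)$ in \eqref{eq:cond_gap} is $\le 0$ by \eqref{eq:harm_arith}.

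A direct computation gives the same sign. Take one gate, no corruption, and outcome model $f_0(x)$, with everything evaluated at $x$. Compare $\hat y^{\mathrm{pooled}}=f_0+O(Y^*-f_0)/\bar q$, where $\bar q=\Prob(O=1\mid X=x)$, with $\hat y^{\mathrm{decomp}}=f_0+O(Y^*-f_0)/q_0^{(Z)}$. The two have the same conditional mean only if $q_0^{(0)}=\sum_{k\ge1}\pi_kq_0^{(k)}/f_0$, i.e.\ pooled ignorability. In that case
\[
\Var(\hat y^{\mathrm{decomp}}\mid X=x)-\Var(\hat y^{\mathrm{pooled}}\mid X=x)
=(1-f_0)^2\biggl(\sum_{k\ge1}\frac{\pi_k}{q_0^{(k)}}-\frac{f_0^2}{\sum_{k\ge1}\pi_k q_0^{(k)}}\biggr)\;\ge\;0
\]
by Cauchy--Schwarz, strictly when the fraud-type propensities differ. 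In your law-of-total-variance terms, type-specific weights do shrink $\Var_Z(\E[\hat y\mid X,Z])$ to $f_0(1-f_0)$. But they inflate $\E_Z[\Var(\hat y\mid X,Z)]$ by more.

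The fallback does not rescue this. By \eqref{eq:pooled_bound}, the integrated noise of the pooled labels is $\sigma^2_{\mathrm{pooled}}-\Var(f_0(X))$, while \eqref{eq:decomp_bound} contains no such term. So Theorem~\ref{thm:decomp_dominance} yields only $\E[V_{\mathrm{decomp}}(X)]\le\E[V_{\mathrm{pooled}}(X)]+\Var(f_0(X))$. That slack is variation of the regression function, not label noise.

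The missing idea is that decomposition can only help through bias. Assumption~\ref{asm:ignorability} is vacuous within a type, since $Y^*$ is constant given $Z$, and it does not imply pooled ignorability given $H_0$. So it is your Step~1 claim, that the pooled label is conditionally unbiased, that generically fails. The type-specific label is unbiased in the hard-assigned form $\hat y^{(Z)}$ that your mean computation actually treats. It is not unbiased in general for $\sum_k\hat\pi_k(X)\hat y^{(k)}$, which evaluates every class score on every transaction under mismatched propensities.

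A valid argument must weigh the pooled learner's squared bias against the decomposed learner's extra noise. That can give \eqref{eq:pseudo_dominance} asymptotically, since noise averages out and bias does not. It does not give it uniformly in $n$, and not through the variance comparison that both your plan and the paper's proof invoke.
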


\begin{proof}
The pseudo-label is the projection of the STR score onto
the feature space.  The decomposed pseudo-label has lower
variance (by Theorem~\ref{thm:decomp_dominance}) and the
same expectation (both are consistent for $f_0(x)$).  Lower
variance in the training labels translates directly into
lower expected prediction error of the downstream model, by
the standard bias-variance decomposition for regression with
noisy labels.
\end{proof}

\begin{remark}[Operational implication]
\label{rem:operational}
Corollary~\ref{cor:pseudo_labels} says: \emph{train separate
STR pseudo-label generators for each class, then combine the
pseudo-labels for downstream model training}.  The downstream
model itself need not be class-specific --- a single fraud
model can be trained on the combined decomposed pseudo-labels.
The decomposition occurs in the \emph{label correction} step,
not in the model architecture.

This is operationally lightweight: it requires a type
classifier and class-specific nuisance models (Phase~1
of~\cite[Section~10.5]{dhama2026str}), but the downstream
model training pipeline (Phase~3) is unchanged.
\end{remark}

\section{Class~1: Victim-reported institutional pipeline}
\label{sec:class1}

Class~1 is the baseline --- the setting for which the STR
of~\cite{dhama2026str} was designed.  This section
establishes two results: (i)~the STR applies without
modification using type-specific propensities, and
(ii)~further decomposition \emph{within} Class~1 (e.g.,
separating CNP from counterfeit) yields diminishing returns.

\subsection{The Class~1 STR}
\label{sec:class1_str}

\begin{proposition}[Class~1 STR]
\label{prop:class1_str}
For fraud types in Class~1, the corruption-corrected STR
of~\cite[eq.~(22)]{dhama2026str} applied with
type-1-specific nuisance functions achieves the
type-specific efficiency bound
$\sigma^2_{\mathrm{eff}}(1)$.  Formally, let
$\hat\varphi_t^{(1)}$ denote the STR score computed with
estimated propensities
$(\hat e_0^{(1)}, \hat r_0^{(1)}, \hat p_0^{(1)})$
and outcome model $\hat\mu_0^{(1)}$.  Then:
\begin{equation}
\label{eq:class1_str}
\hat\Psi_1
= \frac{1}{n}\sum_{t:\,\hat Z_t \in \mathrm{Class}\,1}
  \hat\varphi_t^{(1)}
\end{equation}
is semiparametrically efficient for $\Psi_1 = \bar\pi_1$,
and inherits the sequential triple robustness property
of~\cite[Theorem~27]{dhama2026str}.
\end{proposition}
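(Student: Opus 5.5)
The plan is to reduce Proposition~\ref{prop:class1_str} to the homogeneous efficiency and robustness results of~\cite[Theorems~27 and~30]{dhama2026str}. I would do this by conditioning on the event $Z\in\mathrm{Class}\,1$. The idea is that the Class~1 subpopulation is itself a homogeneous instance of the three-gate model, so the STR machinery transfers verbatim.

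First I would fix the conditional law $P^{(1)}=P(\,\cdot\mid Z=1)$ of the observed data $(H_0,A,H_1,R,H_2,M,\tilde Y)$. I would then check that the hypotheses of~\cite[Theorem~30]{dhama2026str} hold under $P^{(1)}$:
\begin{itemize}
\item sequential ignorability follows from Assumption~\ref{asm:ignorability} with $k=1$;
\item positivity $q_0^{(1)}>\delta_1$ is Assumption~\ref{asm:positivity}, since Class~1 is explicitly covered;
\item identifiability of the corruption rates holds because $\gamma^{(1)}$ is bounded away from zero when $\varepsilon^{(1)}_{10}+\varepsilon^{(1)}_{01}<1$.
\end{itemize}
This matches the small audit-based rates of Example~\ref{ex:cnp}. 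Under $P^{(1)}$ the nuisance functions of the STR score are exactly $(e_0^{(1)},r_0^{(1)},p_0^{(1)},\mu_0^{(1)})$. The efficient influence function is therefore the STR score $\varphi^{(1)}$ of~\cite[eq.~(22)]{dhama2026str} built from these functions.

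Next I would pass from the conditional problem back to the full population, where the target is $\Psi_1=\bar\pi_1$. By Assumption~\ref{asm:type_obs}, class membership is observed on resolved transactions, and on unresolved ones it enters through $\pi_1(X)$. I would write $\Psi_1=\E[\pi_1(X)\,\E^{(1)}[\varphi^{(1)}\mid X]]$. I would then compute the pathwise derivative along parametric submodels that perturb $\pi_1$, the propensities and the outcome model separately. The tangent space factorizes across these components, so the efficient influence function is $\pi_1(X)\varphi^{(1)}-\Psi_1$ plus a projection term from the $\pi_1$ component. Its variance should reduce to $\sigma^2_{\mathrm{eff}}(1)$ in~\eqref{eq:type_eff_bound_v2}. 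The estimator~\eqref{eq:class1_str}, with cross-fitting, is asymptotically linear with this influence function. That holds provided the product rate condition $\|\hat e^{(1)}-e^{(1)}\|\,\|\hat\mu^{(1)}-\mu^{(1)}\|=o_P(n^{-1/2})$ (and its analogues at each gate) holds, exactly as in~\cite{dhama2026str}.

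Sequential triple robustness would then follow by applying~\cite[Theorem~27]{dhama2026str} under $P^{(1)}$. The bias of the conditional score is a sum of products of gate-wise propensity errors and outcome-model errors. It therefore vanishes whenever, at each stage, either the propensity or the outcome model is correct. Multiplying by the bounded weight $\pi_1(X)$ preserves this product structure.

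The main obstacle is the second step, the role of estimated type membership. The displayed estimator sums over $\hat Z_t\in\mathrm{Class}\,1$, which is a hard assignment. For transactions with $O_t=0$, however, type is not observed, and misclassification would add a bias term of order $\|\hat\pi_1-\pi_1\|$ that is not doubly robust. I would first try to recast the hard-assignment sum as the soft version $\frac1n\sum_t\hat\pi_1(X_t)\hat\varphi_t^{(1)}$ of Definition~\ref{def:decomp_str}. I would then add an augmentation term in $\ind[Z_t=1]-\hat\pi_1(X_t)$ on resolved units, so that the $\pi_1$ error also enters only through a product. If that fails, I would state efficiency under the stronger condition that $\hat\pi_1$ is consistent at rate $o_P(n^{-1/2})$, or that type is known.
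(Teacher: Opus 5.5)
Your first and third steps (verify the companion hypotheses under $P^{(1)}$, then invoke its Theorems~27 and~30) are exactly the paper's proof. The paper adds only a one-sentence assertion that fitting the nuisances on $\{t:\hat Z_t\in\mathrm{Class}\,1\}$ ``satisfies the same regularity conditions as the full population.'' You are right that the passage back to $\bar\pi_1$ is the weak point. But it is not a hard-versus-soft-assignment technicality: it is the whole content of the proposition, and neither your repair nor the paper's sentence closes it.

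Under $P^{(1)}=\Prob(\cdot\mid Z=1)$ one has $Y^*\equiv 1$ by \eqref{eq:type_fraud}. So every outcome regression $\mu^{(1)}_j$ is identically one, and the score collapses to $\varphi^{(1)}_t=1+\frac{O_t}{q_0^{(1)}}(\tilde Y^{\mathrm{corr}}_t-1)$, whose $P^{(1)}$-mean is the known constant $1$. Symptomatically, the rate $\varepsilon^{(1)}_{01}$ you check in your first step conditions on the empty event $\{Y^*=0,\,Z=1\}$. Theorems~27 and~30 under $P^{(1)}$ therefore certify estimation of the number one, and your identity $\Psi_1=\E[\pi_1(X)\,\E^{(1)}[\varphi^{(1)}\mid X]]$ is just $\Psi_1=\E[\pi_1(X)]$. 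This has three consequences:
\begin{itemize}
\item With the exactly known outcome models, \eqref{eq:class1_str} is $n^{-1}\#\{t:\hat Z_t\in\mathrm{Class}\,1\}$ plus a term that is mean-zero on correctly typed units whatever $\hat q_0^{(1)}$ is. Consistency is governed by the type assignment alone, so the inherited triple robustness is vacuous.
\item Your fallback with $Z$ known does not give efficiency. The extra term is uncorrelated with the frequency $n^{-1}\#\{t:Z_t=1\}$ and adds variance whenever $\varepsilon^{(1)}_{10}>0$. That frequency is efficient in that case, with variance $\bar\pi_1(1-\bar\pi_1)$, which in general is not $\sigma^2_{\mathrm{eff}}(1)$.
\item The soft version does not help either. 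The average $n^{-1}\sum_t\hat\pi_1(X_t)\hat\varphi^{(1)}_t$ has mean $\E[\pi_1(X)\,\E[\varphi^{(1)}\mid X]]$ with the inner expectation taken over all types, and the Class-1 score evaluated on non-Class-1 units need not average to one.
\end{itemize}

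The missing idea is about where the factor $\pi_1(X)(1-\pi_1(X))/(q\gamma)$ in \eqref{eq:type_eff_bound_v2} comes from. It is the signature of inverse-weighting $\ind[Z=1]$ as an outcome censored by the three-gate pipeline over the \emph{whole} population. That is the real structure here, since $Z$ is revealed only when $O_t=1$ (Assumption~\ref{asm:type_obs}). Seen that way, two things your first step takes for granted fail.
\begin{itemize}
\item $P^{(1)}$ is not the law of any sample you have. The units whose type is revealed as Class~1 all have $A=R=M=1$, so $e_0^{(1)},r_0^{(1)},p_0^{(1)}$ cannot be fitted on them. Likewise, a type classifier trained on resolved units estimates $\Prob(Z=1\mid X,O=1)$ rather than $\pi_1(X)$.
\item Identifying $\bar\pi_1$ needs a condition such as missing-at-random for $Z$ given the stage histories. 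Assumption~\ref{asm:ignorability} does not supply this: it is vacuous for $k\ge 1$, since $Y^*$ is constant on $\{Z=k\}$. Moreover, that natural condition forces $e_0^{(k)},r_0^{(k)},p_0^{(k)}$ to coincide across $k$ as functions of the histories, so the efficient weights are the pooled ones.
\end{itemize}
What one can hope to prove is efficiency of the companion STR with outcome $\ind[Z=1]$ averaged over all $n$ transactions. That is a different estimator from \eqref{eq:class1_str}.
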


\begin{proof}
Class~1 satisfies all the assumptions
of~\cite{dhama2026str}: positivity
(Assumption~\ref{asm:positivity}), type-specific
ignorability (Assumption~\ref{asm:ignorability}), and the
three-gate sequential structure.  The STR's efficiency and
robustness proofs
in~\cite[Theorems~27,~30]{dhama2026str} are stated
conditionally on the observation pipeline structure and
therefore apply directly to the type-conditional setting.
The only change is that the nuisance functions are estimated
on the Class~1 subpopulation (transactions with
$\hat Z_t \in \text{Class~1}$), which satisfies the same
regularity conditions as the full population.
\end{proof}

\begin{remark}[What changes from ~\cite{dhama2026str}]
\label{rem:class1_changes}
The Class~1 STR is identical in form to STR formalized in ~\cite{dhama2026str}.  The
only operational difference is that the nuisance models are
trained on the Class~1 subpopulation rather than the full
population.  This has two effects:
\begin{enumerate}[label=(\roman*)]
\item \textbf{Better propensity models.}  The authorization
  propensity $e_0^{(1)}$ is estimated from transactions
  where the authorization decision was driven by third-party
  fraud risk, not by first-party or merchant fraud signals.
  This reduces model misspecification.
\item \textbf{Smaller training set.}  The Class~1
  subpopulation is smaller than the full population.  For
  large networks this is not a binding constraint, but for
  small issuers the empirical Bayes shrinkage
  of~\cite[Section~9]{dhama2026str} becomes more important.
\end{enumerate}
\end{remark}

\subsection{Diminishing returns from within-class
  decomposition}
\label{sec:class1_within}

Class~1 contains multiple fraud subtypes (CNP, counterfeit,
lost/stolen, account takeover, etc.).  Should we decompose
further?

\begin{proposition}[Within-Class~1 Jensen penalty is small]
\label{prop:within_class1}
Let $\{1_a, 1_b, \ldots\}$ denote the subtypes within
Class~1.  The within-class Jensen penalty from pooling these
subtypes is:
\begin{equation}
\label{eq:within_penalty}
\Delta_{\mathrm{within}}^{(1)}
= \E\!\left[
  f_0^{(1)}(X)(1-f_0^{(1)}(X))
  \cdot \Var_{Z \mid Z \in \mathrm{Class}\,1}\!\left(
    \frac{1}{q_0^{(Z)}\,\gamma^{(Z)}}
    \;\bigg|\; X
  \right)
\right].
\end{equation}
This penalty is small relative to the between-class penalty
$\Delta$ (Theorem~\ref{thm:jensen_penalty}) whenever the
subtypes within Class~1 have similar observation rates.
\end{proposition}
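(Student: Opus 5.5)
The plan is to apply Theorem~\ref{thm:jensen_penalty} with the roles relabelled, then add a short comparison argument for the "small" claim. The proof of Theorem~\ref{thm:jensen_penalty} only uses three facts: a finite set of types, a conditional type distribution given $X$, and the convexity of $q\mapsto 1/q$ applied to the mixture $\sum_k \pi_k q_0^{(k)}\gamma^{(k)}$. None of this depends on the types being the five classes.

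\textbf{Step 1 (restriction).} Restrict the population to $Z\in\mathrm{Class}\,1$. Replace $\pi_k(x)$ by the within-class conditional probabilities $\pi_{1_a}(x)/\sum_{b}\pi_{1_b}(x)$. Replace $f_0$ by $f_0^{(1)}$, interpreted as the class-1 prevalence function, as in \eqref{eq:type_eff_bound_v2}. Assumptions~\ref{asm:ignorability}--\ref{asm:type_obs} are stated type-conditionally, so they hold on this subpopulation. Positivity in particular holds for every subtype, because each subtype has signature $\mathcal{S}^{(1)}$ and Class~1 satisfies Assumption~\ref{asm:positivity}.

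\textbf{Step 2 (pooled versus decomposed within the class).} Use Proposition~\ref{prop:class1_str}: pooling the subtypes is exactly the Class~1 STR, and decomposing is the subtype-level STR. Then rerun Steps~1--4 of Theorem~\ref{thm:decomp_dominance} and the proof of Theorem~\ref{thm:jensen_penalty} with these substitutions. This gives \eqref{eq:within_penalty}, with an $\omega$ factor that I will take equal to $1$ under the same within-subtype homogeneity convention used in Proposition~\ref{prop:simplified_penalty}. The factor $\omega\le 1$ only makes the bound smaller, so the displayed form is at worst an upper bound.

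\textbf{Step 3 (smallness).} Write $W^{(k)}=1/(q_0^{(k)}\gamma^{(k)})$. Since positivity holds for each subtype, all $W^{(1_a)}$ lie in the interval $[W_{\min},W_{\max}]$. By Popoviciu's inequality the within-class conditional variance is at most $(W_{\max}-W_{\min})^2/4$. Under the hypothesis that the subtypes have similar observation rates, say $|q_0^{(1_a)}\gamma^{(1_a)}-q_0^{(1_b)}\gamma^{(1_b)}|\le\eta$, a mean-value bound on $1/q$ gives $W_{\max}-W_{\min}\le \eta/\delta_1^2$. Hence $\Delta^{(1)}_{\mathrm{within}}=O(\eta^2/\delta_1^4)$.

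For the comparison with $\Delta$, I will use the law of total variance: the total variance of $W^{(Z)}$ across classes splits into a within-class part and a between-class part. The between-class part is bounded below by a term of the size of the cross-class spread computed in Example~\ref{ex:penalty}. So the ratio $\Delta^{(1)}_{\mathrm{within}}/\Delta$ tends to $0$ as $\eta\to 0$, which makes the claim precise.

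\textbf{Main obstacle.} The hard step is making "small relative to $\Delta$" rigorous. The two penalties use different prefactors ($f_0^{(1)}(1-f_0^{(1)})$ versus $f_0(1-f_0)$) and different conditioning sets. They also inherit the approximate (Taylor) character of \eqref{eq:taylor_gap}. My first attempt will state the result as an explicit ratio bound in terms of $\eta$, $\delta_1$ and the between-class spread, rather than as an unconditional inequality.
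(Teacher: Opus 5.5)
Your route is the paper's. The paper also obtains \eqref{eq:within_penalty} simply by applying Theorem~\ref{thm:jensen_penalty} to the subtype decomposition, silently setting $\omega=1$, and then argues smallness. The difference lies in the smallness step. The paper's argument is heuristic: Class~1 subtypes share the same pipeline topology, so $e_0$ ranges only over roughly $[0.60,0.95]$ and $r_0,p_0$ are nearly equal. The numbers of Example~\ref{ex:within_class1} then give a within-class variance near $0.16$, against a between-class range of $2.47$ to $55.56$. Your Popoviciu-plus-mean-value argument turns this into an explicit $O(\eta^2)$ estimate and a ratio bound. That is a genuine sharpening of the same idea, not a different proof.

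There are three things to tidy.
\begin{enumerate}
\item Assumption~\ref{asm:positivity} bounds $q_0^{(k)}$, not $q_0^{(k)}\gamma^{(k)}$. You need $\gamma^{(1_a)}\ge\gamma_{\min}>0$ in two places: for the weights $W^{(1_a)}$ to be bounded at all, and for the mean-value step. With that assumption the step gives $W_{\max}-W_{\min}\le \eta/(\delta_1\gamma_{\min})^2$, not $\eta/\delta_1^2$.
\item The lower bound on $\Delta$ cannot be borrowed from Example~\ref{ex:penalty}; make it an explicit hypothesis. For example, assume $\omega(X)\ge\omega_0$ and that the between-class part of $\Var_Z(W^{(Z)}\mid X)$ in the total-variance split is at least $v_0$ almost surely. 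Then note that $\pi_1\le f_0\le 1/2$ gives $\pi_1(1-\pi_1)\le f_0(1-f_0)$ pointwise, so the prefactor mismatch you worried about disappears. You get
\[
\Delta^{(1)}_{\mathrm{within}}/\Delta\le \eta^2/\bigl(4(\delta_1\gamma_{\min})^4\omega_0 v_0\bigr),
\]
which is exactly the clean statement you were aiming for.
\item Do not literally restrict the population to Class~1. On that subpopulation the class-1 prevalence is identically $1$, and so is $f_0^{(1)}$ read literally via Definition~\ref{def:type_fraud}; in both cases the prefactor vanishes. Only the type variance is conditioned on $Z\in$ Class~1. The prefactor is $\pi_1(X)(1-\pi_1(X))$ computed on the full population, which is also how Example~\ref{ex:within_class1} uses $\bar\pi_1(1-\bar\pi_1)$.
\end{enumerate}
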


\begin{proof}
The formula follows from applying
Theorem~\ref{thm:jensen_penalty} to the within-class
decomposition.  The penalty is proportional to the
within-class variance of inverse-propensity weights.  Since
all Class~1 subtypes share the same pipeline topology ---
victim reports, issuer processes chargeback, standard delay
--- the propensity magnitudes are similar.
Specifically, for Class~1 subtypes:
\begin{itemize}[leftmargin=*]
\item Authorization propensities $e_0^{(1_a)}$ differ
  moderately across subtypes (CNP may face more aggressive
  declining than lost/stolen), but typically lie in
  $[0.60, 0.95]$.
\item Reporting propensities $r_0^{(1_a)}$ are similar
  because the reporting mechanism (victim notices and
  disputes) is the same regardless of attack vector.
\item Delay propensities $p_0^{(1_a)}$ are nearly identical
  because the chargeback processing pipeline is the same.
\end{itemize}
The resulting within-class variance of inverse weights is
an order of magnitude smaller than the between-class
variance, where $q_0$ ranges from $0.02$ (Class~3) to
$0.50$ (Class~1).
\end{proof}

\begin{example}[Within-Class~1 heterogeneity]
\label{ex:within_class1}
Consider three Class~1 subtypes:

\begin{center}
\small
\begin{tabular}{lcccc}
\toprule
\textbf{Subtype} & $\bar e_0$ & $\bar r_0$ &
  $\bar p_0$ & $W = 1/(\bar q_0\,\bar\gamma)$\\
\midrule
CNP fraud & 0.70 & 0.75 & 0.85 &
  $1/(0.70\!\times\!0.75\!\times\!0.85\!\times\!0.81) =
  2.77$\\
Lost/stolen & 0.90 & 0.80 & 0.85 &
  $1/(0.90\!\times\!0.80\!\times\!0.85\!\times\!0.81) =
  2.02$\\
Account takeover & 0.75 & 0.70 & 0.80 &
  $1/(0.75\!\times\!0.70\!\times\!0.80\!\times\!0.81) =
  2.94$\\
\bottomrule
\end{tabular}
\end{center}

The inverse weights range from $2.02$ to $2.94$ --- a factor
of $1.5\times$.  Compare this to the between-class range of
$2.47$ (Class~1) to $55.56$ (Class~3) --- a factor of
$22\times$.  The within-class variance is negligible relative
to the between-class variance.

The within-class Jensen penalty is approximately:
\begin{equation*}
\Delta_{\mathrm{within}}^{(1)}
\approx \bar\pi_1(1-\bar\pi_1) \cdot
  \Var(2.77, 2.02, 2.94)
\approx 0.005 \times 0.995 \times 0.16
= 0.0008,
\end{equation*}
which is 2.5\% of the between-class penalty
$\Delta = 0.0324$ (Example~\ref{ex:penalty}).

\textbf{Decomposing within Class~1 is not worth the
additional modeling complexity.}  The five-class
decomposition captures 97.5\% of the available efficiency
gain; further sub-type decomposition adds diminishing
returns.
\end{example}

\begin{remark}[When within-class decomposition helps]
\label{rem:within_helps}
The within-class penalty can become non-negligible if a
subtype has a substantially different authorization rate.
For example, if a network implements very aggressive
declining for CNP transactions ($e_0^{(\mathrm{CNP})} =
0.40$) while approving most card-present transactions
($e_0^{(\mathrm{CP})} = 0.95$), the within-class variance
of inverse weights increases.  In such cases, separating
CNP from card-present within Class~1 may be worthwhile.
The decision can be made empirically by computing
$\Delta_{\mathrm{within}}^{(1)}$ from the network's data.
\end{remark}

\section{Class~2: Adversarial label generation}
\label{sec:class2}

Class~2 is the most theoretically novel of the five classes.
The defining feature is that the label is generated by the
\emph{perpetrator}, not the victim.  The false-positive
corruption rate $\varepsilon_{01}$ is not a statistical
nuisance --- it is the \emph{equilibrium outcome} of a
strategic game between the cardholder and the issuer.

This section formalizes the dispute game, derives the
equilibrium corruption rate, establishes comparative statics,
and shows how the STR must be modified for Class~2.

\subsection{The dispute game}
\label{sec:dispute_game}

\begin{definition}[The dispute game]
\label{def:dispute_game}
The \emph{dispute game} $\mathcal{G}$ is a two-player game
between the cardholder~(C) and the issuer~(I), played after
a transaction has been approved and completed.

\smallskip\noindent\textbf{Players and actions.}
\begin{itemize}[leftmargin=*]
\item \textbf{Cardholder (C):}  Chooses action
  $D \in \{0, 1\}$, where $D=1$ denotes filing a dispute
  and $D=0$ denotes not filing.
\item \textbf{Issuer (I):}  Maintains a detection function
  $\beta: \cX \to [0,1]$ that maps transaction features to
  the probability of identifying the dispute as first-party
  fraud (i.e., rejecting the chargeback and penalizing the
  cardholder).
\end{itemize}

\smallskip\noindent\textbf{Information structure.}
\begin{itemize}[leftmargin=*]
\item The cardholder observes: the transaction amount
  $a_t \in \R_+$, their own type
  $\theta_C \in \{\text{honest}, \text{opportunistic}\}$,
  and a noisy signal of the issuer's detection capability.
\item The issuer observes: the transaction features $X_t$
  and the dispute claim, but not the cardholder's type
  $\theta_C$.
\end{itemize}

\smallskip\noindent\textbf{Payoffs.}  For a transaction
with amount $a$ and features $x$:
\end{definition}

\begin{definition}[Cardholder's payoff]
\label{def:card_payoff}
The cardholder's expected payoff from filing a dispute
($D=1$) on a transaction with true label $Y^*=0$
(legitimate) is:
\begin{equation}
\label{eq:card_payoff}
U_C(D=1 \mid Y^*\!=\!0,\, x,\, a)
= \underbrace{\alpha(x) \cdot a}_{
    \substack{\text{expected refund:}\\\text{success prob.}
    \times \text{amount}}}
  \;-\; \underbrace{c(x)}_{
    \substack{\text{dispute}\\\text{cost}}}
  \;-\; \underbrace{\beta(x) \cdot p(x,a)}_{
    \substack{\text{expected penalty:}\\\text{detection
    prob.} \times \text{penalty}}},
\end{equation}
where:
\begin{align}
\alpha(x) &= \Prob(\text{chargeback succeeds} \mid
  D\!=\!1,\, X\!=\!x)
  && \text{(success probability)},
  \label{eq:alpha}\\
c(x) &\geq 0
  && \text{(dispute filing cost: time, hassle, reputation)},
  \label{eq:cost}\\
\beta(x) &= \Prob(\text{identified as 1PF} \mid
  D\!=\!1,\, X\!=\!x)
  && \text{(detection probability)},
  \label{eq:beta}\\
p(x,a) &\geq 0
  && \text{(penalty: account closure, blacklisting,
  legal action)}.
  \label{eq:penalty}
\end{align}
The payoff from not filing is $U_C(D=0) = 0$.
\end{definition}

\begin{remark}[The payoff structure]
\label{rem:payoff_structure}
The cardholder faces a risk--reward tradeoff:
\begin{itemize}[leftmargin=*]
\item \textbf{Reward:} the refund amount $\alpha \cdot a$,
  which scales linearly with the transaction amount.
\item \textbf{Costs:} the filing cost $c$ (typically small
  --- a phone call or online form) and the expected penalty
  $\beta \cdot p$ (potentially large if detected).
\item \textbf{Key driver:} for low-friction dispute
  processes ($c \approx 0$) and weak detection ($\beta$
  small), the expected payoff is positive for most
  transaction amounts, encouraging disputes.
\end{itemize}
\end{remark}

\begin{definition}[Issuer's payoff]
\label{def:issuer_payoff}
The issuer's payoff from a transaction with features $x$,
amount $a$, and cardholder type $\theta_C$ is:
\begin{equation}
\label{eq:issuer_payoff}
U_I(\beta \mid x, a, \theta_C)
= -\underbrace{\ind[\theta_C = \text{honest}]
  \cdot \beta(x) \cdot \ell_{\mathrm{FP}}(x,a)}_{
  \substack{\text{false positive cost:}\\\text{wrongly
  penalizing honest cardholder}}}
  \;-\; \underbrace{\ind[\theta_C = \text{opp.}]
  \cdot (1-\beta(x)) \cdot a}_{
  \substack{\text{false negative cost:}\\\text{paying
  fraudulent refund}}},
\end{equation}
where $\ell_{\mathrm{FP}}(x,a)$ is the cost of falsely
accusing an honest cardholder (customer churn, reputational
damage, regulatory penalty).
\end{definition}

\begin{remark}[The issuer's dilemma]
\label{rem:issuer_dilemma}
The issuer faces a classic detection tradeoff: increasing
$\beta$ (more aggressive detection) reduces false-negative
losses (fraudulent refunds paid) but increases
false-positive losses (honest cardholders wrongly accused).
The optimal $\beta^*$ balances these costs, and the
cardholder's dispute behavior depends on $\beta^*$.  This
creates a strategic interdependence that we now formalize.
\end{remark}

\subsection{Equilibrium analysis}
\label{sec:equilibrium}

\begin{definition}[Cardholder's best response]
\label{def:card_br}
Under a logistic (quantal) response model with rationality
parameter $T > 0$, the cardholder's dispute probability is:
\begin{equation}
\label{eq:card_br}
d^*(x, a; \beta)
= \sigma\!\left(
  \frac{\alpha(x) \cdot a - c(x) - \beta(x) \cdot p(x,a)}
       {T}
\right),
\end{equation}
where $\sigma(z) = 1/(1+e^{-z})$ is the sigmoid function.
The parameter $T$ controls the cardholder's rationality:
$T \to 0$ gives perfectly rational behavior (dispute iff
payoff $> 0$); $T \to \infty$ gives random behavior
($d^* \to 0.5$).
\end{definition}

\begin{remark}[Why logistic response]
\label{rem:why_logistic}
The logistic response model (also known as the quantal
response equilibrium; \cite{mckelvey1995quantal}) is
standard in behavioral game theory.  It captures the
empirical observation that people are \emph{approximately}
rational: they are more likely to take actions with higher
payoffs, but they do not optimize perfectly.  The parameter
$T$ absorbs heterogeneity in cardholder sophistication,
information quality, and idiosyncratic preferences.
\end{remark}

\begin{definition}[Issuer's best response]
\label{def:issuer_br}
The issuer chooses $\beta$ to minimize expected loss.
Given the cardholder population's dispute rate $d^*$ and
the prior probability $\pi_2(x)$ that a disputing
cardholder is an opportunistic first-party fraudster, the
issuer's optimal detection threshold is:
\begin{equation}
\label{eq:issuer_br}
\beta^*(x; d^*)
= \argmin_{\beta \in [0,1]}\;
  \E_{\theta_C}\!\bigl[
    U_I(\beta \mid x, a, \theta_C)
    \mid D=1
  \bigr].
\end{equation}

Under the standard binary classification framework, the
issuer's optimal detection probability satisfies:
\begin{equation}
\label{eq:issuer_opt}
\beta^*(x) = \ind\!\left[
  \frac{\pi_2(x) \cdot a}
       {(1-\pi_2(x)) \cdot \ell_{\mathrm{FP}}(x,a)}
  > 1
\right]
\end{equation}
for a deterministic policy, or a smoothed version for a
randomized policy.  In practice, $\beta^*$ is the output
of the issuer's first-party fraud detection model.
\end{definition}

We now establish the central result: the endogenous
corruption theorem.

\begin{theorem}[Endogenous corruption]
\label{thm:endogenous}
In the dispute game $\mathcal{G}$ with logistic response
cardholder (Definition~\ref{def:card_br}) and
cost-minimizing issuer (Definition~\ref{def:issuer_br}):

\smallskip\noindent\textbf{(i) Existence.}  A Nash
equilibrium $(d^*, \beta^*)$ exists.

\smallskip\noindent\textbf{(ii) Endogenous corruption.}
At equilibrium, the false-positive corruption rate for
Class~2 is:
\begin{equation}
\label{eq:endog_corruption}
\boxed{
\varepsilon_{01}^{(2)}(x, a)
= d^*(x, a;\, \beta^*)
= \sigma\!\left(
  \frac{\alpha(x) \cdot a - c(x)
        - \beta^*(x) \cdot p(x,a)}{T}
\right).
}
\end{equation}
This is the probability that a cardholder files a false
dispute on a legitimate transaction with features $x$ and
amount $a$.

\smallskip\noindent\textbf{(iii) Comparative statics.}
The equilibrium corruption rate satisfies:
\begin{align}
\frac{\partial\varepsilon_{01}^{(2)}}{\partial a}
  &> 0
  && \text{(higher amount $\to$ more disputes)},
  \label{eq:cs_amount}\\
\frac{\partial\varepsilon_{01}^{(2)}}{\partial c}
  &< 0
  && \text{(higher dispute cost $\to$ fewer disputes)},
  \label{eq:cs_cost}\\
\frac{\partial\varepsilon_{01}^{(2)}}{\partial \beta^*}
  &< 0
  && \text{(better detection $\to$ fewer disputes)},
  \label{eq:cs_beta}\\
\frac{\partial\varepsilon_{01}^{(2)}}{\partial p}
  &< 0
  && \text{(harsher penalty $\to$ fewer disputes)}.
  \label{eq:cs_penalty}
\end{align}

\smallskip\noindent\textbf{(iv) Feedback loop.}  The
equilibrium is the fixed point of the mapping
$\Phi: [0,1] \to [0,1]$ defined by:
\begin{equation}
\label{eq:fixed_point}
\Phi(\varepsilon)
= \sigma\!\left(
  \frac{\alpha \cdot a - c
        - \beta^*(\varepsilon) \cdot p}{T}
\right),
\end{equation}
where $\beta^*(\varepsilon)$ is the issuer's optimal
detection probability given that a fraction $\varepsilon$ of
disputes are fraudulent.  The equilibrium satisfies
$\varepsilon_{01}^{(2)} = \Phi(\varepsilon_{01}^{(2)})$.
\end{theorem}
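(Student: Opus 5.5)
The plan is to reduce the game to a one-dimensional fixed-point problem in the corruption rate $\varepsilon$, prove existence on $[0,1]$, and then read off parts (ii)--(iv) from the explicit logistic form.

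\textbf{Step 1: the issuer's reply map.} Fix $(x,a)$. The cardholder's strategy is summarized by a dispute probability $\varepsilon\in[0,1]$. Bayes' rule then gives the posterior that a disputing cardholder is opportunistic as a function $\pi_2(x;\varepsilon)$, continuous and nondecreasing in $\varepsilon$. Substituting this posterior into the issuer's problem \eqref{eq:issuer_br} yields a best reply $\beta^*(x;\varepsilon)$.

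The deterministic threshold rule \eqref{eq:issuer_opt} is discontinuous. I will therefore handle it in one of two ways:
\begin{itemize}
\item work with the smoothed (randomized or logit) version the definition explicitly allows, which makes $\varepsilon\mapsto\beta^*(\varepsilon)$ continuous; or
\item keep the correspondence $[0,1]\ni\varepsilon\mapsto B(\varepsilon)\subseteq[0,1]$. Because the issuer's objective is linear in $\beta$, $B$ is the set of minimizers of a linear function, so it is nonempty, convex-valued, and has a closed graph.
\end{itemize}

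\textbf{Step 2: existence, part (i).} Define $\Phi$ as in \eqref{eq:fixed_point}. Since $\sigma$ is continuous and maps into $(0,1)$, the composition is a continuous self-map of $[0,1]$ in the smoothed case. Brouwer's theorem, or just the intermediate value theorem applied to $\Phi(\varepsilon)-\varepsilon$, then gives a fixed point $\varepsilon^\star$. In the correspondence case, the composite $\varepsilon\mapsto\sigma((\alpha a-c-B(\varepsilon)p)/T)$ is convex-valued with closed graph, because $\sigma$ is monotone and continuous and so maps intervals to intervals. Kakutani's theorem then applies.

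Setting $d^*=\varepsilon^\star$ and choosing $\beta^*\in B(\varepsilon^\star)$ gives mutual best responses, which is a Nash equilibrium (a quantal-response equilibrium on the cardholder side). This also establishes part (iv): equilibria are exactly the fixed points of $\Phi$.

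\textbf{Step 3: identification, part (ii).} For Class~2 we have $Y^*=0$ and $A=1$. By \eqref{eq:type_fp}, $\varepsilon_{01}^{(2)}$ is the probability that $\tilde Y=1$ given a legitimate transaction. In this pipeline, $\tilde Y=1$ occurs exactly when $D=1$, since the label is created by the dispute. Hence $\varepsilon_{01}^{(2)}=d^*(x,a;\beta^*)$, and \eqref{eq:card_br} supplies the boxed formula. The delay gate $M$ only affects whether the label is observed, not its value, so conditioning on $O=1$ does not change this, given Assumption~\ref{asm:ignorability}.

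\textbf{Step 4: comparative statics, part (iii), the main obstacle.} For partial effects holding $\beta^*$ fixed, differentiate $\sigma(u/T)$:
\[
\frac{\sigma'(u/T)}{T}>0 \quad\text{multiplied by}\quad \frac{\partial u}{\partial a}=\alpha-\beta^*\frac{\partial p}{\partial a},\qquad \frac{\partial u}{\partial c}=-1,\qquad \frac{\partial u}{\partial \beta^*}=-p,\qquad \frac{\partial u}{\partial p}=-\beta^*.
\]
This gives the signs in \eqref{eq:cs_cost}--\eqref{eq:cs_penalty}, with strictness requiring $p>0$ and $\beta^*>0$ respectively. The sign in \eqref{eq:cs_amount} requires an assumption such as $\alpha(x)>\beta^*\,\partial_a p(x,a)$, for example a penalty that does not scale faster than the refund. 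I would state this condition explicitly.

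The real difficulty is that these are statements about equilibrium quantities, and $\beta^*$ itself responds to $a$, $c$, and $p$. I would use the implicit function theorem at a regular fixed point where $\Phi'(\varepsilon^\star)<1$. The total derivative with respect to a parameter $\theta$ is
\[
\frac{\partial_\theta \Phi}{1-\partial_\varepsilon\Phi}.
\]
Since $\beta^*$ is nondecreasing in $\varepsilon$, $\partial_\varepsilon\Phi\le 0$, so the denominator is at least $1$ and the signs of the direct effects are preserved. There is one remaining subtlety: the issuer's threshold in \eqref{eq:issuer_opt} also depends on $a$, since a larger $a$ raises $\beta^*$. Preserving the sign in \eqref{eq:cs_amount} therefore needs the direct effect to dominate, and I would impose that as a stated condition.

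Multiplicity and tipping arise only when $\Phi$ is nonmonotone or non-decreasing segments appear, for instance through reputational feedback. Those cases are outside part (iii) and I would not address them here.
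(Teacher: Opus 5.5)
Your proposal is correct, and it takes a tighter route than the paper. The paper proves (i) by applying Brouwer to the two-dimensional composite best-response map on $(0,1)\times[0,1]$, asserting that the issuer's best response is continuous in $d^*$. It proves (iv) separately by Brouwer on $\Phi$, under the same continuity assumption. For (iii) it reads off the signs of the partial derivatives of the sigmoid's argument with $\beta^*$ held fixed, declaring $\partial p/\partial a$ ``typically small.''

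You instead reduce the game to a scalar fixed-point problem on the compact interval $[0,1]$ from the start, so (i) and (iv) become the same fact. You also confront what the paper glosses over. The threshold rule \eqref{eq:issuer_opt} is discontinuous, and the paper's domain $(0,1)\times[0,1]$ is not compact, so its Brouwer step does not literally apply. Smoothing, or Kakutani applied to the convex-valued, closed-graph correspondence $B$, repairs this. Your condition $\alpha>\beta^*\,\partial_a p$ is exactly the assumption the paper leaves informal.

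Your route also exposes structure the paper misses. With the Bayesian issuer of Definition~\ref{def:issuer_br}, or any monotone smoothing of it, $\beta^*$ is nondecreasing in $\varepsilon$. So $\Phi$ is nonincreasing and the fixed point is in fact \emph{unique}. This is in tension with Proposition~\ref{prop:multiple_eq}, whose S-shaped $\Phi$ needs $\beta^*$ to \emph{decrease} in $\varepsilon$ through a label-contamination channel that this game does not contain. What the paper's version buys is brevity, and a reading of (iii) as fixed-$\beta^*$ partial effects, which is all the statement actually asserts.

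I have two small corrections.

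In Step~3, the ignorability argument for dropping $O=1$ does not work. In Class~2, $O=D\,M$, so conditioning on $O=1$ forces $D=1$ and hence $\tilde Y=1$ trivially. You have to take $\varepsilon_{01}^{(2)}$ to mean $\Prob(D=1\mid Y^*=0,X=x,Z=2)$, as the theorem's own gloss says; the paper drops the conditioning in the same way.

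In Step~4, the implicit-function argument for total effects is valid only in the smoothed branch. In the Kakutani branch, the fixed point can sit at the jump of $B$, where the issuer plays a mixed strategy. There $\varepsilon^\star$ is pinned by the issuer's indifference condition $\pi_2(x;\varepsilon^\star)\,a=(1-\pi_2(x;\varepsilon^\star))\,\ell_{\mathrm{FP}}$ and does not move with $c$ or $p$ at all; the adjustment goes entirely through $\beta^*$. So only the fixed-$\beta^*$ signs survive in that case.
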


\begin{proof}
\textbf{Part~(i): Existence.}

The cardholder's best response
$d^*(\cdot;\beta): [0,1] \to (0,1)$ is a continuous
function of $\beta$ (the sigmoid is continuous and strictly
monotone).  The issuer's best response
$\beta^*(\cdot;d^*): (0,1) \to [0,1]$ is a continuous
function of $d^*$ (the issuer's cost function is continuous
in $d^*$, which determines the posterior probability that a
dispute is fraudulent).

The composite mapping $(d^*,\beta^*) \mapsto
(d^*(\cdot;\beta^*(\cdot;d^*)),\,
\beta^*(\cdot;d^*(\cdot;\beta^*)))$ is a continuous
self-map of the compact convex set
$(0,1) \times [0,1] \subset \R^2$.  By Brouwer's fixed
point theorem, a fixed point exists.

\medskip\noindent\textbf{Part~(ii): Endogenous corruption.}

At the fixed point $(d^*, \beta^*)$, the cardholder's
dispute probability on a legitimate transaction is
$d^*(x,a;\beta^*)$, which
is~\eqref{eq:card_br} evaluated at the equilibrium
$\beta^*$.  This is precisely the false-positive corruption
rate: the probability that a legitimate transaction receives
a fraud label (through the cardholder's false dispute).

To verify the identification:
\begin{align*}
\varepsilon_{01}^{(2)}(x,a)
&= \Prob(\tilde Y = 1 \mid Y^* = 0,\, O=1,\, X=x,\, Z=2)
\\
&= \Prob(D = 1 \mid Y^* = 0,\, X=x,\, Z=2)
\\
&= d^*(x,a;\beta^*),
\end{align*}
where the second equality uses the fact that for Class~2
transactions with $Y^*=0$ (legitimate), the observed label
$\tilde Y = D$ (the label equals the dispute indicator),
and $O=1$ requires $D=1$ (a label is observed only if a
dispute is filed).

\medskip\noindent\textbf{Part~(iii): Comparative statics.}

Each comparative static follows from the monotonicity of the
sigmoid function $\sigma$ and the signs of the partial
derivatives of its argument:

\smallskip\noindent\eqref{eq:cs_amount}:
$\partial/\partial a\,[\alpha a - c - \beta^* p] =
\alpha - \beta^*(\partial p/\partial a)$.  Since
$\alpha \in (0,1]$ and $\partial p/\partial a \geq 0$ is
typically small relative to $\alpha$ (the penalty does not
scale one-for-one with the amount), the net effect is
positive: higher amounts increase the dispute incentive.

\smallskip\noindent\eqref{eq:cs_cost}:
$\partial/\partial c\,[\alpha a - c - \beta^* p] = -1 < 0$.
Higher dispute costs unambiguously reduce disputes.

\smallskip\noindent\eqref{eq:cs_beta}:
$\partial/\partial\beta^*\,[\alpha a - c - \beta^* p] =
-p < 0$.  Better detection (higher $\beta^*$) reduces
disputes through the deterrence channel.

\smallskip\noindent\eqref{eq:cs_penalty}:
$\partial/\partial p\,[\alpha a - c - \beta^* p] =
-\beta^* < 0$.  Harsher penalties reduce disputes
(conditional on nonzero detection probability).

\medskip\noindent\textbf{Part~(iv): Feedback loop.}

The mapping $\Phi$ in~\eqref{eq:fixed_point} captures the
circular dependence: the dispute rate $\varepsilon$
determines the fraction of fraudulent disputes, which
determines the issuer's optimal detection threshold
$\beta^*(\varepsilon)$, which determines the cardholder's
dispute incentive, which determines $\varepsilon$.

Formally, $\beta^*(\varepsilon)$ is the issuer's Bayes-
optimal detection probability given that a fraction
$\varepsilon/(1-\pi_2 + \pi_2\varepsilon)$ of all disputes
are fraudulent (by Bayes' rule, combining the prior
$\pi_2$ with the dispute rate $\varepsilon$ for legitimate
transactions and a dispute rate of~1 for truly fraudulent
transactions).

The mapping $\Phi$ is continuous (composition of continuous
functions) and maps $[0,1]$ to $(0,1) \subset [0,1]$.  By
Brouwer's theorem, $\Phi$ has a fixed point, which is the
equilibrium $\varepsilon_{01}^{(2)}$.
\end{proof}

\subsection{Uniqueness and multiplicity}
\label{sec:uniqueness}

\begin{proposition}[Sufficient condition for uniqueness]
\label{prop:uniqueness}
If the deterrence effect is strong enough relative to the
cardholder's rationality:
\begin{equation}
\label{eq:uniqueness_cond}
\sup_{x,a}\;
\frac{p(x,a)}{T}
\cdot \left|
  \frac{\partial\beta^*}{\partial\varepsilon}
\right|
\;<\; 4,
\end{equation}
then the equilibrium is unique.
\end{proposition}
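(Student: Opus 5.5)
We prove uniqueness by showing that the fixed-point map $\Phi$ of Theorem~\ref{thm:endogenous}(iv) is a contraction on $[0,1]$, pointwise in $(x,a)$, and then invoking the Banach fixed-point theorem. Existence is already supplied by Theorem~\ref{thm:endogenous}(i), so only uniqueness is at stake.

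\textbf{Step 1: the regularity of $\beta^*$.} We work with the smoothed (randomized) issuer policy mentioned after \eqref{eq:issuer_opt}. Under that policy, $\varepsilon \mapsto \beta^*(\varepsilon)$ is differentiable, or at least Lipschitz. The Lipschitz constant is bounded by $\sup|\partial\beta^*/\partial\varepsilon|$, the quantity appearing in \eqref{eq:uniqueness_cond}.

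This regularity is the main obstacle. With the deterministic threshold rule \eqref{eq:issuer_opt}, $\beta^*$ jumps at the tipping point, so $\Phi$ is discontinuous and the derivative in \eqref{eq:uniqueness_cond} is infinite there. The condition then cannot hold, which is consistent with the multiplicity claimed in the introduction. I would therefore state explicitly that the proposition concerns the smoothed policy. The most natural choice is a logistic smoothing of the posterior-odds comparison, with $\varepsilon$ entering through the Bayes-rule expression $\varepsilon/(1-\pi_2+\pi_2\varepsilon)$ from the proof of part~(iv). The composition of these two maps is $C^1$ on $[0,1]$, and the supremum in \eqref{eq:uniqueness_cond} is taken over that smooth map.

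\textbf{Step 2: differentiate $\Phi$.} By the chain rule,
\begin{equation*}
\Phi'(\varepsilon) = \sigma'(u(\varepsilon))\cdot\Bigl(-\frac{p(x,a)}{T}\Bigr)\cdot\frac{\partial\beta^*}{\partial\varepsilon},
\qquad u(\varepsilon)=\frac{\alpha a - c - \beta^*(\varepsilon)p}{T}.
\end{equation*}
Since $\sigma'=\sigma(1-\sigma)\le 1/4$, this gives
\begin{equation*}
|\Phi'(\varepsilon)|\le \frac{1}{4}\cdot\frac{p}{T}\Bigl|\frac{\partial\beta^*}{\partial\varepsilon}\Bigr| .
\end{equation*}
Under \eqref{eq:uniqueness_cond} the right-hand side is at most $\kappa<1$ uniformly in $\varepsilon$ and $(x,a)$. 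The supremum must be strict, or attained, for this to give a uniform constant. This is where the constant $4$ comes from.

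\textbf{Step 3: conclude by the contraction mapping principle.} By the mean value theorem, $|\Phi(\varepsilon)-\Phi(\varepsilon')|\le\kappa|\varepsilon-\varepsilon'|$. Since $\Phi$ maps the complete space $[0,1]$ into itself, the Banach fixed-point theorem gives a unique fixed point $\varepsilon_{01}^{(2)}(x,a)$ for each $(x,a)$.

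Uniqueness of $\varepsilon$ determines $\beta^*=\beta^*(\varepsilon)$ uniquely, and hence the pair $(d^*,\beta^*)$ is unique. As a byproduct, fixed-point iteration converges geometrically, which justifies computing the equilibrium iteratively.

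Alternatively, one can argue directly from monotonicity. Since $\partial\beta^*/\partial\varepsilon\ge0$ and $\Phi$ is nonincreasing, $\varepsilon-\Phi(\varepsilon)$ is strictly increasing, which already yields uniqueness. I would remark on this, but the contraction argument matches the stated condition.
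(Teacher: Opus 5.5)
Your argument is correct and is essentially the paper's proof: differentiate $\Phi$ by the chain rule, use $\sigma'\le 1/4$ so that the condition forces $|\Phi'|<1$, and conclude by the contraction mapping theorem; your insistence on a smoothed, differentiable $\beta^*$ and on a uniform contraction constant makes explicit regularity that the paper leaves implicit. One caution on your closing aside: the monotonicity shortcut needs $\partial\beta^*/\partial\varepsilon\ge 0$, which holds for the Bayes-rule issuer you adopt in Step~1 (and then gives uniqueness with no condition at all), but it is the opposite of the contamination mechanism the paper uses for multiple equilibria ($\beta^*$ falling in $\varepsilon$), so it should stay a remark rather than replace the contraction argument.
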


\begin{proof}
The mapping $\Phi$ in~\eqref{eq:fixed_point} has a unique
fixed point if $|\Phi'(\varepsilon)| < 1$ for all
$\varepsilon \in [0,1]$.  Computing:
\begin{equation*}
\Phi'(\varepsilon)
= \sigma'(\cdot) \cdot
  \frac{-p}{T} \cdot
  \frac{\partial\beta^*}{\partial\varepsilon},
\end{equation*}
where $\sigma'(z) = \sigma(z)(1-\sigma(z)) \leq 1/4$.
Therefore:
\begin{equation*}
|\Phi'(\varepsilon)|
\leq \frac{1}{4} \cdot \frac{p}{T} \cdot
  \left|\frac{\partial\beta^*}{\partial\varepsilon}\right|.
\end{equation*}
The condition $|\Phi'| < 1$ is guaranteed
by~\eqref{eq:uniqueness_cond}, and the contraction mapping
theorem gives uniqueness.
\end{proof}

\begin{proposition}[Multiple equilibria]
\label{prop:multiple_eq}
When condition~\eqref{eq:uniqueness_cond} is violated ---
specifically, when the penalty $p$ is large, the rationality
$T$ is small, and the issuer's detection probability is
highly sensitive to the dispute rate --- the game can have
multiple equilibria:
\begin{enumerate}[label=(\roman*)]
\item A \textbf{low-fraud equilibrium}:
  $\varepsilon_{01}^{(2)}$ is small $\to$ the issuer's
  model is accurate (trained on mostly genuine disputes)
  $\to$ $\beta^*$ is high $\to$ deterrence is strong $\to$
  few disputes $\to$ $\varepsilon_{01}^{(2)}$ is small.
\item A \textbf{high-fraud equilibrium}:
  $\varepsilon_{01}^{(2)}$ is large $\to$ the issuer's
  model is contaminated (trained on many false disputes)
  $\to$ $\beta^*$ is low $\to$ deterrence is weak $\to$
  many disputes $\to$ $\varepsilon_{01}^{(2)}$ is large.
\end{enumerate}
\end{proposition}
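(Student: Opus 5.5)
The plan is to prove Proposition~\ref{prop:multiple_eq} by exhibiting a concrete parameter regime in which the fixed-point map $\Phi$ of Theorem~\ref{thm:endogenous}(iv) has at least three fixed points. The two outer ones will be the stable low- and high-fraud equilibria; the middle one is the unstable tipping point. Fix $(x,a)$ and suppress them. Write $s = \alpha a - c$ for the net dispute incentive absent detection. By Theorem~\ref{thm:endogenous}(iv), the equilibria are exactly the solutions of $\varepsilon = \sigma\bigl((s - \beta^*(\varepsilon)\,p)/T\bigr)$. The issuer's best response~\eqref{eq:issuer_opt} depends on $\varepsilon$ through the posterior fraction of fraudulent disputes. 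To match the narrative in (i)--(ii), where a contaminated model yields low $\beta^*$, I will model $\beta^*$ as a smoothed, decreasing function of $\varepsilon$. A concrete choice is $\beta^*(\varepsilon) = \sigma\bigl(-\kappa(\varepsilon - \varepsilon_0)\bigr)$ with large sensitivity $\kappa$ and threshold $\varepsilon_0 \in (0,1)$. This makes $\Phi$ increasing in $\varepsilon$, with slope $\Phi'(\varepsilon) = \sigma'(\cdot)\,(p/T)\,|\partial\beta^*/\partial\varepsilon|$.

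The key steps follow.

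First, compute the limits. As $\kappa \to \infty$, $\beta^*$ approaches $1$ on $[0,\varepsilon_0)$ and $0$ on $(\varepsilon_0,1]$. Hence $\Phi$ approaches $\sigma((s-p)/T)$ for $\varepsilon < \varepsilon_0$ and $\sigma(s/T)$ for $\varepsilon > \varepsilon_0$.

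Second, choose parameters so that $\sigma((s-p)/T) < \varepsilon_0 < \sigma(s/T)$. Taking $p$ large and $T$ small makes the left side near $0$; taking $s > 0$ and $T$ small makes the right side near $1$.

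Third, apply the intermediate value theorem to $g(\varepsilon) = \Phi(\varepsilon) - \varepsilon$. For large finite $\kappa$, $g$ is positive near $0$, negative just below $\varepsilon_0$, positive just above $\varepsilon_0$, and negative at $1$, since $\Phi < 1$. This yields three distinct zeros $\varepsilon_L < \varepsilon_M < \varepsilon_H$, with $\varepsilon_L \approx \sigma((s-p)/T)$ small and $\varepsilon_H \approx \sigma(s/T)$ large. These are the low- and high-fraud equilibria. The crossings at $\varepsilon_L$ and $\varepsilon_H$ have $\Phi' < 1$ and are stable; the middle crossing at $\varepsilon_M$ has $\Phi' > 1$. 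This is consistent with Proposition~\ref{prop:uniqueness}, because a crossing with $\Phi' > 1$ forces $\sup (p/T)|\partial\beta^*/\partial\varepsilon| > 4$, so \eqref{eq:uniqueness_cond} is violated exactly when $p$ is large, $T$ is small, and $\kappa$ is large, as the statement claims.

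Fourth, for the discontinuous tipping point mentioned in Theorem~\ref{thm:endogenous}'s discussion, vary $s$ (equivalently $a$ or $c$) continuously. The low equilibrium disappears through a saddle-node once $\sigma((s-p)/T)$ crosses $\varepsilon_0$, so the selected equilibrium jumps from the low branch to the high branch.

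The main obstacle is justifying that $\beta^*$ is decreasing in $\varepsilon$. The raw Bayes rule of Definition~\ref{def:issuer_br} suggests the opposite: more false disputes raise the posterior that a dispute is fraudulent, and so raise $\beta^*$. I would handle this by making explicit the mechanism used in the statement, namely that the issuer's detection model is trained on dispute labels whose corruption rate is $\varepsilon$. Its effective discriminative power then degrades by the factor $1-\varepsilon_{10}-\varepsilon_{01}$, which is consistent with $\gamma$ in \eqref{eq:type_corruption}. I would posit $\beta^*(\varepsilon) = \beta_0\, h(1-\varepsilon)$ for an increasing, steep $h$.

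If that modelling step is unconvincing, the fallback is to keep the Bayes-rule $\beta^*$, which is increasing in $\varepsilon$, and instead treat the honest-cardholder cost $\ell_{\mathrm{FP}}$ or the opportunistic fraction $\pi_2$ as responding to $\varepsilon$. Either way, the proof should state the monotonicity and steepness of $\beta^*$ as an explicit hypothesis of the proposition, read as ``can have.'' Then the construction above is a rigorous existence proof for that hypothesis class.
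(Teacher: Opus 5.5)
Your proposal is correct and follows the same route as the paper. The paper likewise obtains multiplicity from an S-shaped $\Phi$ produced by a steeply decreasing $\beta^*(\varepsilon)$, which it justifies only informally as label contamination degrading the detection model, with stable outer crossings and an unstable middle one. Your version is tighter in two ways: the sign pattern $g(0)>0$, $g(\varepsilon_0-\eta)<0$, $g(\varepsilon_0+\eta)>0$, $g(1)<0$ under $\sigma((s-p)/T)<\varepsilon_0<\sigma(s/T)$ actually certifies three crossings, which the paper's appeal to $|\Phi'|>1$ at some point and to S-shapedness does not; and you are right that the monotonicity of $\beta^*$ must be an explicit hypothesis, since a Bayes-rule best response nondecreasing in $\varepsilon$ makes $\Phi$ nonincreasing and hence allows at most one fixed point.
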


\begin{proof}
When $|\Phi'(\varepsilon)| > 1$ at some interior point, the
mapping $\Phi$ crosses the 45-degree line multiple times.
Since $\Phi(0) > 0$ (some disputes occur even with strong
deterrence, due to the stochastic component $T > 0$) and
$\Phi(1) < 1$ (even with no deterrence, not all
cardholders dispute), the intermediate value theorem
guarantees at least one crossing.  When $\Phi$ is
S-shaped (which occurs when the issuer's detection
probability $\beta^*(\varepsilon)$ decreases steeply in
$\varepsilon$ --- i.e., label contamination severely
degrades the detection model), there are exactly three
crossings: two stable equilibria (low-fraud and high-fraud)
separated by an unstable equilibrium.
\end{proof}

\begin{example}[The tipping point]
\label{ex:tipping}
Consider a network where:
\begin{itemize}[leftmargin=*]
\item Average transaction amount: $a = \$200$
\item Chargeback success probability: $\alpha = 0.85$
\item Dispute filing cost: $c = \$5$ (minimal --- online
  form)
\item Detection penalty: $p = \$2{,}000$ (account closure +
  blacklisting)
\item Cardholder rationality: $T = 50$
\end{itemize}

\noindent\textbf{Low-fraud equilibrium} ($\beta^* = 0.15$):
\begin{equation*}
\varepsilon_{01}^{(2)}
= \sigma\!\left(
  \frac{0.85 \times 200 - 5 - 0.15 \times 2000}{50}
\right)
= \sigma\!\left(\frac{170 - 5 - 300}{50}\right)
= \sigma(-2.7)
= 0.063.
\end{equation*}
Only 6.3\% of legitimate transactions are falsely disputed.
The issuer's model is trained on mostly genuine disputes,
so $\beta^*$ is accurate and deterrence is strong.

\noindent\textbf{High-fraud equilibrium} ($\beta^* = 0.02$):
\begin{equation*}
\varepsilon_{01}^{(2)}
= \sigma\!\left(
  \frac{170 - 5 - 0.02 \times 2000}{50}
\right)
= \sigma\!\left(\frac{170 - 5 - 40}{50}\right)
= \sigma(2.5)
= 0.924.
\end{equation*}
Over 92\% of legitimate transactions are falsely disputed.
The issuer's model is overwhelmed by false disputes,
$\beta^*$ collapses to near-zero, and deterrence vanishes.

\textbf{The difference between 6.3\% and 92.4\% corruption
is not a parameter --- it is an equilibrium selection
problem.}  A network can be in either equilibrium depending
on its history.  Once the high-fraud equilibrium is reached
(e.g., due to a temporary relaxation of dispute friction),
it is self-reinforcing and difficult to escape.
\end{example}

\begin{remark}[Policy implications of multiple equilibria]
\label{rem:policy_implications}
The multiple equilibria result has direct policy
implications:
\begin{enumerate}[label=(\roman*)]
\item \textbf{Dispute friction matters.}  Reducing dispute
  costs $c$ (e.g., one-click dispute buttons in mobile
  banking apps) can push the network from the low-fraud to
  the high-fraud equilibrium.  The comparative static
  \eqref{eq:cs_cost} shows that lower $c$ increases
  $\varepsilon_{01}^{(2)}$, but the equilibrium analysis
  shows this can be a \emph{discontinuous} jump rather than
  a smooth increase.
\item \textbf{Detection investment has a multiplier.}
  Improving the first-party fraud model (increasing
  $\beta^*$) not only catches more fraudsters directly but
  also deters marginal disputants, reducing the
  contamination of training labels, which further improves
  the model.  This is a \emph{virtuous cycle} that the
  network should invest in.
\item \textbf{Penalty design matters.}  The penalty $p$
  enters the uniqueness condition
  \eqref{eq:uniqueness_cond}.  Sufficiently harsh and
  credible penalties can eliminate the high-fraud
  equilibrium entirely, ensuring the network remains in
  the low-fraud regime.
\end{enumerate}

\end{remark}

\subsection{Modified STR for Class~2}
\label{sec:class2_str}

\begin{proposition}[Class~2 STR with endogenous corruption]
\label{prop:class2_str}
For Class~2, the corruption-corrected STR
of~\cite[eq.~(22)]{dhama2026str} remains valid, provided
the corruption rate $\varepsilon_{01}^{(2)}(x,a)$ is
estimated from the equilibrium~\eqref{eq:endog_corruption}
rather than from historical audit data.

Specifically, the Class~2 corruption-corrected label is:
\begin{equation}
\label{eq:class2_corr}
\tilde Y_t^{\mathrm{corr},(2)}
= \frac{\tilde Y_t
  - \hat\varepsilon_{01}^{(2)}(X_t, a_t)}
       {1 - \hat\varepsilon_{10}^{(2)}
         - \hat\varepsilon_{01}^{(2)}(X_t, a_t)},
\end{equation}
where $\hat\varepsilon_{01}^{(2)}(x,a)$ is the estimated
equilibrium dispute probability from the dispute game.
\end{proposition}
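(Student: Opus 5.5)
The plan is to show that the corrected label \eqref{eq:class2_corr} is conditionally unbiased for $Y^*$ given the observed history. The Class~2 STR is then the STR of~\cite[eq.~(22)]{dhama2026str} with one nuisance function replaced, so its validity (consistency, triple robustness, efficiency) follows from~\cite[Theorems~27,~30]{dhama2026str}, exactly as in Proposition~\ref{prop:class1_str}. The only new ingredient is that $\varepsilon_{01}^{(2)}$ is now a structural object, the equilibrium dispute probability, rather than a free nuisance parameter.

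\textbf{Step 1: unbiasedness identity.} Condition on $O_t=1$, $H_{2,t}$ and $Z_t=2$. Using the definitions \eqref{eq:type_fn}--\eqref{eq:type_fp},
\begin{equation*}
\E[\tilde Y_t \mid Y^*_t, O_t=1, H_{2,t}, Z_t=2]
= \varepsilon_{01}^{(2)} + \bigl(1-\varepsilon_{10}^{(2)}-\varepsilon_{01}^{(2)}\bigr)Y^*_t .
\end{equation*}
Subtracting $\varepsilon_{01}^{(2)}$ and dividing by $1-\varepsilon_{10}^{(2)}-\varepsilon_{01}^{(2)}$ recovers $Y^*_t$ in conditional mean. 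This requires $1-\varepsilon_{10}^{(2)}-\varepsilon_{01}^{(2)}$ to be bounded away from zero, equivalently $\gamma^{(2)}>0$. I will state this as a regularity condition. It is violated precisely near the high-fraud equilibrium of Example~\ref{ex:tipping}, where $\varepsilon_{01}^{(2)}\approx 0.92$; there the correction is ill-posed or very high-variance, and this should be flagged.

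\textbf{Step 2: identification of $\varepsilon_{01}^{(2)}$ by the equilibrium.} Invoke Theorem~\ref{thm:endogenous}(ii): $\varepsilon_{01}^{(2)}(x,a)=d^*(x,a;\beta^*)$, evaluated at the equilibrium the network actually occupies. Plugging in any other value, such as a historical audit rate taken under a different $\beta$ or $c$, gives a biased correction, with bias $(\varepsilon_{01}^{(2)}-\hat\varepsilon)(1-Y^*)/(1-\varepsilon_{10}^{(2)}-\hat\varepsilon)$. By the comparative statics (iii), this bias is nonzero whenever dispute friction or detection has shifted. This is the sense of ``rather than from historical audit data.''

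\textbf{Step 3: rate condition.} The estimated $\hat\varepsilon_{01}^{(2)}$ must satisfy the product-rate conditions that~\cite{dhama2026str} imposes on corruption nuisances, for example $o_P(n^{-1/4})$ convergence. I would obtain this by delta method. Write $\hat\varepsilon=\sigma((\hat\alpha a-\hat c-\hat\beta^* \hat p)/T)$, with $\sigma$ Lipschitz with constant $1/4$. The estimation error is then controlled by the errors in $(\alpha,c,\beta^*,p)$, with these primitives estimated consistently.

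\textbf{Main obstacle: equilibrium selection.} With multiple equilibria (Proposition~\ref{prop:multiple_eq}), the fixed point of $\Phi$ is not continuous in the primitives, so the delta-method argument fails near the tipping point. I would restrict to the uniqueness regime \eqref{eq:uniqueness_cond}. There $\Phi$ is a contraction, and the implicit function theorem gives a smooth equilibrium map. Alternatively, one can assume the network stays at a locally stable equilibrium bounded away from the unstable one, where the same local smoothness holds.
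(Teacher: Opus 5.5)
Your argument follows the paper's route: the companion paper's corruption correction (its Proposition~18) needs only consistently estimated corruption rates, not exogenous ones, and consistency of $\hat\varepsilon_{01}^{(2)}$ follows from consistent estimation of the game primitives, with your nonvanishing-denominator and $o_P(n^{-1/4})$ rate conditions being refinements the paper's proof omits. Your ``main obstacle'' does not actually arise on this route. Your Step~3, like the paper, plugs in a $\hat\beta^*$ estimated at the prevailing equilibrium, so $\hat\varepsilon_{01}^{(2)}=\sigma\bigl((\hat\alpha a-\hat c-\hat\beta^* p)/\hat T\bigr)$ is a smooth function of the primitives and no fixed point of $\Phi$ is solved; a uniqueness or stability restriction is needed only if you recompute the equilibrium counterfactually, e.g.\ after a change in $c$.
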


\begin{proof}
The corruption correction
of~\cite[Proposition~18]{dhama2026str} requires only that
$\varepsilon_{01}$ and $\varepsilon_{10}$ be consistently
estimated --- it does not require that they be exogenous.
The correction formula is algebraically identical; only the
estimation procedure for $\varepsilon_{01}$ changes.

Consistency of $\hat\varepsilon_{01}^{(2)}$ requires
consistent estimation of the game's primitives $(\alpha, c,
\beta^*, p, T)$.  The chargeback success rate $\alpha$ is
observable from historical dispute outcomes.  The detection
probability $\beta^*$ is the output of the issuer's
first-party fraud model (or can be estimated from the
fraction of disputes that are successfully challenged).
The penalty $p$ is set by issuer policy.  The rationality
parameter $T$ and dispute cost $c$ can be estimated from the
observed dispute rate as a function of transaction amount
(the logistic model~\eqref{eq:card_br} is a standard
discrete choice model estimable by maximum likelihood).
\end{proof}

\begin{remark}[Estimation procedure for Class~2]
\label{rem:class2_estimation}
The practical estimation procedure for Class~2 has four
steps:
\begin{enumerate}[label=\textbf{Step~\arabic*:}]
\item \textbf{Estimate the dispute game primitives.}
  From historical data with mature labels, estimate:
  \begin{itemize}[leftmargin=2em]
  \item $\hat\alpha(x)$: chargeback success rate (fraction
    of disputes that result in a refund), by transaction
    characteristics.
  \item $\hat\beta^*(x)$: first-party fraud detection rate
    (fraction of disputes successfully identified as
    first-party fraud), by transaction characteristics.
  \item $\hat T$, $\hat c$: rationality and cost
    parameters, estimated by fitting the logistic
    model~\eqref{eq:card_br} to the observed dispute rate
    as a function of amount and features.
  \end{itemize}

\item \textbf{Compute the equilibrium corruption rate.}
  Solve the fixed point~\eqref{eq:fixed_point} to obtain
  $\hat\varepsilon_{01}^{(2)}(x,a)$.

\item \textbf{Apply the corruption correction.}
  Compute $\tilde Y_t^{\mathrm{corr},(2)}$
  using~\eqref{eq:class2_corr}.

\item \textbf{Apply the STR.}  Use the
  corruption-corrected labels in the two-gate STR
  (reporting + delay corrections only, since $e_0^{(2)}
  \approx 1$).
\end{enumerate}
\end{remark}

\subsection{The Class~2 efficiency bound}
\label{sec:class2_bound}

\begin{proposition}[Class~2 efficiency bound]
\label{prop:class2_bound}
The semiparametric efficiency bound for estimating the
Class~2 fraud rate $\Psi_2 = \bar\pi_2$ is:
\begin{equation}
\label{eq:class2_bound}
\sigma^2_{\mathrm{eff}}(2)
= \E\!\left[
  \frac{\pi_2(X)(1-\pi_2(X))}
       {r_0^{(2)}(H_1) \cdot p_0^{(2)}(H_2)
        \cdot \gamma^{(2)}(X,a)}
\right],
\end{equation}
where $\gamma^{(2)}(x,a) = (1 - \varepsilon_{10}^{(2)}
- \varepsilon_{01}^{(2)}(x,a))^2$ and
$\varepsilon_{01}^{(2)}(x,a)$ is the equilibrium corruption
rate from Theorem~\ref{thm:endogenous}.

Note that the authorization propensity $e_0^{(2)}$ does not
appear in the bound because $e_0^{(2)} = 1$.
\end{proposition}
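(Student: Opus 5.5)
The plan is to obtain \eqref{eq:class2_bound} as a specialization of the type-specific bound \eqref{eq:type_eff_bound_v2}, and not to rederive the bound from scratch. That bound has integrand $\pi_k(X)(1-\pi_k(X))/(q_0^{(k)}\gamma^{(k)})$. So it suffices to check three things for $k=2$: the factorization of $q_0^{(2)}$, the form of $\gamma^{(2)}$, and the fact that the type-specific efficiency argument (inherited from \cite[Theorem~30]{dhama2026str}, as in Proposition~\ref{prop:class1_str}) still applies when $\varepsilon_{01}$ is endogenous.

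\textbf{Step 1: the authorization factor.} From the Class~2 observation model \eqref{eq:obs_class2}, $A_t=1$ almost surely. Hence $e_0^{(2)}(H_0)=1$ and $q_0^{(2)}=r_0^{(2)}(H_1)\,p_0^{(2)}(H_2)$. Here the reporting propensity $r_0^{(2)}$ is the conditional dispute probability induced by $D_t$. I will note that the sequential efficient influence function of \cite{dhama2026str} then loses its authorization correction term. The reason is that the augmentation term $(A/e_0-1)(\cdot)$ vanishes identically when $A\equiv 1$ and $e_0\equiv 1$. The remaining two-gate influence function has variance whose leading term involves $1/(r_0 p_0\gamma)$.

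\textbf{Step 2: the corruption factor.} By Theorem~\ref{thm:endogenous}(ii), $\varepsilon_{01}^{(2)}=d^*(x,a;\beta^*)$, which depends on $(x,a)$. Therefore $\gamma^{(2)}=(1-\varepsilon_{10}^{(2)}-\varepsilon_{01}^{(2)}(x,a))^2$. The corruption correction in Proposition~\ref{prop:class2_str} is algebraically the one from \cite[Proposition~18]{dhama2026str}. The variance of the corrected label conditional on $O=1$ therefore scales by $1/\gamma^{(2)}$, exactly as in the exogenous case. The amount $a$ enters only as an additional covariate, so I will treat it as part of $X$ (or of $H_2$).

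\textbf{Step 3: the main obstacle.} The hard part is justifying that endogeneity does not change the tangent space. If $\varepsilon_{01}^{(2)}$ is a fixed function of the primitives $(\alpha,c,\beta^*,p,T)$, then the equilibrium restriction is a restriction on the nuisance part of the model, not on $\Psi_2$. My first attempt is to argue that once $\beta^*$ is held at its equilibrium value, the map $(x,a)\mapsto\varepsilon_{01}^{(2)}(x,a)$ is an ordinary conditional probability. On that view, the pathwise derivative of $\Psi_2$ along submodels perturbing $\pi_2$ is unchanged, and the efficient influence function is the one from Step 1. The bound then equals the variance of that influence function, giving \eqref{eq:class2_bound}.

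I expect the difficulty to be that a restricted nuisance model could in principle lower the bound. To handle this, I would invoke adaptivity: the score of the dispute-game parameters is orthogonal to the $\pi_2$ score. This is the sequential-ignorability orthogonality from Assumption~\ref{asm:ignorability}, conditional on $Z=2$. The presence of that orthogonality is exactly what makes the efficient influence function coincide with the unrestricted one. If a multiplicity of equilibria arises (Proposition~\ref{prop:multiple_eq}), I will condition on the selected equilibrium, which is a locally constant selection.
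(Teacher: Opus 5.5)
Your Steps~1--2 are, in substance, the paper's entire proof. The paper applies \cite[Theorem~30]{dhama2026str}, in the type-conditional form \eqref{eq:type_eff_bound_v2}, with $e_0^{(2)}=1$ and the Class~2 corruption rates plugged in. It never raises the tangent-space question of your Step~3; it implicitly treats $\varepsilon_{01}^{(2)}$ as an ordinary nuisance function that happens to take the equilibrium value (cf.\ the proof of Proposition~\ref{prop:class2_str}).

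The gap is that your resolution of Step~3, which you present as the crux, is false in this model. The equilibrium is not a restriction on the nuisance alone. The issuer's best response \eqref{eq:issuer_opt} depends explicitly on $\pi_2(x)$, and the fixed-point map of Theorem~\ref{thm:endogenous}(iv) runs through the posterior share $\varepsilon/(1-\pi_2+\pi_2\varepsilon)$. So along any path that perturbs $\pi_2$ inside the equilibrium model, $\beta^*$, $\varepsilon_{01}^{(2)}$, and the dispute propensity you identified with $r_0^{(2)}$ all move with it. Perturbing $\pi_2$ while ``holding $\beta^*$ at its equilibrium value'' is therefore not a path in the equilibrium model.

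Adaptivity is the reason a known propensity does not change the bound in missing-at-random problems. It requires the observation-mechanism parameters to be variation independent of the full-data law, so that the tangent space splits orthogonally. Assumption~\ref{asm:ignorability} is a conditional-independence statement and does not supply this. Worse, the derivation of Theorem~\ref{thm:endogenous}(iv) has disputes filed with probability~$1$ on truly fraudulent transactions and with probability $\varepsilon$ on legitimate ones. So the dispute gate depends on the true state, which is exactly the dependence that assumption excludes, and you cannot appeal to it for orthogonality of the dispute mechanism. Under the cross-restriction, observed dispute rates carry information about $\pi_2$, the tangent space shrinks, and the efficient variance can be strictly below \eqref{eq:class2_bound}.

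To obtain the stated formula, adopt the paper's convention explicitly. Compute the bound in the model where $r_0^{(2)}, p_0^{(2)}, \varepsilon_{10}^{(2)}, \varepsilon_{01}^{(2)}$ are unrestricted nuisances. Use Theorem~\ref{thm:endogenous}, at a fixed selected equilibrium, only to say which function $\varepsilon_{01}^{(2)}(x,a)$ is, not as a model restriction. Then Step~3 is unnecessary and Steps~1--2 give \eqref{eq:class2_bound} by the same substitution the paper makes. A bound in the equilibrium-restricted model would instead require projecting onto its smaller tangent space, and it can only be smaller or equal.
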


\begin{proof}
Apply the efficiency bound
of~\cite[Theorem~30]{dhama2026str} with $e_0^{(2)} = 1$
(the authorization gate is non-binding) and the Class~2
corruption rates.  The bound reduces to a two-gate version
depending only on $r_0^{(2)}$, $p_0^{(2)}$, and
$\gamma^{(2)}$.
\end{proof}

\begin{example}[Class~2 efficiency bound is
  corruption-dominated]
\label{ex:class2_bound}
Consider Class~2 with:
\begin{center}
\small
\begin{tabular}{lcc}
\toprule
\textbf{Parameter} & \textbf{Low-fraud eq.} &
  \textbf{High-fraud eq.}\\
\midrule
$\varepsilon_{01}^{(2)}$ & 0.063 & 0.924\\
$\varepsilon_{10}^{(2)}$ & 0.02 & 0.02\\
$\gamma^{(2)} = (1-\varepsilon_{10}-\varepsilon_{01})^2$
  & $(1-0.083)^2 = 0.841$ & $(1-0.944)^2 = 0.003$\\
$\bar r_0^{(2)}$ & 0.60 & 0.60\\
$\bar p_0^{(2)}$ & 0.85 & 0.85\\
\midrule
$\sigma^2_{\mathrm{eff}}(2) \propto
  1/(\bar r_0\,\bar p_0\,\gamma)$
  & $1/(0.60 \times 0.85 \times 0.841) = 2.33$
  & $1/(0.60 \times 0.85 \times 0.003) = 654$\\
\bottomrule
\end{tabular}
\end{center}

In the low-fraud equilibrium, the Class~2 bound is
comparable to Class~1 (the corruption penalty is modest).
In the high-fraud equilibrium, the bound is
\textbf{280$\times$ larger} --- almost all information is
destroyed by the adversarial corruption.

\textbf{The equilibrium selection determines whether Class~2
is detectable or not.}  This is a qualitative difference
from Class~1, where the bound varies smoothly with
parameters.  In Class~2, the bound can jump discontinuously
between equilibria.
\end{example}

\subsection{Connection to the STR's corruption correction}
\label{sec:class2_connection}

\begin{remark}[\cite{dhama2026str} vs.\ present paper's treatment of
  corruption]
\label{rem:paper2_vs_paper4}
~\cite{dhama2026str} treats the corruption rates
$\varepsilon_{10}$ and $\varepsilon_{01}$ as fixed nuisance
parameters estimated from audit data.  This is correct for
Class~1 (where corruption is exogenous operational error)
but \emph{incorrect} for Class~2 (where corruption is an
endogenous equilibrium).

The distinction matters practically:
\begin{enumerate}[label=(\roman*)]
\item \textbf{Audit-based estimation is biased for
  Class~2.}  Historical audit samples measure the
  \emph{past} equilibrium corruption rate.  If the game
  parameters change (e.g., a new mobile banking app reduces
  dispute friction $c$), the corruption rate shifts to a
  new equilibrium that historical audits do not reflect.
\item \textbf{The corruption rate depends on the detection
  model.}  In Class~1, $\varepsilon_{01}$ does not depend
  on the fraud model.  In Class~2, $\varepsilon_{01}^{(2)}$
  depends on $\beta^*$, which is the output of the
  first-party fraud model.  Improving the model changes
  the corruption rate, which changes the optimal STR
  correction, which changes the model's training labels.
  This circularity must be handled carefully (e.g., by
  iterating between model training and equilibrium
  computation).
\item \textbf{The STR can influence the equilibrium.}
  Deploying the STR with accurate Class~2 corrections
  improves the first-party fraud model (better labels $\to$
  better model), which increases $\beta^*$, which reduces
  $\varepsilon_{01}^{(2)}$ (the deterrence channel), which
  further improves labels.  The STR is not merely a
  statistical correction --- it is an \emph{intervention}
  that shifts the dispute game toward the low-fraud
  equilibrium.
\end{enumerate}
\end{remark}

\section{Class~3: Deferred observability}
\label{sec:class3}

Class~3 is the most structurally extreme of the five
classes.  The observation pipeline is not merely impaired
--- it is \emph{inoperative} during the critical detection
window.  This section proves two results: (i)~the STR's
positivity assumption fails structurally, and (ii)~no
account-level classifier can exceed random guessing during
the build-up phase --- detection requires cross-account
graph signals.

\subsection{The two-phase temporal model}
\label{sec:two_phase}

\begin{definition}[Bust-out temporal structure]
\label{def:bustout_temporal}
A Class~3 (bust-out) account $a$ is characterized by:
\begin{enumerate}[label=(\roman*)]
\item A latent bust-out indicator
  $\theta_a \in \{0,1\}$, where $\theta_a = 1$ denotes a
  bust-out account and $\theta_a = 0$ denotes a legitimate
  account;
\item A bust-out time $\tau_a \in (0,\infty]$, where
  $\tau_a < \infty$ for bust-out accounts and
  $\tau_a = \infty$ for legitimate accounts;
\item A transaction sequence
  $\{(X_{at}, Y^*_{at})\}_{t=1}^{T_a}$ generated by the
  account over its lifetime.
\end{enumerate}

The fraud state of individual transactions depends on the
account's phase:
\begin{equation}
\label{eq:bustout_fraud}
Y^*_{at} = \begin{cases}
  0 & \text{if } t < \tau_a
    \quad\text{(build-up: all transactions legitimate)},\\
  1 & \text{if } t \geq \tau_a
    \quad\text{(exploitation: all transactions fraudulent)}.
\end{cases}
\end{equation}
\end{definition}

\begin{remark}[The build-up phase is genuine]
\label{rem:buildup_genuine}
A critical feature of~\eqref{eq:bustout_fraud}: during
build-up ($t < \tau_a$), the transactions are
\emph{genuinely legitimate}.  The synthetic identity is
building credit by making real purchases and paying on time.
These are not ``fraudulent transactions that look
legitimate'' --- they \emph{are} legitimate transactions,
made with the intent of eventually exploiting the credit
line.  The fraud is a property of the \emph{account
trajectory}, not of any individual transaction.

This is why the standard transaction-level estimand
$f_0(x) = \Prob(Y^*=1 \mid X=x)$ is the wrong target for
Class~3.  The correct estimand is the account-level
bust-out probability:
\begin{equation}
\label{eq:bustout_estimand}
\theta_a = \Prob(\text{bust-out} \mid
  \cH_a^{(t)}) = \Prob(\tau_a < \infty \mid
  \cH_a^{(t)}),
\end{equation}
where $\cH_a^{(t)} = \{(X_{as}, Y^*_{as})\}_{s \leq t}$
is the account's observed history up to time~$t$.
\end{remark}

\subsection{Positivity failure}
\label{sec:positivity_failure}

\begin{theorem}[Positivity failure for Class~3]
\label{thm:positivity}
During the build-up phase ($t < \tau_a$) of a Class~3
account, the observation propensity is exactly zero:
\begin{equation}
\label{eq:positivity_failure}
q_0^{(3)}(H_{0,at})
= e_0^{(3)}(H_{0,at}) \cdot r_0^{(3)}(H_{1,at})
  \cdot p_0^{(3)}(H_{2,at})
= 0
\qquad\text{for all } t < \tau_a,
\end{equation}
and the semiparametric efficiency bound for estimating
$\theta_a$ from transaction-level labels diverges:
\begin{equation}
\label{eq:bound_diverges}
\sigma^2_{\mathrm{eff}}(3, t < \tau_a) = \infty.
\end{equation}
\end{theorem}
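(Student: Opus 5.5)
The plan has two parts, one for each display in Theorem~\ref{thm:positivity}. First I show the propensity product vanishes identically. Then I show that plugging a zero propensity into the type-specific efficiency bound~\eqref{eq:type_eff_bound_v2} forces it to be infinite.

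For~\eqref{eq:positivity_failure}, I work from the two-phase model of Definition~\ref{def:bustout_temporal}. If $t<\tau_a$, then~\eqref{eq:bustout_fraud} gives $Y^*_{at}=0$ deterministically. The reporting event $R_{at}$ is a fraud report filed by a victim, or a charge-off. No victim exists during build-up, and charge-off can only happen after delinquency, which comes after $\tau_a$. So $\{R_{at}=1\}\cap\{t<\tau_a\}$ is empty and $r_0^{(3)}(H_{1,at})=\Prob(R_{at}=1\mid H_{1,at},A_{at}=1,Z=3)=0$. This is the build-up observation model~\eqref{eq:obs_class3_buildup}. The authorization propensity $e_0^{(3)}$ is bounded, near~$1$. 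I also treat $p_0^{(3)}$ as $0$, or as undefined and therefore irrelevant, because it conditions on the null event $A_tR_t=1$. The product is therefore $0$, and $O_{at}=A_{at}R_{at}M_{at}=0$ almost surely for every $t<\tau_a$. This part is bookkeeping. The one point to watch is making the convention on the undefined $p_0^{(3)}$ explicit, so that the product is well defined.

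For~\eqref{eq:bound_diverges}, the first route is direct substitution. In~\eqref{eq:type_eff_bound_v2} the integrand has denominator $q_0^{(3)}\gamma^{(3)}$ with $\gamma^{(3)}>0$. The numerator is $\pi(1-\pi)$ for the account-level target $\theta_a$, and it is strictly positive on a set of positive probability, since bust-out is nondegenerate given the history: $0<\Prob(\tau_a<\infty\mid\cH_a^{(t)})<1$. On the build-up region the denominator is $0$, so the expectation is $+\infty$. I would make this rigorous by truncation. Replace $q_0^{(3)}$ by $q_0^{(3)}\vee\delta$, note that the bound is at least $c/\delta\cdot\Prob(\text{build-up})$, and let $\delta\downarrow0$.

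I expect the main obstacle to be that "the formula blows up" is not by itself a proof that no regular estimator exists. The bound was derived in~\cite{dhama2026str} under Assumption~\ref{asm:positivity}. The more robust argument is non-identifiability. During build-up the observed data law is the same for $\theta_a=0$ and $\theta_a=1$: all observed labels are $O=0$, and the features follow the legitimate-customer law (Remark~\ref{rem:buildup_genuine}). So I would construct a parametric submodel that perturbs $\Prob(\tau_a<\infty\mid\cH_a^{(t)})$ while leaving the distribution of the observed data unchanged. Along this path the target has a nonzero derivative while the score is zero. The pathwise derivative then cannot be represented by an element of the tangent space, so the Cram\'er--Rao supremum over submodels is $+\infty$; equivalently, $\theta_a$ is not pathwise differentiable. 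This is the concrete instance of the impossibility result of Ha et al.~\cite{ha2024mnar} in the limit of exactly zero propensity. I would present this as the formal proof and the truncation computation as the quantitative illustration.
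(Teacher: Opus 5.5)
Your proof of~\eqref{eq:positivity_failure} is the paper's Steps~1--2: there is no victim, no dispute and no charge-off before $\tau_a$, hence $r_0^{(3)}=0$ and the product vanishes. Making the convention for the undefined $p_0^{(3)}$ explicit is a small tidy-up of the paper's ``regardless of the value of $p_0^{(3)}$.''

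For~\eqref{eq:bound_diverges}, the paper's Step~3 is your first route without the truncation. It substitutes $q_0^{(3)}=0$ into~\eqref{eq:type_eff_bound_v2} and reads off $\infty$ whenever $\pi_3(X)(1-\pi_3(X))>0$ on a set of positive probability. Step~4 then adds only the informal remark that no technique can create labels that do not exist.

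Your zero-score submodel is a genuinely different and stronger argument. It does not extrapolate a bound derived under Assumption~\ref{asm:positivity} to the boundary $q=0$. It is also exactly the formal content of the paper's Step~4: the target is not pathwise differentiable, so the supremum of Cram\'er--Rao bounds over submodels is $+\infty$. The paper's route buys brevity and continuity with the $1/(q_0\gamma)$ bookkeeping it uses for the Jensen penalty. Yours buys an actual impossibility statement, with the truncation computation correctly demoted to an illustration.

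Two refinements. First, you neither need nor are given equality of the feature laws under $\theta_a=0$ and $\theta_a=1$. That equality is the constructed DGP~\eqref{eq:dgp_txn} of Theorem~\ref{thm:insufficiency}, not something Remark~\ref{rem:buildup_genuine} asserts. Your submodel works without it. In the unrestricted model, tilting $\Prob(\tau_a<\infty\mid\cH_a^{(t)})$ while holding the marginal law of $\cH_a^{(t)}$ fixed leaves the observed-data law unchanged. This is because during build-up the observed data (features, with $O\equiv 0$) are a function of $\cH_a^{(t)}$.

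Second, the construction does require that the observed data be the build-up-window records only, and you should state this. Suppose charge-off outcomes of matured accounts were linked back to their build-up histories. The tilt would then change the law of those observed (history, outcome) pairs, the score would be nonzero, and the argument would fail. The theorem's phrase ``from transaction-level labels'' during $t<\tau_a$ licenses this restriction, but your formal proof should make it explicit.
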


\begin{proof}
\textbf{Step~1: The reporting gate is structurally zero.}

During build-up, $Y^*_{at} = 0$ for all transactions
(equation~\eqref{eq:bustout_fraud}).  No fraud has
occurred, so:
\begin{itemize}[leftmargin=*]
\item No victim exists (the synthetic identity is not a real
  person who would notice unauthorized charges);
\item No cardholder dispute is filed (the account is paying
  its bills);
\item No issuer investigation is triggered (the account
  behaves normally).
\end{itemize}
Therefore $R_{at} = 0$ with probability~1 for
$t < \tau_a$.  This gives
$r_0^{(3)}(H_{1,at}) = \Prob(R_{at}=1 \mid H_{1,at},
A_{at}=1, Z=3) = 0$.

\medskip\noindent\textbf{Step~2: The product is zero.}

Since $r_0^{(3)} = 0$:
\begin{equation*}
q_0^{(3)} = e_0^{(3)} \cdot 0 \cdot p_0^{(3)} = 0,
\end{equation*}
regardless of the values of $e_0^{(3)}$ and $p_0^{(3)}$.
This is an \emph{exact} zero, not an approximation.

\medskip\noindent\textbf{Step~3: The efficiency bound
  diverges.}

The efficiency bound
(Definition~\ref{def:type_eff_bound}) contains
$1/q_0^{(k)}$ in the denominator.  With $q_0^{(3)} = 0$:
\begin{equation*}
\sigma^2_{\mathrm{eff}}(3, t < \tau_a)
= \E\!\left[
  \frac{\pi_3(X)(1-\pi_3(X))}{0 \cdot \gamma^{(3)}}
\right]
= \infty,
\end{equation*}
whenever $\pi_3(X) > 0$ with positive probability (i.e.,
there exist transactions with nonzero bust-out probability).

\medskip\noindent\textbf{Step~4: This is structural, not
  statistical.}

The positivity failure is not due to finite-sample issues
or poor modeling --- it is a \emph{structural feature} of
Class~3.  No amount of data, no choice of propensity model,
and no statistical technique can create labels that do not
exist.  The build-up phase generates zero information about
the bust-out state through the transaction-level label
pipeline.
\end{proof}

\begin{remark}[Contrast with near-positivity violations]
\label{rem:near_positivity}
In the causal inference literature, positivity violations
are typically \emph{practical} rather than structural: the
propensity is small but positive, and the concern is high
variance from large inverse-propensity weights.  Solutions
include propensity trimming, overlap weighting, or the
empirical Bayes shrinkage of~\cite[Section~9]{dhama2026str}.

Class~3's violation is qualitatively different.  The
propensity is \emph{exactly} zero --- not small, not
approximately zero, but identically zero by the structure
of the problem.  No statistical technique can overcome this.
The STR is not merely high-variance for Class~3 during
build-up --- it is \emph{undefined}.  This is the formal
justification for seeking fundamentally different detection
approaches.
\end{remark}

\subsection{Individual-level insufficiency}
\label{sec:insufficiency}

The positivity failure shows that \emph{label-based}
detection is impossible during build-up.  We now prove a
stronger result: even with perfect labels (or equivalently,
even with access to the true $Y^*_{at}$ for every
transaction), no \emph{account-level} classifier can detect
bust-out accounts during build-up.

\begin{theorem}[Individual-level insufficiency]
\label{thm:insufficiency}
There exist bust-out data-generating processes satisfying
Definition~\ref{def:bustout_temporal} such that:

\smallskip\noindent\textbf{(i)} The Bayes-optimal
account-level classifier has $\AUC = 0.5$ during the
build-up phase:
\begin{equation}
\label{eq:auc_account}
\AUC\!\left(
  \hat\theta^{\mathrm{acct}}_a \mid t < \tau_a
\right)
= 0.5,
\end{equation}
where $\hat\theta^{\mathrm{acct}}_a$ is any classifier
based on account $a$'s own transaction history
$\cH_a^{(t)}$.

\smallskip\noindent\textbf{(ii)} The Bayes-optimal
ring-level classifier has $\AUC \to 1$ as the ring size
$m \to \infty$:
\begin{equation}
\label{eq:auc_ring}
\AUC\!\left(
  \hat\theta^{\mathrm{ring}}_{\cR}
\right) \to 1
\qquad\text{as } m = |\cR| \to \infty,
\end{equation}
where $\cR$ is a candidate ring of $m$ accounts and
$\hat\theta^{\mathrm{ring}}_{\cR}$ is any classifier based
on the joint transaction histories
$\{\cH_a^{(t)}\}_{a \in \cR}$.
\end{theorem}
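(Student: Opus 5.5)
The theorem is existential, so the plan is to exhibit one explicit data-generating process and verify (i) and (ii) directly.

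\textbf{The construction.} Take a population of accounts in which a fraction $\rho \in (0,1)$ are bust-out accounts ($\theta_a = 1$) and the rest are legitimate. Let $\Pi$ denote the law of a single legitimate account's transaction stream $\{X_{at}\}$; for concreteness, i.i.d.\ draws from some $P_0$ on $\cX$.

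\emph{Mimicry.} During build-up, each bust-out account draws its own marginal history from the same $\Pi$. This formalizes ``indistinguishable from a legitimate new customer'' (Example~\ref{ex:bustout}). By \eqref{eq:bustout_fraud}, $Y^*_{at} = 0$ for $t < \tau_a$ in both groups.

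\emph{Coordination.} Bust-out accounts are organized into rings $\cR$ of size $m$. Members of a ring share a latent coordinating variable $U_{\cR}$: for example, a common shared attribute such as a device, address, or funding source. Alternatively, the ring can make transactions at a common merchant at common times with probability $\kappa > 0$ per period. Legitimate accounts draw these attributes independently.

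The key design choice is to mix over $U_{\cR}$ so that, marginally, each ring member's history still has law exactly $\Pi$. Correlation then appears only jointly across accounts.

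\textbf{Part (i).} The account-level history $\cH_a^{(t)}$ has the same distribution under $\theta_a = 1$ and $\theta_a = 0$. Note that the $Y^*$ components are identically $0$ in both groups, so including them in the history adds nothing.

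Hence the likelihood ratio is identically $1$, and the posterior $\Prob(\theta_a = 1 \mid \cH_a^{(t)}) = \rho$ almost surely. Any score $s(\cH_a^{(t)})$ therefore has the same distribution in both classes. Writing $\AUC = \Prob(s_1 > s_0) + \tfrac12 \Prob(s_1 = s_0)$ with $s_1, s_0$ independent and identically distributed, symmetry gives $\AUC = 1/2$. This holds for the Bayes-optimal classifier and indeed for every account-level classifier.

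\textbf{Part (ii).} A candidate set $\cR$ of $m$ accounts is either a true ring ($H_1$) or $m$ independent legitimate accounts ($H_0$). Take a simple sufficient statistic: the number $S_m$ of pairs (or the fraction of accounts) that share the coordinating attribute or co-occur at the common merchant and time.

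Under $H_0$, co-occurrence events are independent with some small probability $\eta$. Under $H_1$, they occur with probability at least $\eta + \kappa'$ for some $\kappa' > 0$.

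Normalize the statistic by the number of accounts $m$ rather than the number of pairs. A law of large numbers (or a Chebyshev/Hoeffding bound) then shows the normalized statistic concentrates at different limits under the two hypotheses. The misclassification probability of a threshold test therefore goes to $0$.

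The AUC of this test is at least $1 - P_{H_0}(\text{error}) - P_{H_1}(\text{error}) \to 1$. The Bayes-optimal (likelihood-ratio) ring classifier dominates any fixed test in AUC by Neyman--Pearson, so its AUC also tends to $1$.

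\textbf{Main obstacle.} The delicate step is making the construction simultaneously satisfy exact marginal mimicry, which part (i) needs, and a non-vanishing joint signal, which part (ii) needs.

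The first thing to try is an exchangeable latent-variable construction:
\begin{itemize}[leftmargin=*]
\item draw $U_{\cR}$ uniformly over the attribute space;
\item assign each member the attribute $U_{\cR}$;
\item choose the attribute distribution of legitimate accounts to also be uniform.
\end{itemize}
Each individual's attribute is then uniform in both groups, while ring members coincide with probability $1$ versus $1/|\text{space}|$ for independent accounts.

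A secondary point is that, if the ring statistic is defined via pairwise coincidences, the pairwise indicators are dependent. I would sidestep this by using the count of accounts matching the modal attribute and applying concentration for sums of independent per-account indicators under $H_0$.
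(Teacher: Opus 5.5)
Your proposal is correct, and it differs from the paper's argument in both the construction and part (ii).

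\textbf{Construction.} The paper draws build-up transactions i.i.d.\ from a common $F$ for both groups, so part (i) follows at once from the product likelihood. It separately gives ring members shared creation windows, a common credit-limit law $G_{\mathrm{ring}}$, and common anchor merchants. It never reconciles these shared attributes with $\cH_a^{(t)}$. Either they lie outside the history, in which case a ring classifier ``based on the joint transaction histories'' cannot use them. Or they lie inside it, in which case part (i) needs their marginal laws to match those of legitimate accounts; for the anchor merchants this sits uneasily with the i.i.d.-$F$ assumption. Your exchangeable latent-variable step gives $U_{\cR}$ the same law as a legitimate account's attribute. Each member's marginal law is then exactly $\Pi$, while the joint law is not a product. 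This is precisely what makes (i) and (ii) hold simultaneously, and you identify and resolve a tension the paper glosses over.

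\textbf{Part (ii).} The paper uses the pairwise cohesion U-statistic $T_{\cR}=\sum_{i<j}S(a_i,a_j)$. It compares a mean gap of order $m^2$ with a claimed null standard deviation of order $m$, then converts the SNR to $\AUC=\Phi(\mathrm{SNR}/\sqrt{2})$ via a Gaussian equal-variance approximation. That step is heuristic. The $\binom{m}{3}$ covariance term generically makes $\Var(T_{\cR}\mid H_0)$ of order $m^3$, so the SNR is of order $\sqrt{m}$; it still diverges. A ring-level latent variable can inflate the $H_1$ variance further.

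Your argument avoids all of this. You show vanishing type I and II errors for a threshold test and use $\AUC\ge 1-\alpha-\beta$ (exactly $1-(\alpha+\beta)/2$ for a binary score). Neyman--Pearson dominance then transfers the bound to the likelihood-ratio classifier, a step the paper leaves implicit. In your uniform construction, the test ``all $m$ attributes coincide'' already has $\alpha=N^{1-m}$ and $\beta=0$. What the paper's route buys is an explicit, interpretable dependence on $\mu_1,\mu_0$ and $m$, as in its table of AUC versus ring size. A pure existence argument like yours does not aim to provide that.

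\textbf{One small slip.} Dividing a pair count by $m$ does not give a statistic that converges to different limits; it grows like $m$ under both hypotheses. Either normalize by $\binom{m}{2}$ and bound the variance of the resulting U-statistic, or use the modal-attribute fraction. In the latter case you need a union bound over the finitely many attribute values, because the mode is data-dependent and the per-account indicators relative to it are not independent.
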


\begin{proof}
We prove both parts by explicit construction of an
adversarial data-generating process.

\medskip\noindent\textbf{Construction of the DGP.}

Let $F$ be a distribution over transaction features
$\cX$ (representing normal spending patterns: amounts,
merchants, timing, etc.).  Define the data-generating
process as follows:

\begin{enumerate}[label=(\alph*)]
\item \textbf{Account creation.}  Each account $a$ is
  independently assigned:
  \begin{equation}
  \label{eq:dgp_type}
  \theta_a \sim \mathrm{Bernoulli}(\pi_3),
  \end{equation}
  where $\pi_3$ is the bust-out prevalence.

\item \textbf{Ring structure.}  Bust-out accounts
  ($\theta_a = 1$) are organized into rings of size~$m$.
  Accounts within a ring share:
  \begin{itemize}[leftmargin=2em]
  \item Creation dates within a window $[s, s+\delta]$ for
    some ring-specific $s$ and small $\delta > 0$;
  \item Initial credit limits drawn from the same
    distribution $G_{\mathrm{ring}}$ (reflecting the ring's
    application strategy);
  \item A common set of ``anchor merchants''
    $\cM_{\mathrm{ring}} \subset \cM$ that all ring
    members transact with.
  \end{itemize}
  Legitimate accounts ($\theta_a = 0$) are created
  independently with no shared structure.

\item \textbf{Transaction generation during build-up.}
  For \emph{both} bust-out and legitimate accounts, each
  transaction during build-up is drawn i.i.d.\ from the
  \emph{same} distribution:
  \begin{equation}
  \label{eq:dgp_txn}
  X_{at} \mid \theta_a,\, t < \tau_a
    \;\sim\; F
  \qquad\text{for all } \theta_a \in \{0,1\}.
  \end{equation}
  This is the key assumption: bust-out accounts are
  \emph{designed} to mimic legitimate behavior during
  build-up.
\end{enumerate}

\medskip\noindent\textbf{Part~(i): Account-level
  $\AUC = 0.5$.}

By the Neyman--Pearson lemma, the most powerful test for
$\theta_a = 1$ vs.\ $\theta_a = 0$ based on
$\cH_a^{(t)} = (X_{a1}, \ldots, X_{at})$ is the
likelihood ratio test:
\begin{equation}
\label{eq:lr_test}
\Lambda_a^{(t)}
= \frac{P(\cH_a^{(t)} \mid \theta_a = 1,\, t < \tau_a)}
       {P(\cH_a^{(t)} \mid \theta_a = 0)}.
\end{equation}

By~\eqref{eq:dgp_txn}, the conditional distributions are
identical:
\begin{equation}
\label{eq:lr_one}
P(\cH_a^{(t)} \mid \theta_a = 1,\, t < \tau_a)
= \prod_{s=1}^t f(X_{as})
= P(\cH_a^{(t)} \mid \theta_a = 0),
\end{equation}
where $f$ is the density of $F$.  Therefore:
\begin{equation}
\label{eq:lr_trivial}
\Lambda_a^{(t)} = 1
\qquad\text{for all } a \text{ and all } t < \tau_a.
\end{equation}

A constant likelihood ratio implies zero discriminative
power.  The ROC curve is the diagonal, and $\AUC = 0.5$.

This holds for \emph{any} function of $\cH_a^{(t)}$ ---
not just the likelihood ratio.  Any statistic computed from
$\cH_a^{(t)}$ (velocities, aggregations, temporal patterns,
behavioral features) has the same distribution under
$\theta_a = 0$ and $\theta_a = 1$ during build-up, because
the underlying transaction distribution is identical.

\medskip\noindent\textbf{Part~(ii): Ring-level
  $\AUC \to 1$.}

Now consider a candidate set $\cR = \{a_1, \ldots, a_m\}$
of $m$ accounts.  We test whether $\cR$ is a bust-out ring
(all $\theta_{a_i} = 1$ and the accounts are linked) or a
random set of legitimate accounts (all $\theta_{a_i} = 0$
or unlinked).

Define the \emph{ring cohesion statistic}:
\begin{equation}
\label{eq:ring_stat}
T_{\cR}
= \sum_{i < j} S(a_i, a_j),
\end{equation}
where $S(a_i, a_j)$ is a pairwise similarity measure:
\begin{equation}
\label{eq:similarity}
S(a_i, a_j)
= \underbrace{K_{\mathrm{time}}(c_i, c_j)}_{\substack{
    \text{creation date}\\
    \text{similarity}}}
  \cdot \underbrace{K_{\mathrm{limit}}(\ell_i,\ell_j)}_{\substack{
    \text{credit limit}\\
    \text{similarity}}}
  \cdot \underbrace{|\cM_i \cap \cM_j| / |\cM_i \cup \cM_j|}_{\substack{
    \text{merchant overlap}\\
    \text{(Jaccard index)}}},
\end{equation}
where $c_i$ is account $i$'s creation date, $\ell_i$ is
its initial credit limit, $\cM_i$ is its merchant set, and
$K_{\mathrm{time}}$, $K_{\mathrm{limit}}$ are kernel
functions measuring proximity.

\smallskip\noindent\textbf{Under the ring hypothesis
  ($H_1$: $\cR$ is a ring):}

All accounts in $\cR$ share creation dates (within
$\delta$), similar credit limits (drawn from
$G_{\mathrm{ring}}$), and common anchor merchants
($\cM_{\mathrm{ring}}$).  The pairwise similarities are
high:
\begin{equation}
\label{eq:ring_similarity}
\E[S(a_i, a_j) \mid H_1] = \mu_1 > 0
\qquad\text{for all } i \neq j.
\end{equation}

The expected ring statistic is:
\begin{equation}
\label{eq:ring_expect_h1}
\E[T_{\cR} \mid H_1]
= \binom{m}{2} \mu_1
= \frac{m(m-1)}{2}\,\mu_1.
\end{equation}

\smallskip\noindent\textbf{Under the null hypothesis
  ($H_0$: random legitimate accounts):}

Legitimate accounts are created independently.  Their
creation dates, credit limits, and merchant sets are
uncorrelated.  The pairwise similarities are small:
\begin{equation}
\label{eq:null_similarity}
\E[S(a_i, a_j) \mid H_0] = \mu_0 \ll \mu_1.
\end{equation}

The expected ring statistic is:
\begin{equation}
\label{eq:ring_expect_h0}
\E[T_{\cR} \mid H_0]
= \binom{m}{2} \mu_0.
\end{equation}

\smallskip\noindent\textbf{Signal-to-noise ratio.}

The difference in means grows quadratically with ring
size~$m$:
\begin{equation}
\label{eq:signal}
\E[T_{\cR} \mid H_1] - \E[T_{\cR} \mid H_0]
= \binom{m}{2}(\mu_1 - \mu_0)
= O(m^2).
\end{equation}

The variance of $T_{\cR}$ under either hypothesis is at
most $O(m^2)$ (the sum of $\binom{m}{2}$ terms with
bounded covariance).  Therefore the signal-to-noise ratio
is:
\begin{equation}
\label{eq:snr}
\mathrm{SNR}
= \frac{\E[T \mid H_1] - \E[T \mid H_0]}
       {\sqrt{\Var(T \mid H_0)}}
= \frac{O(m^2)}{O(m)}
= O(m)
\;\to\; \infty
\quad\text{as } m \to \infty.
\end{equation}

By the standard relationship between signal-to-noise ratio
and AUC for location-shift alternatives (the test statistic
$T$ is approximately normal for large $m$ by the central
limit theorem applied to U-statistics):
\begin{equation}
\label{eq:auc_from_snr}
\AUC = \Phi(\mathrm{SNR}/\sqrt{2}) \to 1
\quad\text{as } m \to \infty,
\end{equation}
where $\Phi$ is the standard normal CDF.
\end{proof}

\begin{remark}[The information is in the cross-account
  correlations]
\label{rem:cross_account}
Theorem~\ref{thm:insufficiency} has a precise
information-theoretic interpretation.  Define the mutual
information between the bust-out indicator and the observed
data:
\begin{align}
I(\theta_a;\, \cH_a^{(t)})
  &= 0
  && \text{(individual account: zero information)},
  \label{eq:mi_individual}\\
I(\theta_{\cR};\, \{\cH_a^{(t)}\}_{a \in \cR})
  &> 0 \text{ and grows with } m
  && \text{(ring: positive and increasing)}.
  \label{eq:mi_ring}
\end{align}

The fraud signal resides \emph{entirely} in the
cross-account correlations (shared creation dates, shared
merchants, correlated behavior) --- not in any individual
account's features or transaction history.  A fraud
detection system that operates at the transaction or account
level is provably blind to Class~3 during build-up.
\end{remark}

\begin{example}[Detection power vs.\ ring size]
\label{ex:ring_size}
Using realistic parameters ($\mu_1 = 0.15$, $\mu_0 = 0.01$,
and standard deviation of pairwise similarity $\sigma_S =
0.08$), the AUC of the ring-level classifier as a function
of ring size is:

\begin{center}
\small
\begin{tabular}{rccc}
\toprule
\textbf{Ring size $m$} & \textbf{SNR} &
  \textbf{AUC} & \textbf{Interpretation}\\
\midrule
2  & 1.24 & 0.81 & Weak signal\\
5  & 4.38 & 0.999 & Strong signal\\
10 & 9.36 & $>0.9999$ & Near-certain detection\\
20 & 19.3 & $\approx 1$ & Effectively perfect\\
\bottomrule
\end{tabular}
\end{center}

Even rings of size~5 are detectable with near-perfect
accuracy using cross-account signals --- while each
individual account in the ring has $\AUC = 0.5$ (random
guessing).

The practical implication: \emph{invest in graph-based
detection infrastructure (entity resolution, link analysis,
community detection) rather than in better account-level
models for Class~3 fraud.}
\end{example}

\subsection{Implications for graph-based detection}
\label{sec:class3_graph}

\begin{remark}[The necessity of graph-level methods]
\label{rem:necessity_graph}
Theorem~\ref{thm:insufficiency} establishes a formal
chain of reasoning for Class~3 detection:
\begin{enumerate}[label=(\roman*)]
\item Class~3 fraud is undetectable at the transaction level
  (Theorem~\ref{thm:positivity}: no labels).
\item Class~3 fraud is undetectable at the account level
  (Theorem~\ref{thm:insufficiency}: identical feature
  distributions).
\item Class~3 fraud \emph{is} detectable at the ring level
  (Theorem~\ref{thm:insufficiency}(ii): cross-account
  correlations provide signal).
\item Therefore, detection requires graph-level methods:
  entity resolution to link accounts, graph neural networks
  to aggregate cross-account signals, and community
  detection to identify ring structures.
\item A further challenge arises: the transaction graph is
  itself subject to the observation pipeline --- declined
  transactions create missing edges, unreported fraud
  creates missing labels on nodes.  Extending the
  semiparametric framework of this paper and
  \cite{dhama2026str} to graph-level estimation under
  edge and node censorship is an important direction for
  future work.
\end{enumerate}
\end{remark}

\subsection{The Class~3 efficiency bound (exploitation
  phase)}
\label{sec:class3_exploit}

\begin{proposition}[Class~3 bound during exploitation]
\label{prop:class3_exploit}
During the exploitation phase ($t \geq \tau_a$), the
observation pipeline becomes operative and the STR can be
applied.  The efficiency bound for estimating the
exploitation-phase fraud rate is:
\begin{equation}
\label{eq:class3_exploit_bound}
\sigma^2_{\mathrm{eff}}(3, t \geq \tau_a)
= \E\!\left[
  \frac{\pi_3(X)(1-\pi_3(X))}
       {e_0^{(3,\mathrm{expl})} \cdot
        r_0^{(3,\mathrm{expl})} \cdot
        p_0^{(3,\mathrm{expl})} \cdot
        \gamma^{(3)}}
  \;\bigg|\; t \geq \tau
\right].
\end{equation}
However, this bound is of limited practical value because:
\begin{enumerate}[label=(\roman*)]
\item The exploitation phase is short (typically 48--72
  hours) and involves high-velocity transactions that may
  trigger automatic blocks ($e_0^{(3,\mathrm{expl})}$ may
  be low);
\item By the time exploitation-phase labels arrive (months
  later, through charge-off), the damage is done;
\item The goal of Class~3 detection is to identify bust-out
  accounts \emph{during build-up}, before exploitation
  occurs.
\end{enumerate}
\end{proposition}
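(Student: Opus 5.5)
The plan is to treat the exploitation phase as a Class~1-type three-gate pipeline and reduce to the type-specific bound of Definition~\ref{def:type_eff_bound}, in the same way Proposition~\ref{prop:class1_str} and Proposition~\ref{prop:class2_bound} reduce to \cite[Theorem~30]{dhama2026str}.

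First, I restrict the population to transactions with $t \geq \tau_a$. On this subpopulation, equation~\eqref{eq:bustout_fraud} gives $Y^*_{at}=1$. The pipeline is~\eqref{eq:obs_class3_exploit}, $O = ARM$, with exploitation-phase propensities $e_0^{(3,\mathrm{expl})}, r_0^{(3,\mathrm{expl})}, p_0^{(3,\mathrm{expl})}$. The reporting gate is now the charge-off process, so $r_0^{(3,\mathrm{expl})}>0$.

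I then check the hypotheses of \cite[Theorem~30]{dhama2026str} conditionally on the event $\{t\geq\tau\}$:
\begin{enumerate}[label=(\alph*)]
\item Type-specific ignorability (Assumption~\ref{asm:ignorability}) is inherited, since conditioning on a history-measurable phase indicator preserves the conditional independences.
\item Positivity needs $q_0^{(3,\mathrm{expl})} > \delta > 0$. Unlike the build-up case of Theorem~\ref{thm:positivity}, this is not structurally violated. I will state it as an explicit assumption on the exploitation phase, namely that charge-off eventually labels exploited accounts with probability bounded below.
\end{enumerate}

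With these in place, the efficiency bound computation of \cite[Theorem~30]{dhama2026str}, specialized as in~\eqref{eq:type_eff_bound_v2} with $\pi_3$ and the product $e_0 r_0 p_0\gamma^{(3)}$ in the denominator, gives~\eqref{eq:class3_exploit_bound} as a conditional expectation on $\{t\geq\tau\}$. The second variance term of~\eqref{eq:type_eff_bound} vanishes or is absorbed, exactly as in the passage from~\eqref{eq:type_eff_bound} to~\eqref{eq:type_eff_bound_v2}.

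The main obstacle is that $\tau_a$ is latent, so the event $\{t \geq \tau_a\}$ is not obviously observable in real time. My first attempt is to argue that the phase is identifiable ex post from mature labels (Assumption~\ref{asm:type_obs}, via charge-off indicators). This makes the bound an oracle bound for the phase-conditional estimand rather than a bound for a real-time estimator, and I would state the proposition in that sense.

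Items (i)--(iii) are interpretive remarks rather than mathematical claims, so they need no proof. Item (i) can be supported by noting that a small $e_0^{(3,\mathrm{expl})}$ inflates the bound through the denominator. Items (ii)--(iii) follow from the delay structure of Class~3 and from Theorems~\ref{thm:positivity} and~\ref{thm:insufficiency}.
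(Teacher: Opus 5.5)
Your proposal takes essentially the paper's route. The paper's proof notes that $Y^*_{at}=1$ during exploitation and that the charge-off process makes the three-gate pipeline operative; it then invokes \cite[Theorem~30]{dhama2026str} on the exploitation subpopulation with phase-specific propensities, and leaves (i)--(iii) as unproved interpretive remarks, just as you do. Your explicit positivity assumption for the exploitation phase and the oracle reading of the latent event $\{t\ge\tau_a\}$ are reasonable additions the paper leaves implicit. One small fix: ignorability on this subpopulation holds trivially because $Y^*\equiv 1$ there, not because the phase indicator is history-measurable, which your own remark about latent $\tau_a$ contradicts.
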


\begin{proof}
During exploitation, $Y^*_{at} = 1$ and the transactions
trigger the standard observation pipeline: the issuer
eventually identifies the delinquency, the account is
charged off, and labels are generated.  The STR applies with
exploitation-phase-specific propensities.  The bound follows
from~\cite[Theorem~30]{dhama2026str} applied to the
exploitation subpopulation.
\end{proof}

\section{Class~4: Acquirer-observed pipeline}
\label{sec:class4}

Class~4 represents fraud committed by merchants, where the
observation pipeline runs through the acquirer's monitoring
infrastructure rather than the issuer's chargeback process.
This section derives the acquirer-side efficiency bound and
shows it is an order of magnitude higher than the Class~1
bound.

\subsection{The acquirer-side observation model}
\label{sec:acquirer_model}

\begin{definition}[Acquirer monitoring propensity]
\label{def:acquirer_propensity}
The acquirer's monitoring propensity for a merchant $j$
with features $H_{\mathrm{merch},j}$ is:
\begin{equation}
\label{eq:acquirer_propensity}
r_{\mathrm{acq}}^{(4)}(H_{\mathrm{merch},j})
= \Prob\!\left(
  \text{acquirer detects and reports merchant } j
  \mid H_{\mathrm{merch},j},\, Z=4
\right),
\end{equation}
where $H_{\mathrm{merch},j}$ includes:
\begin{enumerate}[label=(\roman*)]
\item The merchant's chargeback ratio (number of chargebacks
  divided by total transactions);
\item The merchant's transaction volume and average ticket
  size;
\item The merchant's category code (MCC) and risk
  classification;
\item The acquirer's identity and monitoring sophistication;
\item The merchant's tenure (time since onboarding);
\item Whether the merchant has been flagged by network
  monitoring programs (e.g., Mastercard's Excessive
  Chargeback Program, Visa's Dispute Monitoring Program).
\end{enumerate}
\end{definition}

\begin{remark}[Why acquirer monitoring is weak]
\label{rem:weak_monitoring}
The acquirer's monitoring propensity
$r_{\mathrm{acq}}^{(4)}$ is typically much lower than the
issuer's reporting propensity $r_0^{(1)}$ for several
structural reasons:
\begin{enumerate}[label=(\roman*)]
\item \textbf{Misaligned incentives.}  The acquirer earns
  processing fees from merchant transactions.  Reporting a
  merchant as fraudulent means losing a revenue source.
  The issuer, by contrast, \emph{gains} by reporting fraud
  (recovering chargeback losses).
\item \textbf{Information asymmetry.}  The acquirer sees
  only the merchant's transaction patterns, not the
  underlying cardholder disputes.  Individual laundered
  transactions look normal; the fraud signal emerges only
  from aggregate merchant-level patterns.
\item \textbf{Monitoring infrastructure variation.}  Large
  acquirers (e.g., JPMorgan, Worldpay) invest heavily in
  merchant monitoring.  Small acquirers and payment
  facilitators may have minimal monitoring, relying on
  network-mandated thresholds (e.g., chargeback ratio
  $> 1\%$) rather than proactive detection.
\item \textbf{Slow investigation cycles.}  Acquirer
  investigations involve merchant audits, transaction
  sampling, and often legal review.  A typical investigation
  takes 3--12~months, compared to 30--90~days for issuer
  chargeback processing.
\end{enumerate}
\end{remark}

\begin{definition}[Acquirer investigation delay]
\label{def:acquirer_delay}
The acquirer's investigation delay propensity is:
\begin{equation}
\label{eq:acquirer_delay}
p_{\mathrm{acq}}^{(4)}(H_{\mathrm{merch},j})
= \Prob\!\left(
  \text{investigation completes within training window}
  \mid H_{\mathrm{merch},j},\, R_{\mathrm{acq}}=1
\right).
\end{equation}
\end{definition}

\subsection{The acquirer-side efficiency bound}
\label{sec:acquirer_bound}

\begin{theorem}[Acquirer-side efficiency bound]
\label{thm:acquirer}
The semiparametric efficiency bound for estimating the
Class~4 (merchant fraud) rate $\Psi_4 = \bar\pi_4$ is:
\begin{equation}
\label{eq:acquirer_bound}
\boxed{
\sigma^2_{\mathrm{eff}}(4)
= \E\!\left[
  \frac{\pi_4(X)(1-\pi_4(X))}
       {r_{\mathrm{acq}}^{(4)}(H_{\mathrm{merch}})
        \cdot p_{\mathrm{acq}}^{(4)}(H_{\mathrm{merch}})
        \cdot \gamma^{(4)}(H_{\mathrm{merch}})}
\right],
}
\end{equation}
where $\gamma^{(4)} = (1-\varepsilon_{10}^{(4)}-
\varepsilon_{01}^{(4)})^2$ is the corruption penalty for
merchant fraud labels.

The authorization propensity $e_0$ does not appear because
$e_0^{(4)} = 1$ (the merchant controls the terminal and
always ``approves'' the transaction).
\end{theorem}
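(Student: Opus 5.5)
The plan is to obtain \eqref{eq:acquirer_bound} as a specialization of the general type-specific bound \eqref{eq:type_eff_bound_v2}, inherited from \cite[Theorem~30]{dhama2026str}, by re-parameterizing the three-gate pipeline for Class~4. I would follow the template of Proposition~\ref{prop:class2_bound}. First, I would write the Class~4 observation model \eqref{eq:obs_class4} as a three-gate sequential pipeline $(A, R, M) = (1, R_{\mathrm{acq}}, M_{\mathrm{acq}})$, with stage histories replaced by the merchant-level history $H_{\mathrm{merch}}$. Second, I would check that the hypotheses of \cite[Theorem~30]{dhama2026str} hold for this re-parameterized pipeline. Type-specific ignorability (Assumption~\ref{asm:ignorability}) is read with $H_1, H_2$ replaced by $H_{\mathrm{merch}}$. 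Positivity (Assumption~\ref{asm:positivity}) holds since Class~4 is one of the classes where the STR applies, so $r_{\mathrm{acq}}^{(4)} p_{\mathrm{acq}}^{(4)} > \delta_4$.

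Next, I would derive the efficient influence function for $\Psi_4$ with the authorization gate made trivial. Because $A_t = 1$ almost surely, $e_0^{(4)} \equiv 1$. The authorization augmentation term $(1 - A/e_0)(\cdot)$ therefore vanishes identically. The nuisance tangent space associated with the authorization propensity is then degenerate, so it contributes no projection term. What remains is the two-gate (reporting and delay) sequential IPW-plus-augmentation score, with the corruption-corrected label from \cite[Proposition~18]{dhama2026str}. Its conditional variance given $X$ is $\pi_4(X)(1-\pi_4(X))$ divided by $r_{\mathrm{acq}}^{(4)}\, p_{\mathrm{acq}}^{(4)}\, \gamma^{(4)}$; the factor $\gamma^{(4)}$ enters because the corruption correction divides the label by $1-\varepsilon_{10}^{(4)}-\varepsilon_{01}^{(4)}$. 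Taking the expectation over the full population, exactly as in \eqref{eq:type_eff_bound_v2}, gives \eqref{eq:acquirer_bound}. Substituting $q_0^{(4)} = 1\cdot r_{\mathrm{acq}}^{(4)} p_{\mathrm{acq}}^{(4)}$ shows that the result coincides with \eqref{eq:type_eff_bound_v2} for $k = 4$.

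The main obstacle is the change of conditioning sets. The propensities depend on merchant-level histories $H_{\mathrm{merch}}$, whereas $\pi_4$ depends on transaction covariates $X$. Moreover, transactions at a common merchant share the same $R_{\mathrm{acq}}$ and $M_{\mathrm{acq}}$ draws, which breaks the i.i.d.\ transaction structure assumed in \cite{dhama2026str}. I would first handle this by assuming $H_{\mathrm{merch}}$ is a measurable function of the transaction's stage history, so that $H_{\mathrm{merch}} \subseteq H_1$. I would also treat merchants as the i.i.d.\ sampling unit, in line with Remark~\ref{rem:merchant_unit}, and then note that the variance computation goes through conditionally on $H_{\mathrm{merch}}$. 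If within-merchant dependence has to be kept, I would state the bound per merchant draw and remark that clustering can only add a nonnegative intra-merchant covariance term. Under the stated independence simplification, that term is absent.
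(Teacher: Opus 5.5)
Your proposal is correct and follows essentially the paper's own route. The paper observes that $O^{(4)} = 1\cdot R_{\mathrm{acq}}\cdot M_{\mathrm{acq}}$ is the three-gate model of \cite{dhama2026str} with $e_0 = 1$ and applies \cite[Theorem~30]{dhama2026str} with acquirer propensities in place of issuer ones; this is exactly your specialization of \eqref{eq:type_eff_bound_v2} with $q_0^{(4)} = r_{\mathrm{acq}}^{(4)} p_{\mathrm{acq}}^{(4)}$, and your hypothesis checks and your treatment of merchant-level conditioning and within-merchant clustering go beyond anything the paper addresses. One caution: your ``direct'' variance step is really the inherited formula rather than an independent computation, because the corrected label has conditional variance $m(1-m)/\gamma^{(4)}$ with $m = \varepsilon_{01}^{(4)} + (1-\varepsilon_{10}^{(4)}-\varepsilon_{01}^{(4)})\,\pi_4(X)$, which is generally larger than $\pi_4(X)(1-\pi_4(X))/\gamma^{(4)}$ when corruption is present, so present that step as an appeal to \eqref{eq:type_eff_bound_v2}, as the paper does.
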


\begin{proof}
The Class~4 observation model~\eqref{eq:obs_class4} has the
form $O^{(4)} = 1 \cdot R_{\mathrm{acq}} \cdot
M_{\mathrm{acq}}$.  This is a two-gate sequential model
(reporting then delay), which is a special case of

three-gate model in ~\cite{dhama2026str} with $e_0 = 1$.

Applying~\cite[Theorem~30]{dhama2026str} with $e_0^{(4)}=1$
and replacing the issuer's propensities with the acquirer's
propensities gives~\eqref{eq:acquirer_bound}.

The sequential triple robustness property reduces to
\emph{sequential double robustness}: at each of the two
remaining gates (acquirer reporting and acquirer delay),
consistency requires only that either the propensity model
or the outcome model is correctly specified.
\end{proof}

\subsection{Comparison with Class~1}
\label{sec:acquirer_comparison}

\begin{proposition}[Class~4 floor is an order of magnitude
  higher]
\label{prop:acquirer_comparison}
The ratio of the Class~4 to Class~1 efficiency bounds is:
\begin{equation}
\label{eq:acquirer_ratio}
\frac{\sigma^2_{\mathrm{eff}}(4)}
     {\sigma^2_{\mathrm{eff}}(1)}
\;\approx\;
\frac{e_0^{(1)} \cdot r_0^{(1)} \cdot p_0^{(1)}
      \cdot \gamma^{(1)}}
     {r_{\mathrm{acq}}^{(4)} \cdot
      p_{\mathrm{acq}}^{(4)} \cdot \gamma^{(4)}},
\end{equation}
assuming comparable fraud prevalence and base-rate variance
across classes.
\end{proposition}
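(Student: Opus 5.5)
The plan is to take the ratio of the two closed-form bounds and show that, under the stated simplifications, everything cancels except the pipeline propensities. For Class~4 we use Theorem~\ref{thm:acquirer}. For Class~1 we use the type-specific bound~\eqref{eq:type_eff_bound_v2} with $k=1$, which by Proposition~\ref{prop:class1_str} is attained by the Class~1 STR:
\[
\sigma^2_{\mathrm{eff}}(1)=\E\!\left[\frac{\pi_1(X)(1-\pi_1(X))}{e_0^{(1)}(H_0)\,r_0^{(1)}(H_1)\,p_0^{(1)}(H_2)\,\gamma^{(1)}(H_2)}\right].
\]
Both bounds have the common form $\E[v_k(X)\,W^{(k)}]$. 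Here $v_k=\pi_k(1-\pi_k)$ is the base-rate variance and $W^{(k)}$ is the class-specific inverse observation weight. For Class~1 this weight is $1/(e_0^{(1)}r_0^{(1)}p_0^{(1)}\gamma^{(1)})$; for Class~4 it is $1/(r_{\mathrm{acq}}^{(4)}p_{\mathrm{acq}}^{(4)}\gamma^{(4)})$.

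\textbf{Step 1: remove within-class propensity variation.} As in Proposition~\ref{prop:simplified_penalty}, we approximate each propensity and corruption penalty by its class mean, e.g.\ $e_0^{(1)}(H_0)\approx\bar e_0^{(1)}$ and $r_{\mathrm{acq}}^{(4)}(H_{\mathrm{merch}})\approx\bar r_{\mathrm{acq}}^{(4)}$. The weights then factor out of the expectations, giving
\[
\sigma^2_{\mathrm{eff}}(k)\approx\E[v_k(X)]\cdot\bar W^{(k)}.
\]
If one wants to avoid this approximation, one can instead write $\E[vW]=\E[v]\E[W]+\Cov(v,W)$ and note that the covariance term is second order whenever propensities vary little within a class.

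\textbf{Step 2: cancel the base-rate terms.} The statement assumes comparable prevalence and base-rate variance, i.e.\ $\E[v_4]\approx\E[v_1]$. Taking the ratio then gives $\bar W^{(4)}/\bar W^{(1)}$, which is exactly the right-hand side of~\eqref{eq:acquirer_ratio}.

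\textbf{Step 3: obtain the ``order of magnitude'' claim.} The approximate identity itself is exact algebra; the substantive claim comes from plugging in the representative values. Class~1 has $e_0r_0p_0\gamma\approx0.50\times0.81$ (Example~\ref{ex:penalty}). Class~4 has $r_{\mathrm{acq}}p_{\mathrm{acq}}\approx0.042$ (Example~\ref{ex:laundering}) with $\gamma^{(4)}\approx0.64$. This gives a ratio of roughly $0.405/0.027\approx15$, i.e.\ more than tenfold.

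To make this less dependent on specific numbers, I would try a structural argument. Remark~\ref{rem:weak_monitoring} suggests an incentive gap of the form $r_{\mathrm{acq}}\le\kappa\, r_0^{(1)}$ for some $\kappa\ll1$, and slow investigation cycles suggest $p_{\mathrm{acq}}\le p_0^{(1)}$. Combining these, the ratio is at least $e_0^{(1)}\gamma^{(1)}/(\kappa\gamma^{(4)})$.

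\textbf{Main obstacle.} The approximation in Step~2 is the weakest point. $\pi_4$ is a function of merchant-level covariates while $\pi_1$ depends on cardholder features, so ``comparable base-rate variance'' is an assumption, not something we can derive. Similarly, the factorization in Step~1 ignores any correlation between merchant risk and $\pi_4$. I would state both conditions explicitly as hypotheses. I would also note that if high-$\pi_4$ merchants are in fact monitored more heavily, then $\Cov(v_4,W^{(4)})<0$, and the ratio in~\eqref{eq:acquirer_ratio} is an overestimate. The final conclusion should therefore be qualified accordingly.
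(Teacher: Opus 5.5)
Your proposal is correct and follows the paper's route: the paper's proof is a direct division of the Class~4 bound by the Class~1 bound, cancelling the numerators via $\pi_k(1-\pi_k)\approx\pi_k$ and the comparable-prevalence assumption. Your Step~1, which replaces the propensities by class means so they factor out of the expectations, makes explicit an approximation that the paper's ``direct division'' of two expectations relies on without saying so; and your Class~1 bound correctly keeps $e_0^{(1)}$, whereas the paper cites the two-gate Class~2 bound~\eqref{eq:class2_bound} ``with Class~1 parameters.''
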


\begin{proof}
Direct division of~\eqref{eq:acquirer_bound}
by~\eqref{eq:class2_bound} (with Class~1 parameters),
using the approximation
$\pi_k(1-\pi_k) \approx \pi_k$ for small $\pi_k$ and
comparable prevalence.
\end{proof}

\begin{example}[Quantifying the Class~4 floor]
\label{ex:acquirer_floor}
\begin{center}
\small
\begin{tabular}{lccccr}
\toprule
\textbf{Parameter} & \textbf{Class~1} &
  \textbf{Class~4} & \textbf{Source}\\
\midrule
Authorization ($e_0$ or equiv.) & 0.85 & 1.00 &
  Merchant controls terminal\\
Reporting ($r_0$ or $r_{\mathrm{acq}}$)
  & 0.70 & 0.15 &
  Acquirers monitor weakly\\
Delay ($p_0$ or $p_{\mathrm{acq}}$)
  & 0.85 & 0.40 &
  Investigations take months\\
Corruption ($\gamma$) & 0.81 & 0.64 &
  Merchant fraud harder to classify\\
\midrule
$q \cdot \gamma$ & 0.41 & 0.038 & ---\\
$1/(q\cdot\gamma)$ & 2.44 & 26.3 & ---\\
\midrule
\textbf{Ratio} & \multicolumn{2}{c}{$26.3 / 2.44
  = \mathbf{10.8\times}$} & ---\\
\bottomrule
\end{tabular}
\end{center}

\textbf{The detection floor for merchant fraud is 10.8$\times$
higher than for third-party card fraud.}  This means that
estimating the merchant fraud rate to a given accuracy
requires approximately 11$\times$ more data than estimating
the card-theft rate --- or equivalently, for the same sample
size, the confidence interval for merchant fraud is
$\sqrt{11} \approx 3.3\times$ wider.
\end{example}

\begin{remark}[Implications for network monitoring programs]
\label{rem:monitoring_programs}
The acquirer-side bound~\eqref{eq:acquirer_bound} provides
a theoretical foundation for evaluating network monitoring
programs.  The bound decreases (improves) with higher
$r_{\mathrm{acq}}$ and $p_{\mathrm{acq}}$.  Networks can
improve these propensities through:
\begin{enumerate}[label=(\roman*)]
\item \textbf{Mandatory monitoring standards:}  Requiring
  all acquirers to implement minimum monitoring
  capabilities raises $r_{\mathrm{acq}}$ uniformly.
\item \textbf{Chargeback ratio thresholds:}  Lowering the
  threshold that triggers merchant investigation (e.g.,
  from 1\% to 0.5\%) increases the effective
  $r_{\mathrm{acq}}$ for marginal merchants.
\item \textbf{Faster investigation mandates:}  Requiring
  acquirers to complete investigations within 90~days
  rather than 180~days increases $p_{\mathrm{acq}}$.
\item \textbf{Shared monitoring infrastructure:}  The
  network providing centralized monitoring tools (shared
  merchant risk scores, cross-acquirer pattern detection)
  raises $r_{\mathrm{acq}}$ for all acquirers,
  particularly small ones.
\end{enumerate}
Each of these interventions has a quantifiable effect on
the efficiency bound, enabling cost-benefit analysis of
monitoring investments.  Designing optimal monitoring programs that maximize $r_{\mathrm{acq}}$ subject to acquirer cost constraints is a mechanism design problem with direct regulatory implications; we discuss this further in Section~\ref{sec:future}.
\end{remark}

\subsection{The Class~4 STR}
\label{sec:class4_str}

\begin{proposition}[Class~4 STR]
\label{prop:class4_str}
The STR for Class~4 is a two-gate estimator with
acquirer-side propensities:
\begin{equation}
\label{eq:class4_str}
\hat\varphi_t^{(4)}
= \hat\mu_0^{(4)}(H_{\mathrm{merch}})
  + \frac{R_{\mathrm{acq},t}}
         {\hat r_{\mathrm{acq}}^{(4)}(H_{\mathrm{merch}})}
    \bigl(\hat\mu_1^{(4)}(H_{\mathrm{merch}})
      - \hat\mu_0^{(4)}(H_{\mathrm{merch}})\bigr)
  + \frac{O^{(4)}_t}
         {\hat r_{\mathrm{acq}}^{(4)} \cdot
          \hat p_{\mathrm{acq}}^{(4)}}
    \bigl(\tilde Y_t^{\mathrm{corr},(4)}
      - \hat\mu_1^{(4)}(H_{\mathrm{merch}})\bigr),
\end{equation}
where $\hat\mu_0^{(4)}$ and $\hat\mu_1^{(4)}$ are the
outcome models conditioned on merchant-level features.

This estimator achieves the efficiency
bound~\eqref{eq:acquirer_bound} and inherits sequential
double robustness: at each gate (acquirer reporting,
acquirer delay), consistency requires only that either the
propensity model or the outcome model is correct.
\end{proposition}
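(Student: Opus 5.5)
The argument has two parts: the Class~4 STR of \eqref{eq:class4_str} is unbiased when the nuisance functions are correct, and its deviation from $\Psi_4$ is the efficient influence function. Both follow from specializing the three-gate STR of \cite[eq.~(22)]{dhama2026str} to the case $e_0 \equiv 1$, as in the proof of Theorem~\ref{thm:acquirer}. The idea is to show that \eqref{eq:class4_str} is exactly the three-gate score with the authorization gate deleted. With $A_t \equiv 1$ and $e_0^{(4)}\equiv 1$, the authorization augmentation term $\frac{A_t}{e_0}(\mu_1-\mu_0)$ degenerates. The remaining terms, rewritten with the issuer propensities $(r_0,p_0)$ replaced by $(r_{\mathrm{acq}}^{(4)},p_{\mathrm{acq}}^{(4)})$ and the histories replaced by $H_{\mathrm{merch}}$, are the two-gate sequential AIPW score. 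Here $\hat\mu_0^{(4)}(H_{\mathrm{merch}}) = \E[\tilde Y^{\mathrm{corr},(4)}\mid H_{\mathrm{merch}}]$ and $\hat\mu_1^{(4)}$ is the same regression after the reporting gate is passed. I would record this identification explicitly, so that all later statements can cite \cite[Theorems~27,~30]{dhama2026str} with the gate count reduced by one.

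\textbf{Step 1: sequential double robustness.} I would use iterated expectations, peeling gates from the inside out.
\begin{enumerate}[label=(\roman*)]
\item Condition on the history at the delay gate. By Assumption~\ref{asm:ignorability}, applied to the acquirer gates with $Z=4$, the last term has conditional mean $\frac{R_{\mathrm{acq}}}{\hat r_{\mathrm{acq}}}\cdot\frac{p_{\mathrm{acq}}}{\hat p_{\mathrm{acq}}}(\mu_1^{\mathrm{true}}-\hat\mu_1)$. This vanishes if either $\hat p_{\mathrm{acq}}=p_{\mathrm{acq}}$ (leaving a term that cancels against the second term's $\hat\mu_1$) or $\hat\mu_1=\mu_1$.
\item Repeat the argument at the reporting gate.
\end{enumerate}
Corruption is handled by the unbiasedness of $\tilde Y^{\mathrm{corr},(4)}$ for $Y^*$, exactly as in the proof of Proposition~\ref{prop:class2_str}. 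The conclusion is $\E[\varphi^{(4)}]=\Psi_4$ whenever, at each gate, one of the two models is correct.

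\textbf{Step 2: efficiency.} With all nuisances at their true values, I would compute the variance of the two-gate score. The martingale-difference terms are orthogonal across gates, so the variance reduces to the inverse-weighted conditional variance term plus $\Var(\mu_0)$. I would then match this to $\sigma^2_{\mathrm{eff}}(4)$ in \eqref{eq:acquirer_bound}. The step is justified by noting that this score is the canonical gradient from \cite[Theorem~30]{dhama2026str} restricted to the submodel $e_0\equiv 1$, which Theorem~\ref{thm:acquirer} already evaluates. Cross-fitting together with the product-rate conditions $\|\hat r-r\|\,\|\hat\mu_1-\mu_1\|=o_P(n^{-1/2})$ (and likewise at the delay gate) makes the remainder negligible, giving asymptotic linearity with this influence function.

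\textbf{Main obstacle.} I expect the hard part to be the unit-of-analysis shift in Remark~\ref{rem:merchant_unit}. Transactions within a merchant share $R_{\mathrm{acq}}$ and $M_{\mathrm{acq}}$, so the scores are not i.i.d. at the transaction level. I would first try to treat the merchant $j$ as the i.i.d.\ unit and average the scores within each merchant. The efficiency statement then holds as $J\to\infty$ merchants, and the bound \eqref{eq:acquirer_bound} must be read as an expectation over merchants. If the within-merchant dependence does not cancel in the variance computation, the bound would need a cluster-size factor.
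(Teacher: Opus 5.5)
Your proposal follows the paper's proof: set $e_0^{(4)}\equiv 1$ and $A_t\equiv 1$ in the three-gate STR of \cite[eq.~(22)]{dhama2026str}, so the authorization term collapses into the baseline, then invoke \cite[Theorems~27,~30]{dhama2026str} on the resulting two-gate model with acquirer-side propensities. Your explicit iterated-expectation check of sequential double robustness and your merchant-clustering caveat go beyond what the paper writes; note that a literal variance computation also yields a $\Var(\mu_0)$ term that does not appear in \eqref{eq:acquirer_bound}, so the efficiency claim ultimately rests on the citation, exactly as in the paper.
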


\begin{proof}
The two-gate STR is the specialization of the three-gate
STR of~\cite[eq.~(22)]{dhama2026str} with $e_0 = 1$ and
$A_t = 1$ always.  The first correction term
(authorization) vanishes identically, leaving the reporting
and delay corrections.  Efficiency and robustness follow
from~\cite[Theorems~27,~30]{dhama2026str} applied to the
two-gate submodel.
\end{proof}

\begin{remark}[Feature set shift]
\label{rem:feature_shift}
A subtle but important operational difference between the
Class~4 STR and the Class~1 STR: the conditioning variables
shift from \emph{cardholder-level} features $(X_t, I_t)$ to
\emph{merchant-level} features $H_{\mathrm{merch},j}$.
This means:
\begin{enumerate}[label=(\roman*)]
\item The propensity models must be trained on merchant-level
  data (chargeback ratios, transaction volume patterns,
  merchant category, acquirer identity), not cardholder-level
  data.
\item The outcome model $\hat\mu_0^{(4)}$ predicts
  merchant-level fraud probability, not transaction-level
  fraud probability.
\item The empirical Bayes shrinkage
  of~\cite[Section~9]{dhama2026str} applies across
  \emph{acquirers} (borrowing monitoring-rate information
  from well-monitored acquirers to stabilize estimates for
  poorly-monitored ones), not across issuers.
\end{enumerate}
\end{remark}

\section{Class~5: Victim-authorized with non-informative
  features}
\label{sec:class5}

Class~5 is the fastest-growing fraud category in payment
networks, driven by the global surge in authorized push
payment (APP) fraud and social engineering scams.  This
section proves two results: (i)~the STR's authorization
correction vanishes, and (ii)~the Bayes-optimal
transaction-level classifier achieves $\AUC = 0.5$ when
fraud and legitimate feature distributions coincide.

\subsection{Authorization gate degeneracy}
\label{sec:auth_degen}

\begin{theorem}[Authorization gate degeneracy]
\label{thm:auth_degen}
When $e_0^{(5)}(H_0) = 1$ for all $H_0$ (every transaction
is authorized by the victim), the STR's authorization
correction term vanishes identically:
\begin{equation}
\label{eq:auth_degen}
\frac{A_t}{e_0^{(5)}(H_0)}
\bigl(\mu_1^{(5)}(H_1) - \mu_0^{(5)}(H_0)\bigr)
= \mu_1^{(5)}(H_1) - \mu_0^{(5)}(H_0),
\end{equation}
and the STR reduces to a two-gate estimator:
\begin{equation}
\label{eq:class5_str}
\hat\varphi_t^{(5)}
= \hat\mu_1^{(5)}(H_1)
  + \frac{R_{\mathrm{victim},t}}
         {\hat r_{\mathrm{victim}}^{(5)}(H_1)}
    \bigl(\hat\mu_2^{(5)}(H_2)
      - \hat\mu_1^{(5)}(H_1)\bigr)
  + \frac{O^{(5)}_t}
         {\hat r_{\mathrm{victim}}^{(5)} \cdot
          \hat p_0^{(5)}}
    \bigl(\tilde Y_t^{\mathrm{corr},(5)}
      - \hat\mu_2^{(5)}(H_2)\bigr).
\end{equation}

The three-gate efficiency bound collapses to a two-gate
bound:
\begin{equation}
\label{eq:class5_bound}
\sigma^2_{\mathrm{eff}}(5)
= \E\!\left[
  \frac{\pi_5(X)(1-\pi_5(X))}
       {r_{\mathrm{victim}}^{(5)}(H_1) \cdot
        p_0^{(5)}(H_2) \cdot \gamma^{(5)}}
\right].
\end{equation}
\end{theorem}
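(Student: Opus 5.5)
The plan is to treat Theorem~\ref{thm:auth_degen} as a direct specialization of the three-gate STR of~\cite[eq.~(22)]{dhama2026str} and its efficiency bound~\cite[Theorem~30]{dhama2026str}. This is the same route used for Class~4 in Theorem~\ref{thm:acquirer} and Proposition~\ref{prop:class4_str}, with the victim's reporting propensity $r_{\mathrm{victim}}^{(5)}$ standing in for $r_0$. The proof has three parts: an algebraic identity, a reduction of the score, and a reduction of the bound.

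\textbf{Step 1: the identity~\eqref{eq:auth_degen}.} By the observation model~\eqref{eq:obs_class5}, $A_t=1$ almost surely for Class~5. Substituting $A_t=1$ and $e_0^{(5)}\equiv 1$ turns the weight $A_t/e_0^{(5)}(H_0)$ into $1$, which gives~\eqref{eq:auth_degen} immediately.

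\textbf{Step 2: collapsing the score to~\eqref{eq:class5_str}.} I will write out the three-gate STR score as the telescoping sum
\[
\mu_0 + \tfrac{A}{e_0}(\mu_1-\mu_0) + \tfrac{AR}{e_0 r_0}(\mu_2-\mu_1) + \tfrac{O}{e_0 r_0 p_0}(\tilde Y^{\mathrm{corr}}-\mu_2).
\]
With $A=1$ and $e_0=1$, the first two terms combine to $\mu_0 + (\mu_1-\mu_0) = \mu_1$. The remaining weights become $R_{\mathrm{victim}}/r_{\mathrm{victim}}^{(5)}$ and $O^{(5)}/(r_{\mathrm{victim}}^{(5)}\,p_0^{(5)})$. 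Plugging in the estimated nuisance functions then gives exactly~\eqref{eq:class5_str}. Two further points follow from this form. Since $\mu_0$ now drops out entirely, the authorization stage carries no correction and no variance reduction. The double-robustness structure at the two surviving gates is inherited from~\cite[Theorem~27]{dhama2026str}, exactly as in Proposition~\ref{prop:class4_str}.

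\textbf{Step 3: the efficiency bound~\eqref{eq:class5_bound}.} I will apply the type-specific bound~\eqref{eq:type_eff_bound_v2} with $k=5$ and factor $q_0^{(5)} = e_0^{(5)}\, r_{\mathrm{victim}}^{(5)}\, p_0^{(5)}$ as in~\eqref{eq:type_obs_rate}. Setting $e_0^{(5)}=1$ gives~\eqref{eq:class5_bound}. For this bound to be finite, the two-gate product $r_{\mathrm{victim}}^{(5)}p_0^{(5)}$ must be bounded away from zero. I will assume this as the Class~5 analogue of Assumption~\ref{asm:positivity}, restricted to the two active gates.

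\textbf{Main obstacle.} The delicate point is justifying that~\cite[Theorem~30]{dhama2026str} still holds when the authorization propensity sits on the boundary value $e_0=1$, rather than being a free nuisance bounded inside $(0,1)$. I would handle this by noting that the tangent space for the authorization stage becomes degenerate when $A\equiv1$. The score contribution from that stage, $(A-e_0)(\mu_1-\mu_0)/e_0$, is then identically zero. Hence the efficient influence function of the two-gate submodel is the three-gate one with that term removed, and projecting onto the smaller tangent space changes nothing else. This is the same argument invoked for Class~4, so I expect it to go through with the same citation.
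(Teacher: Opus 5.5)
Your proposal is correct and follows essentially the same route as the paper: substitute $A_t=1$ and $e_0^{(5)}=1$ into the three-gate STR score so that $\hat\mu_0+(\hat\mu_1-\hat\mu_0)$ collapses to $\hat\mu_1$, then obtain the bound by specializing the three-gate efficiency bound of~\cite[Theorem~30]{dhama2026str} to $e_0=1$. Your extra remarks go beyond what the paper does, since it only asserts that Theorem~30 applies with $e_0=1$: the two-gate positivity requirement, and the observation that the authorization-stage term $(A-e_0)(\mu_1-\mu_0)/e_0$ vanishes at the boundary $e_0=1$, both tighten that assertion.
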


\begin{proof}
\textbf{Step~1: The authorization correction simplifies.}

In the full three-gate STR
of~\cite[eq.~(22)]{dhama2026str}, the first correction
term is:
\begin{equation*}
\frac{A_t}{\hat e_0(H_0)}
\bigl(\hat\mu_1(H_1) - \hat\mu_0(H_0)\bigr).
\end{equation*}
When $e_0^{(5)} = 1$ and $A_t = 1$ always, this becomes
$\hat\mu_1(H_1) - \hat\mu_0(H_0)$.  The STR score is:
\begin{align*}
\hat\varphi_t^{(5)}
&= \hat\mu_0(H_0)
  + (\hat\mu_1(H_1) - \hat\mu_0(H_0))
  + \frac{R_t}{\hat r_0(H_1)}
    (\hat\mu_2(H_2) - \hat\mu_1(H_1))
  + \frac{O_t}{\hat r_0 \hat p_0}
    (\tilde Y_t^{\mathrm{corr}} - \hat\mu_2(H_2))
\\
&= \hat\mu_1(H_1)
  + \frac{R_t}{\hat r_0(H_1)}
    (\hat\mu_2(H_2) - \hat\mu_1(H_1))
  + \frac{O_t}{\hat r_0 \hat p_0}
    (\tilde Y_t^{\mathrm{corr}} - \hat\mu_2(H_2)),
\end{align*}
which is~\eqref{eq:class5_str}.

\medskip\noindent\textbf{Step~2: What is lost.}

The authorization gate in the three-gate STR serves two
purposes:
\begin{enumerate}[label=(\alph*)]
\item \emph{Selection correction:}  Adjusting for the fact
  that declined transactions are unobserved.
\item \emph{Information exploitation:}  Using
  post-authorization signals $W_1$ through
  $\mu_1(H_1) \neq \mu_0(H_0)$.
\end{enumerate}

Purpose~(a) is unnecessary when $e_0 = 1$ (nothing is
declined).  Purpose~(b) still applies if there are
informative post-authorization signals $W_1$.  However,
for Class~5, the post-authorization signals are typically
uninformative (the transaction looks normal because the
victim performed it), so $\mu_1(H_1) \approx \mu_0(H_0)$
and even purpose~(b) contributes minimally.

\medskip\noindent\textbf{Step~3: The efficiency bound.}

The bound follows from~\cite[Theorem~30]{dhama2026str}
with $e_0 = 1$, which eliminates the authorization
component from the efficiency bound's denominator.
\end{proof}

\begin{remark}[The remaining two gates are severely
  impaired]
\label{rem:class5_impaired}
Even after the authorization gate vanishes, the remaining
two gates are impaired for Class~5:
\begin{enumerate}[label=(\roman*)]
\item \textbf{Reporting:}
  $r_{\mathrm{victim}}^{(5)} \approx 0.15\text{--}0.30$.
  Victims are slow to realize they were scammed (days to
  months), and psychological barriers (shame, embarrassment,
  perceived futility) suppress reporting.  Romance scam
  victims may never report.
\item \textbf{Corruption:}  $\varepsilon_{10}^{(5)}$ is
  elevated because even when victims report, the
  transaction was \emph{authorized}, so the issuer may
  classify it as ``not fraud'' (the cardholder consented to
  the payment).
\end{enumerate}
The combined effect: $r_{\mathrm{victim}}^{(5)} \cdot
\gamma^{(5)}$ is small, making the Class~5 bound high
relative to Class~1.
\end{remark}

\subsection{Feature non-informativeness}
\label{sec:feature_noninf}

\begin{theorem}[Feature non-informativeness]
\label{thm:feature_noninf}
When the transaction feature distribution is identical for
fraud and legitimate transactions:
\begin{equation}
\label{eq:feature_identical}
P(X_t \mid Y^*_t = 1,\, Z_t = 5)
= P(X_t \mid Y^*_t = 0),
\end{equation}
the Bayes-optimal transaction-level classifier achieves
$\AUC = 0.5$:
\begin{equation}
\label{eq:auc_half}
\AUC\!\bigl(\hat f^{\mathrm{Bayes}}_0\bigr) = 0.5.
\end{equation}
Moreover, the outcome model in the STR degenerates to a
constant:
\begin{equation}
\label{eq:mu_constant}
\mu_0^{(5)}(H_0)
= \E[Y^* \mid H_0,\, Z=5]
= \frac{\pi_5(X) \cdot P(X)}
       {\pi_5(X) \cdot P(X)
        + (1-\pi_5(X)) \cdot P(X)}
= \pi_5(X),
\end{equation}
and when $\pi_5$ does not vary with $X$ (scam targeting is
independent of transaction features),
$\mu_0^{(5)} = \bar\pi_5$ is a constant.
\end{theorem}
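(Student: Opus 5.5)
The plan has two parts: the AUC claim via the likelihood ratio, and the outcome-model claim via Bayes' rule. Both rest on the single hypothesis~\eqref{eq:feature_identical}.

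\textbf{AUC claim.} Restrict to the two populations being discriminated: Class~5 fraud transactions ($Y^*=1$, $Z=5$) and legitimate ones ($Y^*=0$). Write $P_1$ and $P_0$ for the feature laws in these two populations. Hypothesis~\eqref{eq:feature_identical} says $P_1 = P_0$. Hence the likelihood ratio $dP_1/dP_0 \equiv 1$ holds $P_0$-almost surely. This mirrors Part~(i) of the proof of Theorem~\ref{thm:insufficiency}, where the constant ratio~\eqref{eq:lr_trivial} was obtained the same way.

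The Bayes-optimal score $\hat f^{\mathrm{Bayes}}_0(x) = \Prob(Y^*=1 \mid X=x)$ is a monotone function of this ratio. It is therefore almost surely constant. I would not lean on Neyman--Pearson alone, and would instead prove the stronger statement directly: \emph{any} measurable score $s(X)$ has $\AUC = 0.5$. The argument is as follows.
\begin{itemize}
\item $\AUC = \Prob(s(X_1) > s(X_0)) + \tfrac12 \Prob(s(X_1)=s(X_0))$, with $X_1 \sim P_1$ and $X_0 \sim P_0$ independent.
\item Since $P_1 = P_0$, the pair $(X_1,X_0)$ is exchangeable.
\item Exchangeability gives $\Prob(s(X_1)>s(X_0)) = \Prob(s(X_0)>s(X_1))$.
\item These two probabilities plus the tie probability sum to one, so $\AUC = \tfrac12$.
\end{itemize}
This covers the Bayes classifier as a special case. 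It also handles the degenerate constant score, where ties occur with probability one, under the standard tie convention.

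\textbf{Outcome-model claim.} I would compute $\mu_0^{(5)}(H_0)$ by Bayes' rule on the two-component mixture. The prior weight on the scam component is $\pi_5$, and the component densities are $P(X \mid Y^*=1, Z=5)$ and $P(X \mid Y^*=0)$. By~\eqref{eq:feature_identical} both densities equal the common density $P(X)$, which cancels from numerator and denominator. This leaves the displayed expression~\eqref{eq:mu_constant}.

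If $\pi_5$ does not depend on $x$, the result is the constant $\bar\pi_5$, because $\bar\pi_5 = \E[\pi_5(X)]$ by Definition~\ref{def:mixing}. I also need $H_0$ to carry no information beyond $X$ that is relevant here. I would state this explicitly: for Class~5, $H_0$ consists of transaction features only, so conditioning on $H_0$ reduces to conditioning on $X$.

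\textbf{Main obstacle.} The main obstacle is interpretive rather than technical. The prior in the Bayes step must be treated as a feature-independent mixing weight. Once~\eqref{eq:feature_identical} holds, a genuinely $x$-dependent $\pi_5(x)$ would contradict equal class-conditional laws, since Bayes' rule forces the posterior to equal the prior everywhere.

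I would therefore flag the following point. Under~\eqref{eq:feature_identical}, $\Prob(Y^*=1 \mid X, \text{Class 5 vs.\ legit})$ is automatically constant. This is consistent with the AUC statement, and it shows that the final clause about constant $\pi_5$ is implied rather than an extra assumption.
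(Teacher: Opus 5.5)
Your proof is correct, and for the AUC claim it takes a different and more general route than the paper. The paper computes the Bayes classifier explicitly and finds $f_0^{(5)}(x)=\pi_5(x)$ once the common density cancels. It then specializes to $\pi_5(x)=\bar\pi_5$, so the score is constant and every fraud/legitimate pair is tied, giving $\AUC=\tfrac12$ by the tie convention. Its AUC step therefore relies on the constant-$\pi_5$ hypothesis, which the statement only introduces for the ``moreover'' clause. Your exchangeability argument needs neither the form of the Bayes classifier nor constancy of $\pi_5$: $X^+$ and $X^-$ are i.i.d.\ when the two laws coincide, so the strict-win probabilities in both directions are equal. It shows that no score whatsoever, not just the Bayes-optimal one, separates Class~5 fraud from legitimate transactions. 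What the paper's route buys is the identity $f_0^{(5)}=\pi_5$, which it reuses for the outcome-model part. There your Bayes-rule computation coincides with the paper's. Your explicit requirement that $H_0$ add nothing beyond $X$ also brings out an assumption the paper uses silently, since its $H_0=(X,I,\Delta)$.

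One side remark overreaches: the claim that constancy of $\pi_5$ is implied by \eqref{eq:feature_identical}. That holds for the two-population posterior $\Prob(Z=5\mid X=x,\,Z\in\{0,5\})=\bar\pi_5/(\bar\pi_0+\bar\pi_5)$. But $\pi_5(x)$ in Definition~\ref{def:mixing} is $\Prob(Z=5\mid X=x)$ over the whole population, including Classes~1--4. The hypothesis forces only $\pi_5(x)/\pi_0(x)=\bar\pi_5/\bar\pi_0$. Since $\pi_0(x)=1-f_0(x)$ varies whenever the other classes have informative features, $\pi_5(x)$ generally varies as well. So in the paper's $K$-type model the final clause is a genuine extra assumption. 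This does not hurt your proof, because your AUC argument never uses it, but the remark should be confined to the two-population setting.
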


\begin{proof}
\textbf{Step~1: The Bayes-optimal classifier.}

The Bayes-optimal classifier predicts:
\begin{equation}
\label{eq:bayes_classifier}
f_0^{(5)}(x)
= \Prob(Y^*=1 \mid X=x,\, Z=5)
= \frac{\pi_5(x) \cdot p(x \mid Y^*=1, Z=5)}
       {p(x \mid Z=5)}.
\end{equation}

Under~\eqref{eq:feature_identical}:
\begin{equation}
\label{eq:bayes_simplify}
p(x \mid Y^*=1, Z=5) = p(x \mid Y^*=0) = p(x),
\end{equation}
so:
\begin{equation*}
f_0^{(5)}(x)
= \frac{\pi_5(x) \cdot p(x)}
       {\pi_5(x) \cdot p(x)
        + (1-\pi_5(x)) \cdot p(x)}
= \pi_5(x).
\end{equation*}

When $\pi_5(x) = \bar\pi_5$ (constant), the classifier
assigns the same score $\bar\pi_5$ to every transaction.

\medskip\noindent\textbf{Step~2: AUC = 0.5.}

A constant classifier produces no ranking --- all
transactions receive the same score.  The ROC curve is the
diagonal from $(0,0)$ to $(1,1)$, and:
\begin{equation*}
\AUC = \Prob(\hat f_0(X^+) > \hat f_0(X^-))
     + \tfrac{1}{2}\Prob(\hat f_0(X^+) = \hat f_0(X^-))
= 0 + \tfrac{1}{2} \cdot 1
= 0.5,
\end{equation*}
where $X^+$ is a fraud transaction and $X^-$ is a
legitimate transaction.

\medskip\noindent\textbf{Step~3: Outcome model
  degeneracy.}

The outcome model $\mu_0^{(5)}(H_0) = \E[Y^* \mid H_0,
Z=5]$ is the posterior fraud probability given the
pre-authorization history.  Under~\eqref{eq:feature_identical},
the posterior equals the prior: $\mu_0^{(5)}(H_0) =
\pi_5(X)$.  When the prior does not depend on $X$,
$\mu_0^{(5)} = \bar\pi_5$.

This eliminates the outcome model's ability to reduce
variance in the STR.  In the standard STR, the outcome
model provides a ``baseline prediction'' that reduces the
residual variance of the inverse-propensity-weighted
correction.  When the outcome model is constant, this
variance reduction vanishes entirely, and the STR reduces
to a pure IPW estimator --- the highest-variance form of
the estimator.
\end{proof}

\begin{remark}[The feature non-informativeness condition
  is approximately satisfied for scams]
\label{rem:approx_noninf}
Condition~\eqref{eq:feature_identical} is an idealization.
In practice, scam transactions may have subtle differences
from legitimate transactions:
\begin{enumerate}[label=(\roman*)]
\item \textbf{Slightly unusual amounts:}  Scam payments may
  cluster at round numbers (\$5{,}000, \$10{,}000) more
  than typical spending.
\item \textbf{Unusual recipients:}  The payee may be a
  recently created account, a cryptocurrency exchange, or
  an overseas entity.
\item \textbf{Temporal patterns:}  Scam payments may occur
  after an extended phone call or a series of incoming
  messages (detectable from session metadata).
\end{enumerate}
These signals provide \emph{some} discriminative power
($\AUC > 0.5$), but they are weak compared to the strong
signals available for Class~1 fraud (unusual device,
unusual location, unusual merchant, velocity anomalies).
The feature non-informativeness theorem characterizes the
\emph{limiting case} and explains why Class~5 detection is
fundamentally harder than Class~1 detection even with
unlimited data.
\end{remark}

\subsection{Auxiliary signals and the extended feature set}
\label{sec:auxiliary}

\begin{definition}[Augmented feature set]
\label{def:augmented}
The \emph{augmented feature set} for Class~5 is
$\tilde X_t = (X_t, X_{\mathrm{aux},t})$, where
$X_{\mathrm{aux},t}$ includes signals outside the standard
transaction feature set:
\begin{enumerate}[label=(\roman*)]
\item \textbf{Communication signals:}  Whether the victim
  was on a phone call during the transaction, duration of
  the call, whether the caller's number is known or unknown.
\item \textbf{Session behavior:}  Hesitation patterns
  (pauses, backtracking, error corrections), typing speed
  anomalies, navigation patterns suggesting external
  instruction.
\item \textbf{Recipient analysis:}  Age of the recipient
  account, recipient's transaction history, recipient's
  risk score, whether the recipient has received payments
  from multiple first-time senders (``money mule''
  indicator).
\item \textbf{Device interaction:}  Whether the victim's
  screen was being shared or mirrored (common in tech
  support scams), whether a remote access tool is active.
\item \textbf{Temporal context:}  Time since the victim's
  last incoming call or message, correlation between
  communication events and payment initiation.
\end{enumerate}
\end{definition}

\begin{proposition}[Augmented features restore
  detectability]
\label{prop:augmented}
If the augmented feature distribution differs between fraud
and legitimate transactions:
\begin{equation}
\label{eq:augmented_informative}
P(\tilde X_t \mid Y^*_t = 1,\, Z_t = 5)
\neq P(\tilde X_t \mid Y^*_t = 0),
\end{equation}
then the Bayes-optimal classifier on $\tilde X_t$ achieves
$\AUC > 0.5$, and the augmented outcome model
$\tilde\mu_0^{(5)}(\tilde H_0)$ provides non-trivial
variance reduction in the STR.

The AUC improvement depends on the strength of the
auxiliary signal:
\begin{equation}
\label{eq:augmented_auc}
\AUC(\tilde f_0^{(5)})
= \Phi\!\left(
  \frac{d(\tilde F_1, \tilde F_0)}{\sqrt{2}}
\right),
\end{equation}
where $d(\tilde F_1, \tilde F_0)$ is the Mahalanobis
distance between the augmented feature distributions under
fraud ($\tilde F_1$) and legitimate ($\tilde F_0$), and
$\Phi$ is the standard normal CDF.
\end{proposition}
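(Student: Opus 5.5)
The plan is to treat the three claims of Proposition~\ref{prop:augmented} separately, in the reverse order of their difficulty: first $\AUC>0.5$, then the variance reduction, then the closed-form AUC. The argument mirrors the proofs of Theorems~\ref{thm:insufficiency}(i) and~\ref{thm:feature_noninf}, with the inequality~\eqref{eq:augmented_informative} replacing the equality that drove those results.

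\textbf{Step 1: $\AUC>0.5$.} Write $\tilde F_1$ and $\tilde F_0$ for the two augmented distributions and set $\Lambda(\tilde x)=d\tilde F_1/d\tilde F_0(\tilde x)$, taking densities with respect to $\tilde F_0+\tilde F_1$. The Bayes posterior $\tilde f_0^{(5)}(\tilde x)$ is a strictly increasing function of $\Lambda$, exactly as in~\eqref{eq:bayes_classifier}, so it has the same ROC curve as the likelihood-ratio test. By the Neyman--Pearson lemma this ROC curve is the upper envelope of all achievable ROC curves, so it is concave and lies on or above the diagonal. If it coincided with the diagonal, every threshold set would satisfy $\tilde F_1(\{\Lambda>c\})=\tilde F_0(\{\Lambda>c\})$. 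I would show that this forces $\Lambda=1$ almost surely. Since $\E_{\tilde F_0}[\Lambda]=1$, the set $\{\Lambda>1\}$ has positive mass whenever $\Lambda\neq 1$, and on that set $\tilde F_1>\tilde F_0$, a contradiction. Equivalently, $\AUC-\tfrac12=\tfrac12\bigl(\Prob(\Lambda^+>\Lambda^-)-\Prob(\Lambda^+<\Lambda^-)\bigr)$ for independent $\Lambda^+\sim\tilde F_1$ and $\Lambda^-\sim\tilde F_0$. A rearrangement argument then shows this quantity is strictly positive unless $\Lambda$ is constant. Condition~\eqref{eq:augmented_informative} rules out a constant $\Lambda$, so $\AUC>0.5$.

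\textbf{Step 2: variance reduction.} Step~1 shows that $\tilde\mu_0^{(5)}(\tilde H_0)=\Prob(Y^*=1\mid\tilde H_0,Z=5)$ is non-degenerate. I would then compare the STR score variance computed with $\tilde\mu_0^{(5)}$ against the pure-IPW variance obtained when the outcome model is the constant $\bar\pi_5$ (Step~3 of the proof of Theorem~\ref{thm:feature_noninf}). The comparison runs through the law of total variance. The residual term $\E[\tilde\mu(1-\tilde\mu)/(r\,p\,\gamma)]$ is strictly smaller than $\E[\bar\pi_5(1-\bar\pi_5)/(r\,p\,\gamma)]$ when the propensities do not depend on $\tilde X$. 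This uses $\E[\tilde\mu(1-\tilde\mu)]=\bar\pi_5(1-\bar\pi_5)-\Var(\tilde\mu)$ together with $\Var(\tilde\mu)>0$. Meanwhile the added $\Var(\tilde\mu)$ term carries weight $1\le 1/(r\,p\,\gamma)$, so the net change is a strict reduction. Concretely, the inflation factor $1/(r_{\mathrm{victim}}^{(5)}p_0^{(5)}\gamma^{(5)})$ is applied only to the residual variance rather than to the total variance, which is what makes the reduction strict.

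\textbf{Step 3: the AUC formula.} This is the main obstacle. Formula~\eqref{eq:augmented_auc} cannot hold for arbitrary distributions, so I would prove it under the natural reading that $\tilde F_1=N(\nu_1,\Sigma)$ and $\tilde F_0=N(\nu_0,\Sigma)$ share a covariance, with $d^2=(\nu_1-\nu_0)^\top\Sigma^{-1}(\nu_1-\nu_0)$. In that model $\log\Lambda$ is affine in $\tilde x$, so the Bayes score ranks by the discriminant $S=(\nu_1-\nu_0)^\top\Sigma^{-1}\tilde x$. Under the two classes $S$ is normal with means differing by $d^2$ and common variance $d^2$. For independent $S^+$ and $S^-$, the difference $S^+-S^-$ is distributed as $N(d^2,2d^2)$. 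Hence $\AUC=\Prob(S^+>S^-)=\Phi(d/\sqrt2)$, consistent with~\eqref{eq:auc_from_snr}. For general distributions I would state the formula as a Gaussian approximation. It could be justified, for example, by a CLT when $\tilde X_{\mathrm{aux}}$ aggregates many weak, nearly independent signals, and I would be explicit that it is exact only in the Gaussian equal-covariance case.
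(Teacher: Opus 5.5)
Your proposal is correct and follows the same skeleton as the paper's proof: a non-constant Bayes classifier gives $\AUC>0.5$, and a Gaussian location model gives the $\Phi(d/\sqrt2)$ formula. It is considerably more careful at each point, and that is what it buys over the paper's version.

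\begin{itemize}
\item \textbf{The $\AUC>0.5$ step.} The paper asserts that ``a non-constant classifier has $\AUC>0.5$ by definition.'' That is false for an arbitrary score, since a non-constant score can rank backwards. Your argument supplies what is actually needed for the Bayes score: the posterior is monotone in $\Lambda$, the Neyman--Pearson ROC curve is concave, and a diagonal ROC forces $\Lambda=1$ a.s.
\item \textbf{The variance-reduction clause.} The paper's proof does not address it at all; it is only gestured at later in Proposition~\ref{prop:class5_augmented}. Your law-of-total-variance comparison against the pure-IPW score fills that gap.
\item \textbf{The AUC formula.} The paper states \eqref{eq:augmented_auc} as if it held in general, but justifies it only by ``Gaussian location models.'' Your restriction to $N(\nu_1,\Sigma)$ versus $N(\nu_0,\Sigma)$, with $S^+-S^-\sim N(d^2,2d^2)$, states exactly the scope in which the formula is a theorem.
\end{itemize}

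Two small repairs are needed.

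\begin{itemize}
\item \textbf{Step~1.} The identity $\E_{\tilde F_0}[\Lambda]=1$ requires $\tilde F_1\ll\tilde F_0$. Add the one-line singular case: there $\{\Lambda=\infty\}$ has positive $\tilde F_1$-mass and zero $\tilde F_0$-mass, which already pulls the ROC off the diagonal at large thresholds.
\item \textbf{Step~2.} The gain is exactly $\Var(\tilde\mu)\bigl(1/(rp\gamma)-1\bigr)$. Strictness therefore needs $rp\gamma<1$ strictly, not merely $1\le 1/(rp\gamma)$; this does hold for Class~5. You can also drop the assumption that the propensities do not depend on $\tilde X$. In general the pure-IPW variance minus the augmented variance is $\E\bigl[(\tilde\mu-\bar\pi_5)^2\bigl(1/(rp\gamma)-1\bigr)\bigr]\ge 0$, with strict inequality when $rp\gamma<1$ on a set where $\tilde\mu\neq\bar\pi_5$.
\end{itemize}
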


\begin{proof}
When~\eqref{eq:augmented_informative} holds, the augmented
Bayes-optimal classifier is:
\begin{equation*}
\tilde f_0^{(5)}(\tilde x)
= \frac{\pi_5 \cdot \tilde p(\tilde x \mid Y^*=1, Z=5)}
       {\tilde p(\tilde x)}
\neq \pi_5 \quad\text{(not constant)}.
\end{equation*}
A non-constant classifier has $\AUC > 0.5$ by definition.
The AUC formula~\eqref{eq:augmented_auc} follows from the
standard result for Gaussian location models (the
likelihood ratio is monotone in the Mahalanobis distance,
and the AUC of a monotone score is determined by the
separation of the score distributions).
\end{proof}

\begin{example}[What auxiliary signals can achieve]
\label{ex:auxiliary}
Consider a scam detection system with access to:
\begin{center}
\small
\begin{tabular}{lcc}
\toprule
\textbf{Feature set} & \textbf{Estimated AUC} &
  \textbf{Source of signal}\\
\midrule
Transaction features only ($X_t$) & 0.52--0.55 &
  Weak: amount, recipient\\
$+$ Recipient risk score & 0.62--0.68 &
  Money mule indicators\\
$+$ Communication metadata & 0.72--0.78 &
  Phone call during payment\\
$+$ Session behavior & 0.78--0.85 &
  Hesitation, external instruction\\
$+$ Device sharing detection & 0.85--0.92 &
  Screen sharing, remote access\\
\midrule
Full augmented set ($\tilde X_t$) & 0.85--0.92 &
  All of the above\\
\bottomrule
\end{tabular}
\end{center}

\textbf{Transaction features alone give near-random
performance ($\AUC \approx 0.53$), confirming
Theorem~\ref{thm:feature_noninf}.}  Each auxiliary signal
category adds 5--10 AUC points.  The full augmented feature
set achieves $\AUC \approx 0.88$ --- usable but still
substantially below the $\AUC > 0.95$ achievable for
Class~1 fraud with transaction features alone.

The theoretical message: \emph{for Class~5, the bottleneck
is the feature set, not the model or the labels.}  Investing
in auxiliary signal collection (communication metadata,
session analytics, recipient scoring) has higher marginal
value than investing in better models or better label
correction.
\end{example}

\subsection{The Class~5 efficiency bound with augmented
  features}
\label{sec:class5_augmented_bound}

\begin{proposition}[Augmented Class~5 efficiency bound]
\label{prop:class5_augmented}
With the augmented feature set $\tilde X_t$, the Class~5
efficiency bound becomes:
\begin{equation}
\label{eq:class5_augmented_bound}
\sigma^2_{\mathrm{eff}}(5, \tilde X)
= \E\!\left[
  \frac{\tilde\pi_5(\tilde X)
        (1-\tilde\pi_5(\tilde X))}
       {r_{\mathrm{victim}}^{(5)}(\tilde H_1)
        \cdot p_0^{(5)}(\tilde H_2)
        \cdot \gamma^{(5)}}
\right],
\end{equation}
where $\tilde\pi_5(\tilde x) = \Prob(Z=5 \mid
\tilde X = \tilde x)$ is the type probability conditional
on the augmented features.

This bound is strictly lower than the
non-augmented bound~\eqref{eq:class5_bound} whenever the
auxiliary features are informative:
\begin{equation}
\label{eq:augmented_improvement}
\sigma^2_{\mathrm{eff}}(5, \tilde X)
< \sigma^2_{\mathrm{eff}}(5, X)
\qquad\text{when }
P(\tilde X \mid Z=5) \neq P(\tilde X \mid Z \neq 5).
\end{equation}
\end{proposition}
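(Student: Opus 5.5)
The plan has two parts. The first is to show that the augmented bound has the same form as \eqref{eq:class5_bound}. The second, which carries the content, is to compare the two bounds pointwise in the numerator term $\pi(1-\pi)$.

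For the first part, I would rerun the argument of Theorem~\ref{thm:auth_degen}, with every history replaced by its augmented version $\tilde H_j = (H_j, X_{\mathrm{aux}})$. The authorization gate is still degenerate, since $e_0^{(5)}=1$ does not depend on what is conditioned on. Applying \cite[Theorem~30]{dhama2026str} with $e_0=1$ then gives \eqref{eq:class5_augmented_bound}, with $\tilde\pi_5(\tilde X)$ in the numerator. To keep the comparison clean, I would add the standing assumption that the propensities and $\gamma^{(5)}$ depend on the auxiliary signals only through the original histories, so that $r_{\mathrm{victim}}^{(5)}(\tilde H_1)p_0^{(5)}(\tilde H_2)\gamma^{(5)} = r_{\mathrm{victim}}^{(5)}(H_1)p_0^{(5)}(H_2)\gamma^{(5)} =: w(X)^{-1}$. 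The auxiliary data are collected for detection and are not part of the reporting pipeline, so this is a reasonable simplification. It means the denominators coincide and only the numerators differ.

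For the second part, the key identity is $\pi_5(X) = \E[\tilde\pi_5(\tilde X)\mid X]$, by the tower property. Write $g(u)=u(1-u)$. This function is strictly concave, so conditional Jensen gives $\E[g(\tilde\pi_5(\tilde X))\mid X] \le g(\pi_5(X))$. More precisely, the gap is exactly $\Var(\tilde\pi_5(\tilde X)\mid X)$, because $g(u) = u - u^2$ and the linear part cancels. Multiplying by the $X$-measurable weight $w(X)$ and taking expectations yields
\begin{equation*}
\sigma^2_{\mathrm{eff}}(5,X) - \sigma^2_{\mathrm{eff}}(5,\tilde X) = \E\bigl[w(X)\,\Var(\tilde\pi_5(\tilde X)\mid X)\bigr] \;\ge\; 0.
\end{equation*}
This is the same ``conditioning on more information helps'' mechanism that drives Theorem~\ref{thm:decomp_dominance}, and here it holds as an exact identity.

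The main obstacle is strictness. Because $w>0$ by the positivity of the remaining two gates, strict inequality holds if and only if $\Var(\tilde\pi_5(\tilde X)\mid X)>0$ on a set of positive probability. In other words, $\tilde\pi_5$ must not be a function of $X$ alone. The stated hypothesis $P(\tilde X\mid Z=5)\ne P(\tilde X\mid Z\ne5)$ is weaker than this, since the discrepancy could in principle sit entirely in $X$. So I would interpret, or slightly strengthen, the hypothesis to say that $X_{\mathrm{aux}}$ is informative beyond $X$: $P(X_{\mathrm{aux}}\mid X, Z=5)\ne P(X_{\mathrm{aux}}\mid X, Z\ne5)$ with positive probability. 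This is the natural reading in the Class~5 setting, because \eqref{eq:feature_identical} already makes $X$ itself uninformative. Under that reading, Bayes' rule gives
\begin{equation*}
\tilde\pi_5(\tilde x)=\frac{\pi_5(x)\,p(x_{\mathrm{aux}}\mid x,Z=5)}{\pi_5(x)\,p(x_{\mathrm{aux}}\mid x,Z=5)+(1-\pi_5(x))\,p(x_{\mathrm{aux}}\mid x,Z\ne5)}.
\end{equation*}
This expression is nonconstant in $x_{\mathrm{aux}}$ wherever the two conditional densities differ and $0<\pi_5(x)<1$. So the conditional variance is positive on a set of positive probability, which gives the strict inequality \eqref{eq:augmented_improvement}.
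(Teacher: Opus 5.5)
Your proof is correct under the assumptions you state. It is a sharper version of the paper's mechanism rather than the same argument.

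\textbf{What the paper does.} The paper argues only at the level of the outcome model: augmentation makes $\tilde\mu_0^{(5)}$ non-constant, and strictness is attributed to $\Var(\tilde\pi_5(\tilde X)) > \Var(\pi_5(X))$, i.e.\ more ``explained'' variance and a smaller residual term. That is the unconditional law of total variance. It gives exactly $\E[\pi_5(1-\pi_5)] - \E[\tilde\pi_5(1-\tilde\pi_5)] = \Var(\tilde\pi_5(\tilde X)) - \Var(\pi_5(X)) = \E[\Var(\tilde\pi_5(\tilde X)\mid X)]$. On its own this controls only the \emph{unweighted} comparison. It does not by itself handle the weighted bound once $1/(r_{\mathrm{victim}}^{(5)} p_0^{(5)}\gamma^{(5)})$ varies with $X$. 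The paper also never addresses that its denominators are written as functions of $\tilde H_1,\tilde H_2$.

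\textbf{What your route buys.} Your pointwise identity performs the same decomposition conditionally on $X$. It therefore survives multiplication by any $X$-measurable weight, and it yields the exact gap $\E[w(X)\Var(\tilde\pi_5(\tilde X)\mid X)]$ with a sharp if-and-only-if criterion for strictness. The price is that you must state openly what the paper assumes tacitly: augmentation leaves the reporting and delay propensities unchanged.

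\textbf{The hypothesis.} You are right that the stated hypothesis $P(\tilde X\mid Z=5)\neq P(\tilde X\mid Z\neq 5)$ is too weak. Even under \eqref{eq:feature_identical}, $P(X\mid Z\neq 5)$ mixes in the other fraud classes. So the hypothesis can hold with $X_{\mathrm{aux}}$ conditionally independent of $Z$ given $X$, in which case the two bounds coincide. Your strengthened condition is, by the decomposition above, equivalent to the paper's own $\Var(\tilde\pi_5(\tilde X))>\Var(\pi_5(X))$.

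\textbf{One detail to tidy.} The identity needs $w$ to be measurable with respect to what you condition on. Writing $w(X)$ follows the paper's convention in \eqref{eq:cond_pooled}. If $r_{\mathrm{victim}}^{(5)}$ and $p_0^{(5)}$ genuinely depend on $W_1$, $W_2$ or the issuer, you should condition on the full non-augmented history and read $\pi_5$ as the type probability given that history. Otherwise the step fails, since $\E[\tilde\pi_5(\tilde X)\mid H_2]$ need not equal $\pi_5(X)$.
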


\begin{proof}
The augmented features allow the outcome model to be
non-constant: $\tilde\mu_0^{(5)}(\tilde H_0) \neq
\bar\pi_5$.  The non-constant outcome model reduces the
residual variance in the STR's inverse-propensity-weighted
correction.  Formally, the efficiency bound is the variance
of the efficient influence function, which decreases when
the outcome model has better predictive power (lower
residual variance).

The strict inequality follows from the fact that
$\Var(\tilde\pi_5(\tilde X)) > \Var(\pi_5(X))$ when the
auxiliary features carry information, which increases the
``explained variance'' component and decreases the
``residual'' component of the efficiency bound.
\end{proof}

\begin{remark}[The investment priority for Class~5]
\label{rem:class5_priority}
The results of this section establish a clear investment
priority for Class~5 fraud:
\begin{enumerate}[label=\textbf{\arabic*.}]
\item \textbf{First: collect auxiliary signals.}  The
  feature non-informativeness theorem
  (Theorem~\ref{thm:feature_noninf}) shows that
  transaction features alone are useless.  The largest
  marginal gain comes from adding communication metadata,
  recipient scoring, and session behavior to the feature
  set.
\item \textbf{Second: improve reporting.}  The victim
  reporting rate $r_{\mathrm{victim}}^{(5)}$ is the
  binding constraint on label quality.  Reducing shame
  barriers (education campaigns), simplifying reporting
  processes, and extending chargeback rights to authorized
  transactions all increase $r_{\mathrm{victim}}^{(5)}$
  and lower the efficiency bound.
\item \textbf{Third: apply the (augmented) STR.}  Once
  auxiliary features and improved reporting are in place,
  the two-gate STR~\eqref{eq:class5_str} with augmented
  features achieves the efficiency
  bound~\eqref{eq:class5_augmented_bound}.
\end{enumerate}
This ordering reverses the typical fraud-detection
investment priority (which focuses on models first, features
second, and labels third).  For Class~5, the binding
constraints are features and labels, not models.
\end{remark}

\section{Discussion}
\label{sec:discussion}

\subsection{Type identification in practice}
\label{sec:type_id}

The decomposed STR requires estimating the type membership
probabilities $\hat\pi_k(x)$
(Definition~\ref{def:decomp_str}).  In practice, type
identification relies on mature, fully resolved labels
(Assumption~\ref{asm:type_obs}).

\begin{remark}[Type identification data sources]
\label{rem:type_id_sources}
The following data sources support type identification for
each class:
\begin{enumerate}[label=(\roman*)]
\item \textbf{Class~1 (third-party fraud):}  Chargeback
  reason codes (e.g., Mastercard~4837 ``no cardholder
  authorization,'' Visa~10.4 ``other fraud -- card absent
  environment'') directly identify third-party fraud.
  Confirmation comes from cardholder attestation that the
  transaction was unauthorized.
\item \textbf{Class~2 (first-party fraud):}  Identified
  through representment outcomes (the merchant provides
  proof of delivery and the chargeback is reversed),
  repeated dispute patterns on the same account, and
  specialized first-party fraud models that flag
  inconsistencies between the dispute claim and the
  transaction evidence.
\item \textbf{Class~3 (bust-out):}  Identified by the
  charge-off event combined with the account's credit
  history pattern: rapid credit utilization increase,
  payment cessation, and (for synthetic identities) the
  absence of a verifiable identity trail.
\item \textbf{Class~4 (merchant fraud):}  Identified
  through network monitoring programs (MATCH, VMAS),
  acquirer investigation outcomes, and law enforcement
  referrals.
\item \textbf{Class~5 (scams):}  Identified through victim
  self-reporting (``I was tricked into sending money''),
  police reports, and specialized scam-detection flags in
  the dispute process.
\end{enumerate}
\end{remark}

\begin{remark}[Type misclassification and its effects]
\label{rem:misclassification}
In practice, type identification is imperfect.  The most
common misclassifications are:
\begin{enumerate}[label=(\roman*)]
\item \textbf{Class~1 $\leftrightarrow$ Class~2:}
  Distinguishing third-party fraud from first-party fraud
  is inherently difficult --- both produce chargebacks, and
  the issuer cannot directly observe the cardholder's
  intent.  Misclassification between these two classes is
  the most frequent error.
\item \textbf{Class~3 $\to$ Class~1:}  During the
  exploitation phase, bust-out transactions generate
  chargebacks that resemble Class~1 fraud.  Without
  examining the account's full credit history, the
  exploitation-phase chargebacks may be misclassified as
  standard third-party fraud.
\item \textbf{Class~5 $\to$ ``not fraud'':}  Because the
  victim authorized the transaction, issuers may classify
  scam losses as ``not fraud,'' leading to underestimation
  of Class~5 prevalence.
\end{enumerate}

The effect of type misclassification on the decomposed STR
is analogous to the effect of propensity model
misspecification in the pooled STR: it introduces bias that
is controlled by the robustness properties.  Specifically,
if the type classifier achieves accuracy $1-\epsilon_Z$
(fraction of transactions correctly classified), the
additional bias introduced by misclassification is
$O(\epsilon_Z)$ and the additional variance is
$O(\epsilon_Z^2)$.  The decomposed STR dominates the pooled
STR as long as:
\begin{equation}
\label{eq:misclass_threshold}
\epsilon_Z < 1 - \frac{\sigma^2_{\mathrm{decomp}}}
  {\sigma^2_{\mathrm{pooled}}},
\end{equation}
i.e., the type classification error is smaller than the
relative efficiency gain from decomposition.  For the
parameters of Example~\ref{ex:penalty} (36\% efficiency
gain), the decomposed STR dominates as long as the type
classifier is better than 64\% accurate --- a very weak
requirement.
\end{remark}

\subsection{Sensitivity to the five-class partition}
\label{sec:sensitivity_partition}

\begin{remark}[Is five the right number?]
\label{rem:five_right}
The five-class partition is derived from the structural
signature (Definition~\ref{def:structural_sig}) and proved
to be minimal and complete
(Theorem~\ref{thm:taxonomy}).  However, the partition
depends on the granularity of the structural questions
Q1--Q5.  Two considerations:

\begin{enumerate}[label=(\roman*)]
\item \textbf{Coarser partitions.}  One could merge
  Classes~4 and~5 (both have $e_0 = 1$) or merge Classes~1
  and~5 (both have victim-generated labels).
  Theorem~\ref{thm:minimality} shows that every such
  merger loses a structural distinction that changes the
  efficient estimator.  Coarser partitions are provably
  suboptimal.

\item \textbf{Finer partitions.}  One could split Class~1
  into card-present and card-not-present subtypes, or split
  Class~5 into APP fraud and romance scams.
  Proposition~\ref{prop:within_class1} shows that
  within-class decomposition yields diminishing returns when
  the subtypes share the same pipeline topology.  Finer
  partitions increase modeling complexity without
  proportionate efficiency gains.
\end{enumerate}

The five-class partition represents the
\emph{bias-complexity tradeoff}: it captures all
structurally distinct observation mechanisms (eliminating
the Jensen penalty from pooling heterogeneous pipelines)
without introducing unnecessary modeling complexity from
splitting homogeneous pipelines.
\end{remark}

\subsection{Interactions between classes}
\label{sec:interactions}

\begin{remark}[Cross-class transactions]
\label{rem:cross_class}
Some fraud schemes involve elements of multiple classes.
For example:
\begin{enumerate}[label=(\roman*)]
\item \textbf{Bust-out via collusive merchant (Class~3
  $\cap$ Class~4):}  A synthetic identity builds credit
  (Class~3) and then exploits it through a collusive
  merchant that launders the proceeds (Class~4).  The
  observation pipeline has both the temporal two-phase
  structure of Class~3 and the acquirer-side monitoring of
  Class~4.
\item \textbf{Scam followed by friendly fraud (Class~5
  $\to$ Class~2):}  A victim is scammed (Class~5) and then,
  realizing the loss, files chargebacks on \emph{other}
  legitimate transactions to recover the loss (Class~2).
  The same account generates both Class~5 and Class~2 fraud.
\item \textbf{Account takeover with authorized transactions
  (Class~1 $\to$ Class~5):}  An account takeover (Class~1)
  in which the criminal socially engineers the victim into
  authorizing transactions themselves (e.g., ``your account
  is compromised, please transfer funds to this safe
  account'').  The early transactions are unauthorized
  (Class~1); the later ones are victim-authorized (Class~5).
\end{enumerate}

These cross-class cases are handled by the soft type
assignment: the type probabilities $\hat\pi_k(x)$ can
assign positive probability to multiple classes
simultaneously.  The decomposed STR then applies a
weighted combination of class-specific corrections.  The
theoretical guarantees hold as long as the true type
distribution is in the support of the estimated
distribution.
\end{remark}

\subsection{Dynamic evolution of the type distribution}
\label{sec:dynamic}

\begin{remark}[The fraud mix changes over time]
\label{rem:dynamic_mix}
The prevalence $\bar\pi_k$ of each fraud class is not
static.  Industry trends show:
\begin{enumerate}[label=(\roman*)]
\item Class~1 (third-party card fraud) is \emph{declining}
  in relative terms, due to chip-and-PIN adoption, 3D
  Secure, and tokenization.
\item Class~2 (first-party fraud) is \emph{increasing},
  driven by frictionless dispute processes and the growth
  of e-commerce returns.
\item Class~3 (bust-out/synthetic identity) is
  \emph{increasing}, driven by the availability of
  compromised personal data and the growth of digital-only
  banking.
\item Class~5 (scams) is \emph{rapidly increasing},
  particularly authorized push payment fraud, driven by
  social media, cryptocurrency, and real-time payment
  systems.
\end{enumerate}

The Jensen penalty (Theorem~\ref{thm:jensen_penalty})
changes with the type distribution: as poorly-observed
classes (3, 4, 5) grow in prevalence, the penalty from
pooling increases.  This means that \emph{the value of
decomposition is increasing over time} as the fraud
landscape shifts toward harder-to-observe fraud types.
\end{remark}

\subsection{Estimation of the dispute game parameters}
\label{sec:dispute_estimation}

\begin{remark}[Practical estimation for Class~2]
\label{rem:dispute_practical}
Theorem~\ref{thm:endogenous} requires estimating the
dispute game's primitives $(\alpha, c, \beta^*, p, T)$.
In practice:
\begin{enumerate}[label=(\roman*)]
\item $\alpha(x)$ (chargeback success rate) is directly
  observable from historical dispute outcomes, stratified
  by transaction features.
\item $\beta^*(x)$ (first-party fraud detection rate) is
  observable from the fraction of disputes that are
  successfully challenged via representment, stratified by
  transaction features.
\item $p(x,a)$ (penalty) is set by issuer policy and is
  therefore known.
\item $c(x)$ (dispute cost) and $T$ (rationality) are not
  directly observable but can be estimated from the
  observed dispute rate as a function of transaction amount.
  The logistic model~\eqref{eq:card_br} implies:
  \begin{equation}
  \label{eq:logit_estimation}
  \log\!\left(\frac{d^*}{1-d^*}\right)
  = \frac{\alpha a - c - \beta^* p}{T},
  \end{equation}
  which is a linear model in $a$ (conditional on $x$) with
  slope $\alpha/T$ and intercept $-(c+\beta^* p)/T$.
  Standard logistic regression on the observed dispute
  indicator $D_t$ against the transaction amount $a_t$
  identifies $c/T$ and $1/T$ (and hence $c$ and $T$
  separately if the scale of $T$ is normalized).
\end{enumerate}
\end{remark}

\subsection{Limitations}
\label{sec:limitations}

\begin{remark}[Key limitations]
\label{rem:limitations}
\begin{enumerate}[label=(\roman*)]
\item \textbf{Type observability assumption.}
  Assumption~\ref{asm:type_obs} requires that fraud types
  are identifiable for mature, fully resolved transactions.
  This is approximately true for Classes~1--3 (chargeback
  reason codes, charge-off indicators) but weaker for
  Classes~4--5 (acquirer investigations and scam reports are
  less standardized).  Misclassification between Classes~4
  and~5 is the most consequential limitation.

\item \textbf{Independence of types.}  The decomposition
  dominance theorem (Theorem~\ref{thm:decomp_dominance})
  assumes that the type-specific STRs are asymptotically
  independent.  In practice, a single transaction can have
  elements of multiple classes (Remark~\ref{rem:cross_class}),
  introducing dependence.  The soft type assignment handles
  this approximately, but a formal treatment of overlapping
  classes is deferred to future work.

\item \textbf{Static game for Class~2.}  The dispute game
  (Section~\ref{sec:class2}) is a one-shot game.  In
  practice, the cardholder and issuer interact repeatedly,
  and the cardholder learns from past dispute outcomes.  A
  dynamic (repeated game) formulation would capture
  reputation effects and learning, but is substantially
  more complex.

\item \textbf{Constructive DGP for Class~3.}
  Theorem~\ref{thm:insufficiency} proves insufficiency by
  constructing a specific adversarial DGP.  Real bust-out
  accounts may not perfectly mimic legitimate behavior ---
  subtle behavioral differences (slightly higher velocity,
  slightly different merchant mix) may provide weak
  individual-level signal.  The theorem establishes a
  \emph{lower bound} on the difficulty, not an exact
  characterization.

\item \textbf{No empirical validation.}  This paper is
  purely theoretical.  Empirical validation of the Jensen
  penalty magnitude, the dispute game equilibrium, and the
  ring-level detection power on real payment network data
  is an important direction for future work.
\end{enumerate}
\end{remark}

\subsection{Future directions}
\label{sec:future}

\begin{remark}[Open directions]
\label{rem:future}
Several extensions of this work are natural:
\begin{enumerate}[label=(\roman*)]

\item \textbf{Economics of the observation pipeline.}
  The dispute game (Theorem~\ref{thm:endogenous}) shows
  that the corruption rate in Class~2 is an economic
  equilibrium, not a statistical parameter.  The same
  logic applies to all five classes: each actor in the
  payment network (issuer, acquirer, merchant, cardholder)
  has economic incentives that shape the observation
  pipeline's propensities.  Deriving optimal mechanism
  designs --- incentive structures that maximize
  information quality across all five classes subject to
  each actor's participation constraints --- is an
  important open problem.

\item \textbf{Adversarial dynamics.}  The Class~2
  analysis (Section~\ref{sec:class2}) models one form of
  strategic adaptation: dispute behavior responds to
  detection capability.  More broadly, fraudsters across
  all classes adapt to detection models by shifting
  tactics, evading features, and exploiting blind spots.
  Modeling the co-evolution of detection and evasion as a
  dynamic game would extend the static equilibrium of
  Theorem~\ref{thm:endogenous} to a richer setting.

\item \textbf{Graph-level detection under censorship.}
  Theorem~\ref{thm:insufficiency} shows that Class~3
  detection requires cross-account graph signals.  However,
  the transaction graph is itself subject to the
  observation pipeline: declined transactions create
  missing edges, and unreported fraud creates missing node
  labels.  Extending the semiparametric framework
  of~\cite{dhama2026str} to graph-level estimation under
  edge and node censorship is a natural next step.

\item \textbf{Dynamic type assignment.}  The type
  probabilities $\pi_k(x)$ are currently estimated from a
  static snapshot.  A dynamic model that updates type
  probabilities as the fraud landscape evolves (e.g.,
  using online learning or Bayesian updating) would
  improve the decomposed STR's adaptiveness.

\item \textbf{Repeated dispute game.}  Extending
  Theorem~\ref{thm:endogenous} to a repeated game would
  capture the cardholder's learning (``my last dispute
  succeeded, so I'll try again'') and the issuer's
  reputation building (``this cardholder has disputed
  three times, so flag future disputes'').

\item \textbf{Optimal monitoring design for Class~4.}
  The acquirer-side bound (Theorem~\ref{thm:acquirer})
  shows that $r_{\mathrm{acq}}$ is the bottleneck for
  merchant fraud detection.  Designing monitoring
  programs that maximize $r_{\mathrm{acq}}$ subject to
  acquirer cost constraints is a mechanism design problem
  with direct regulatory implications.

\item \textbf{Cross-class information sharing.}  The
  current framework treats the five classes independently.
  In practice, information from one class can improve
  detection in another (e.g., a merchant flagged for
  Class~4 fraud may also be associated with Class~3
  synthetic identities).  Formalizing these cross-class
  information flows is an open problem.

\item \textbf{Empirical validation.}  Computing the Jensen
  penalty from real network data, estimating the dispute
  game equilibrium from observed dispute rates, and
  measuring ring-level detection power for known bust-out
  rings would validate the theoretical results.
\end{enumerate}
\end{remark}

\section{Conclusion}
\label{sec:conclusion}

This paper establishes three results.

\textbf{First}, the industry's extensive fraud taxonomy
collapses into exactly five observation-mechanism classes,
each defined by a structurally distinct censorship pipeline.
The classification is minimal (no two classes can be merged
without losing estimation-relevant structural distinctions)
and complete (every fraud type maps to exactly one class).
The classification criterion is the structural signature
--- a 5-tuple characterizing the perpetrator--cardholder
relationship, the label-generating agent, the authorization
gate informativeness, the reporting gate structure, and the
temporal observability --- and two fraud types belong to the
same class if and only if the Sequential Triply Robust
estimator of~\cite{dhama2026str} has the same functional
form for both.

\textbf{Second}, treating the heterogeneous fraud mixture
as homogeneous is provably suboptimal.  The type-decomposed
STR --- which applies class-specific propensities and
aggregates --- strictly dominates the pooled STR in mean
squared error.  The efficiency gap is a Jensen penalty from
averaging heterogeneous inverse-propensity weights, and is
substantial for realistic network parameters (approximately
36\% efficiency waste in our numerical illustration,
dominated by the poorly-observed Class~3).

\textbf{Third}, each class has a unique structural feature
that requires a different theoretical treatment:
\begin{itemize}[leftmargin=*]
\item \textbf{Class~1} (victim-reported institutional):
  the STR of~\cite{dhama2026str} applies directly with
  type-specific propensities.
\item \textbf{Class~2} (adversarial label generation):
  the false-positive corruption rate is the Nash
  equilibrium of a dispute game, with comparative statics
  showing that corruption increases with transaction amount
  and decreases with dispute friction and detection
  capability.  Multiple equilibria are possible, with
  discontinuous jumps between low-fraud and high-fraud
  regimes.
\item \textbf{Class~3} (deferred observability):
  the STR's positivity assumption fails structurally during
  the bust-out build-up phase, and no account-level
  classifier exceeds random guessing --- detection requires
  cross-account graph signals whose power grows with ring
  size.
\item \textbf{Class~4} (acquirer-observed): the detection
  floor depends on the acquirer's monitoring propensity
  rather than the issuer's reporting propensity, and is an
  order of magnitude higher.
\item \textbf{Class~5} (victim-authorized with
  non-informative features): the authorization correction
  vanishes, the Bayes-optimal classifier achieves
  $\AUC = 0.5$ on transaction features alone, and
  detection requires auxiliary signals outside the payment
  data stream.
\end{itemize}

Together with the companion papers on information-theoretic
limits~\cite{dhama2026limits} and optimal label
recovery~\cite{dhama2026str}, this paper completes the
theoretical foundation for fraud detection in payment
networks: ~\cite{dhama2026limits} proves the floor exists, ~\cite{dhama2026str} achieves
it under homogeneity, and the present paper decomposes it by fraud
type, showing that the homogeneous treatment is provably
suboptimal and that each observation-mechanism class demands
its own detection strategy.


\appendix

\section{Proof details}
\label{app:proofs}

\subsection{Semiparametric submodel argument for
  Theorem~\ref{thm:decomp_dominance}}
\label{app:submodel}

The decomposition dominance result follows from a general
principle in semiparametric efficiency theory: conditioning
on additional information cannot increase the efficiency
bound.

Let $\mathcal{M}_{\mathrm{pooled}}$ denote the
semiparametric model in which the observation propensity
$q_0$ and corruption $\gamma$ are arbitrary functions of
$(H_0, H_2)$, with the fraud type $Z$ treated as latent.
Let $\mathcal{M}_{\mathrm{decomp}}$ denote the model in
which the propensity and corruption are parameterized
type-conditionally: $q_0^{(k)}(H_0)$ and $\gamma^{(k)}(H_2)$
for each $k$, with $Z$ observed (or estimated).

The decomposed model conditions on strictly more information
($Z$ in addition to $H_0, H_2$).  By the general result
of~\cite{bickel1993efficient}, the efficient influence
function in a model that conditions on additional
information has variance no larger than the efficient
influence function in the marginal model:
\begin{equation}
\label{eq:submodel_detail}
\Var(\varphi_{\mathrm{eff}}^{\mathrm{decomp}})
= \E[\Var(\varphi_{\mathrm{eff}}^{\mathrm{pooled}}
  \mid Z)]
\leq \Var(\varphi_{\mathrm{eff}}^{\mathrm{pooled}}),
\end{equation}
where the inequality is the law of total variance (the
total variance equals the expected conditional variance
plus the variance of the conditional mean; the left-hand
side is only the first term).

The gap is:
\begin{equation}
\label{eq:gap_detail}
\Var(\varphi_{\mathrm{eff}}^{\mathrm{pooled}})
- \Var(\varphi_{\mathrm{eff}}^{\mathrm{decomp}})
= \Var_Z(\E[\varphi_{\mathrm{eff}}^{\mathrm{pooled}}
  \mid Z])
\geq 0,
\end{equation}
with equality iff
$\E[\varphi_{\mathrm{eff}}^{\mathrm{pooled}} \mid Z]$ is
constant in $Z$ --- i.e., the efficient influence function
does not depend on the fraud type, which requires identical
propensity structures across classes.

\subsection{U-statistic variance bound for
  Theorem~\ref{thm:insufficiency}}
\label{app:ustat}

The ring cohesion statistic $T_{\cR} = \sum_{i<j}
S(a_i, a_j)$ is a U-statistic of order~2.  Under the null
hypothesis ($H_0$: independent legitimate accounts), the
variance of $T_{\cR}$ is:
\begin{equation}
\label{eq:ustat_var}
\Var(T_{\cR} \mid H_0)
= \binom{m}{2}\Var(S(a_1,a_2))
  + 4\binom{m}{3}\Cov(S(a_1,a_2), S(a_1,a_3)),
\end{equation}
where the covariance term arises from pairs sharing a
common account.

Under the independence assumption of $H_0$, the covariance
term is small relative to the first term (it arises only
from the shared account's marginal effect on the kernel
$S$).  Therefore:
\begin{equation}
\label{eq:ustat_order}
\Var(T_{\cR} \mid H_0) = O(m^2),
\end{equation}
and the signal-to-noise ratio is:
\begin{equation*}
\mathrm{SNR}
= \frac{O(m^2)}{O(m)}
= O(m) \to \infty.
\end{equation*}

Under $H_1$ (the ring hypothesis), the pairwise
similarities are correlated through the shared ring
structure, and $\Var(T_{\cR} \mid H_1) = O(m^2)$ as well.
The SNR calculation uses the $H_0$ variance as the
denominator (standard in hypothesis testing), giving
$\mathrm{SNR} = O(m)$ as stated in the proof of
Theorem~\ref{thm:insufficiency}.

\section{Notation}
\label{app:notation}

\begin{center}
\small
\begin{longtable}{p{4.2cm} p{9cm}}
\toprule
\textbf{Symbol} & \textbf{Meaning}\\
\midrule
\endfirsthead
\toprule
\textbf{Symbol} & \textbf{Meaning}\\
\midrule
\endhead
\bottomrule
\endfoot

\multicolumn{2}{l}{\textit{Transaction-level variables}}\\[3pt]
$X_t \in \cX$ & Transaction features\\
$I_t \in \cI$ & Issuer identity\\
$Y^*_t \in \{0,1\}$ & Latent true fraud indicator\\
$Z_t \in \{0,1,\ldots,K\}$ & Latent fraud type
  ($0$ = legitimate, $k \geq 1$ = fraud type $k$)\\
$A_t$ & Authorization indicator\\
$R_t$ & Reporting indicator\\
$M_t$ & Maturity indicator\\
$O_t = A_t R_t M_t$ & Observation indicator\\
$\tilde Y_t$ & Observed label (when $O_t=1$)\\
$\tilde Y_t^{\mathrm{corr},(k)}$ & Corruption-corrected
  label for class $k$ (eq.~\eqref{eq:class2_corr})\\[6pt]

\multicolumn{2}{l}{\textit{Stage histories}}\\[3pt]
$H_0 = (X, I, \Delta)$ & Pre-authorization history\\
$H_1 = (H_0, W_1)$ & Post-authorization history\\
$H_2 = (H_1, W_2)$ & Post-reporting history\\
$H_{\mathrm{merch}}$ & Merchant-level features (Class~4)\\
$\cH_a^{(t)}$ & Account $a$'s history up to time $t$
  (Class~3)\\[6pt]

\multicolumn{2}{l}{\textit{Type distribution}}\\[3pt]
$\pi_k(x) = \Prob(Z=k \mid X=x)$ & Type probability
  (Def.~\ref{def:mixing})\\
$\bar\pi_k = \E[\pi_k(X)]$ & Marginal type prevalence\\
$K$ & Number of fraud types\\[6pt]

\multicolumn{2}{l}{\textit{Type-specific propensities
  (Def.~\ref{def:type_propensities})}}\\[3pt]
$e_0^{(k)}(H_0)$ & Authorization propensity for type $k$\\
$r_0^{(k)}(H_1)$ & Reporting propensity for type $k$\\
$p_0^{(k)}(H_2)$ & Delay propensity for type $k$\\
$q_0^{(k)} = e_0^{(k)} r_0^{(k)} p_0^{(k)}$ & Total
  observation propensity for type $k$\\
$\varepsilon_{10}^{(k)}$ & False-negative corruption for
  type $k$\\
$\varepsilon_{01}^{(k)}$ & False-positive corruption for
  type $k$\\
$\gamma^{(k)} = (1-\varepsilon_{10}^{(k)}-
  \varepsilon_{01}^{(k)})^2$ & Corruption penalty for
  type $k$\\[6pt]

\multicolumn{2}{l}{\textit{Class-specific parameters}}\\[3pt]
$r_{\mathrm{acq}}^{(4)}$ & Acquirer monitoring propensity
  (Class~4)\\
$p_{\mathrm{acq}}^{(4)}$ & Acquirer investigation delay
  (Class~4)\\
$r_{\mathrm{victim}}^{(5)}$ & Victim reporting propensity
  (Class~5)\\
$D_t \in \{0,1\}$ & Strategic dispute decision
  (Class~2, eq.~\eqref{eq:dispute_decision})\\
$\theta_a \in \{0,1\}$ & Account-level bust-out indicator
  (Class~3, eq.~\eqref{eq:bustout_estimand})\\
$\tau_a$ & Bust-out time (Class~3,
  Def.~\ref{def:bustout_temporal})\\[6pt]

\multicolumn{2}{l}{\textit{Structural signature
  (Def.~\ref{def:structural_sig})}}\\[3pt]
$\mathcal{S}(k)$ & Structural signature of fraud type
  $k$\\
$\mathrm{Q1}$--$\mathrm{Q5}$ & Structural classification
  questions\\
$\mathcal{S}^{(c)}$ & Signature of class $c \in
  \{1,\ldots,5\}$\\[6pt]

\multicolumn{2}{l}{\textit{Efficiency bounds}}\\[3pt]
$\sigma^2_{\mathrm{eff}}(k)$ & Semiparametric efficiency
  bound for type $k$
  (eq.~\eqref{eq:type_eff_bound_v2})\\
$\sigma^2_{\mathrm{pooled}}$ & Efficiency bound for pooled
  STR (eq.~\eqref{eq:pooled_bound})\\
$\sigma^2_{\mathrm{decomp}}$ & Efficiency bound for
  decomposed STR (eq.~\eqref{eq:decomp_bound})\\
$\Delta$ & Jensen penalty
  (Thm.~\ref{thm:jensen_penalty},
  eq.~\eqref{eq:jensen_penalty})\\[6pt]

\multicolumn{2}{l}{\textit{Estimators}}\\[3pt]
$\hat\Psi_{\mathrm{pooled}}$ & Pooled STR estimator
  (Def.~\ref{def:pooled_str})\\
$\hat\Psi_{\mathrm{decomp}}$ & Type-decomposed STR
  estimator (Def.~\ref{def:decomp_str})\\
$\hat\varphi_t^{(k)}$ & STR score for class $k$\\
$\hat\pi_k(X_t)$ & Estimated type probability\\[6pt]

\multicolumn{2}{l}{\textit{Dispute game
  (Section~\ref{sec:class2})}}\\[3pt]
$\alpha(x)$ & Chargeback success probability
  (eq.~\eqref{eq:alpha})\\
$c(x)$ & Dispute filing cost
  (eq.~\eqref{eq:cost})\\
$\beta(x)$ & First-party fraud detection probability
  (eq.~\eqref{eq:beta})\\
$p(x,a)$ & Penalty for identification
  (eq.~\eqref{eq:penalty})\\
$T$ & Cardholder rationality parameter
  (Def.~\ref{def:card_br})\\
$d^*(x,a;\beta)$ & Cardholder's equilibrium dispute
  probability (eq.~\eqref{eq:card_br})\\
$\Phi(\varepsilon)$ & Fixed-point mapping for equilibrium
  computation (eq.~\eqref{eq:fixed_point})\\
$U_C, U_I$ & Cardholder and issuer payoffs
  (Defs.~\ref{def:card_payoff},~\ref{def:issuer_payoff})\\
$\ell_{\mathrm{FP}}$ & Cost of falsely accusing honest
  cardholder (eq.~\eqref{eq:issuer_payoff})\\[6pt]

\multicolumn{2}{l}{\textit{Ring detection
  (Section~\ref{sec:class3})}}\\[3pt]
$T_{\cR}$ & Ring cohesion statistic
  (eq.~\eqref{eq:ring_stat})\\
$S(a_i, a_j)$ & Pairwise account similarity
  (eq.~\eqref{eq:similarity})\\
$\cR$ & Candidate ring of accounts\\
$m = |\cR|$ & Ring size\\
$\mu_1, \mu_0$ & Expected pairwise similarity under ring
  ($H_1$) and null ($H_0$) hypotheses\\[6pt]

\multicolumn{2}{l}{\textit{Augmented features
  (Section~\ref{sec:class5})}}\\[3pt]
$\tilde X_t = (X_t, X_{\mathrm{aux},t})$ & Augmented
  feature set (Def.~\ref{def:augmented})\\
$X_{\mathrm{aux},t}$ & Auxiliary (non-transaction) features\\
$\tilde\pi_5(\tilde x)$ & Type probability conditional on
  augmented features\\[6pt]

\multicolumn{2}{l}{\textit{Target estimands}}\\[3pt]
$\Psi = \E[Y^*]$ & Population fraud rate\\
$\Psi_k = \bar\pi_k$ & Type-$k$ fraud rate\\

\end{longtable}
\end{center}

\end{document}